\documentclass[11pt, dvipsnames, DIV=11]{scrartcl}
\usepackage[english]{babel}

\usepackage[utf8]{inputenc}
\usepackage[T1]{fontenc}
\usepackage{natbib}
\bibliographystyle{abbrvnat}
\usepackage{subcaption}
\usepackage{booktabs}
\usepackage{multirow}
\usepackage{url}
\usepackage{tikz}
\usetikzlibrary{arrows}
\usetikzlibrary{shadows,arrows.meta}
\usepackage{paralist}
\usepackage[stable]{footmisc}
\usepackage[edges]{forest}
\usepackage{fontawesome5}
\usepackage{ifthen}

\usepackage{bm}
\usepackage{amsmath}
\usepackage{amssymb}
\usepackage{amsthm}
\usepackage{amsfonts}
\usepackage{thmtools}		
\usepackage{mleftright}
\usepackage{stmaryrd}
\usepackage{nicefrac}
\usepackage{algorithm}
\usepackage{algorithmicx}
\usepackage[noend]{algpseudocode}

\let\originalleft\left
\let\originalright\right
\renewcommand{\left}{\mathopen{}\mathclose\bgroup\originalleft}
\renewcommand{\right}{\aftergroup\egroup\originalright}

\newtheorem{theorem}{Theorem}

\newtheorem{proposition}[theorem]{Proposition}

\newtheorem{lemma}[theorem]{Lemma}
\newtheorem{corollary}[theorem]{Corollary}
\theoremstyle{definition}

\usepackage{thm-restate}
\usepackage[mathic=true]{mathtools}
\usepackage{fixmath}
\usepackage{siunitx}

\usepackage{pifont}
\newcommand{\cmark}{\ding{51}}
\newcommand{\xmark}{\ding{55}}

\usepackage{enumitem}
\setlist[enumerate]{itemsep=0.2ex, topsep=0.5\topsep}
\setlist[description]{itemsep=0.2ex, topsep=0.5\topsep}
\setlist[itemize]{itemsep=0.2ex, topsep=0.5\topsep}

\makeatletter
\def\thmt@refnamewithcomma #1#2#3,#4,#5\@nil{%
\@xa\def\csname\thmt@envname #1utorefname\endcsname{#3}%
\ifcsname #2refname\endcsname
\csname #2refname\expandafter\endcsname\expandafter{\thmt@envname}{#3}{#4}%
\fi
}
\makeatother

\usepackage[pagebackref,
pdfa,
hidelinks,
pdftex, 
pdfdisplaydoctitle,
pdfpagelabels,
pdfauthor={Christopher Morris},
pdftitle={WL meet VC},
pdfsubject={},
pdfkeywords={},
pdfproducer={Latex with the hyperref package},
pdfcreator={pdflatex}
]{hyperref}

\usepackage[capitalise,noabbrev]{cleveref}   

\usepackage{microtype}
\usepackage{ellipsis}

\usepackage[scaled=0.86]{helvet}
\usepackage{lmodern}

\usepackage[auth-lg]{authblk}

\newcommand{\fg}[1]{{{\textcolor{magenta}{\textbf{[FG:} {#1}\textbf{]}}}}}


\newcommand{\mG}{\mathbf{G}}
\newcommand{\mH}{\mathbf{H}}
\newcommand{\mL}{\mathbf{L}}

\newcommand{\mW}{\mathbf{W}}
\newcommand{\ma}{\mathbf{a}}
\newcommand{\mb}{\mathbf{b}}
\newcommand{\mw}{\mathbf{w}}

\newcommand{\mLG}{\mathbf{L}}
\newcommand{\emLG}{L}
\newcommand{\inid}{d}
\newcommand{\cC}{\mathcal{C}}
\newcommand{\cD}{\mathcal{D}}
\newcommand{\cF}{\mathcal{F}}
\newcommand{\cG}{\mathcal{G}}

\newcommand{\cN}{\mathcal{N}}
\newcommand{\cO}{\mathcal{O}}

\newcommand{\cS}{\mathcal{S}}
\newcommand{\cT}{\mathcal{T}}

\newcommand{\cX}{{\mathcal X}}



\newcommand{\Nb}{\mathbb{N}}

\newcommand{\Rb}{\mathbb{R}}

\newcommand{\fnn}{\mathsf{fnn}}
\newcommand{\gnn}{\mathsf{gnn}}

\newcommand{\GNN}{\mathsf{GNN}}

\newcommand{\bool}{\mathbb{B}}
\newcommand{\vcdim}{\ensuremath{\mathsf{VC}\text{-}\mathsf{dim}}}

\newcommand{\kwl}{$k$\textrm{-}\textsf{WL}\xspace}
\newcommand{\wlone}{$1$\textrm{-}\textsf{WL}}

\newcommand{\fb}{\mathbf{f}}
\newcommand{\hb}{\mathbf{h}}

\newcommand{\Nbb}{\mathbb{N}}

\newcommand{\UPD}{\mathsf{UPD}}
\newcommand{\AGG}{\mathsf{AGG}}
\newcommand{\RO}{\mathsf{READOUT}}

\newcommand{\REL}{\mathsf{RELABEL}}

\newcommand{\relu}{\mathsf{reLU}}

\newcommand{\new}[1]{\emph{#1}}

\renewcommand{\vec}[1]{\mathbf{#1}}
\newcommand{\oms}{\{\!\!\{}
\newcommand{\cms}{\}\!\!\}}
\newcommand{\tup}[1]{{(#1)}}

\newcommand{\Bigmid}{\mathrel{\Big|}}
\newcommand{\Mat}[1]{\begin{pmatrix}#1\end{pmatrix}}

\newcommand{\nw}{d(2dL+L+1) + 1}

\recalctypearea

\setcounter{Maxaffil}{0}

\title{\huge\normalfont\bfseries WL meet VC}
\author[1]{Christopher Morris}
\author[2]{Floris Geerts}
\author[1]{Jan Tönshoff}
\author[1]{Martin Grohe}

\affil[1]{RWTH Aachen University}
\affil[2]{University of Antwerp}
\date{\vspace{-30pt}}

\begin{document}

\maketitle

\begin{abstract}
	Recently, many works studied the expressive power of graph neural networks (GNNs) by linking it to the $1$-dimensional Weisfeiler--Leman algorithm (\wlone). Here, the \wlone{} is a well-studied heuristic for the graph isomorphism problem, which iteratively colors or partitions a graph's vertex set. While this connection has led to significant advances in understanding and enhancing GNNs' expressive power, it does not provide insights into their generalization performance, i.e., their ability to make meaningful predictions beyond the training set. In this paper, we study GNNs' generalization ability through the lens of Vapnik--Chervonenkis (VC) dimension theory in two settings, focusing on graph-level predictions. First, when no upper bound on the graphs' order is known, we show that the bitlength of GNNs' weights tightly bounds their VC dimension. Further, we derive an upper bound for GNNs' VC dimension using the number of colors produced by the \wlone{}. Secondly, when an upper bound on the graphs' order is known, we show a tight connection between the number of graphs distinguishable by the \wlone{} and GNNs' VC dimension. Our empirical study confirms the validity of our theoretical findings.
\end{abstract}

\section{Introduction}
Graph-structured data are prevalent across application domains ranging from chemo- and bioinformatics~\citep{Barabasi2004,Jum+2021,Sto+2020} to image~\citep{Sim+2017} and social-network analysis~\citep{Eas+2010}, indicating the importance of machine learning methods for such data. Nowadays, there are numerous approaches for machine learning for graph-structured, most notably those based on \new{graph kernels}~\citep{Borg+2020,Kri+2019} or \new{graph neural networks} (GNNs)~\citep{Cha+2020,Gil+2017,Mor+2022}. Here, graph kernels~\citep{She+2011} based on the \new{$1$-dimensional Weisfeiler--Leman algorithm} (\wlone)~\citep{Wei+1968}, a well-studied heuristic for the graph isomorphism problem, and corresponding GNNs~\citep{Mor+2019,Xu+2018b}, have recently advanced the state-of-the-art in supervised vertex- and graph-level learning~\citep{Mor+2022}. Further, based on the \new{$k$-dimensional Weisfeiler--Leman algorithm} (\kwl), \wlone's more powerful generalization, several works generalized GNNs to \new{higher-order GNNs} ($k$-GNNs), resulting in provably more expressive architectures, e.g.,~\citet{Azi+2020,geerts2022,Mar+2019,Mor+2019, Morris2020b,Mor+2022,Mor+2022b}.

While devising provably expressive GNN-like architectures is a meaningful endeavor, it only partially addresses the challenges of machine learning with graphs. That is, expressiveness results reveal little about an architecture's ability to generalize to graphs outside the training set. Surprisingly, only a few notable contributions study GNNs' generalization behaviors, e.g.,~\citet{Gar+2020,Kri+2018,Lia+2021,Mas+2022,Sca+2018}. However, these approaches express GNN's generalization ability using only classical graph parameters, e.g., maximum degree, number of vertices, or edges, which cannot fully capture the complex structure of real-world graphs. Further, most approaches study generalization in the \new{non-uniform regime}, i.e., assuming that the GNNs operate on graphs of a pre-specified order. Further, they only investigate the case $k=1$, i.e., standard GNNs, ignoring more expressive generalizations; see the previous paragraph.

This paper investigates the influence of graph structure and the parameters' encoding lengths on GNNs' generalization by tightly connecting \wlone's expressivity and GNNs' Vapnik--Chervonenkis (VC) dimension. Specifically, we show that:
\begin{enumerate}
	\item In the non-uniform regime, we prove \emph{tight} bounds on GNNs' VC dimension. We show that  GNNs' VC dimension depends tightly on the number of equivalence classes computed by the \wlone{} over a set of graphs; see~\cref{thm:colorbound_up,prop:matchingvc}. Moreover, our results easily extend to the \kwl and many recent expressive GNN extensions.
	\item In the uniform regime, i.e., when graphs can have arbitrary order, we show that GNNs' VC dimension is \emph{lower} and \emph{upper bounded} by the largest bitlength of its weights; see~\cref{thm:bl_lowerr}.
	\item In both the uniform and non-uniform regimes, GNNs' VC dimension depends \textit{logarithmically  on the number of colors} computed by the \wlone{} and polynomially on the number of parameters; see~\cref{thm:bartlett}.
	\item Empirically, we show that our theoretical findings hold in practice.
\end{enumerate}
Overall, our results provide new insights into GNNs' generalization behavior and how graph structure and parameters influence it. Specifically, our results imply that a complex graph structure, captured by \wlone{}, results in worse generalization performance. The same holds for increasing the encoding length of the GNN's parameters. \emph{Importantly, our theory provides the first link between expressivity results and generalization ability.} Moreover, our results establish the \emph{first} lower bounds for GNNs' VC dimension. See~\cref{fig:overview} for a high-level overview of our results.

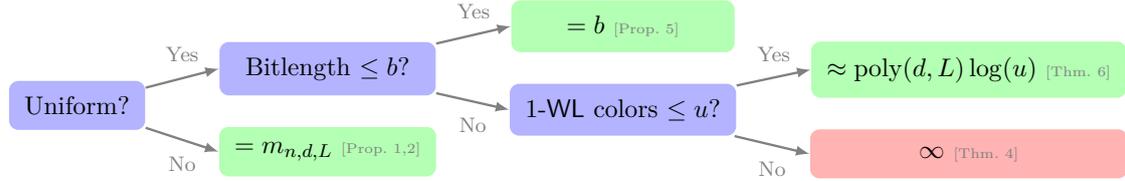
\begin{figure}
	\centering
	\pgfkeys{/pgf/inner sep=0.6em}
	\scalebox{0.88}{
		\begin{forest}
			for tree={
			grow=0,reversed,
			edge={thick, color=gray,line width=1pt},
			edge+={->,>=latex},
			child anchor=west,
			rounded corners,
			fill=green!30,
			l sep+=15pt,
			minimum height=21pt
			}
			[Uniform?, fill=blue!30
			[Bitlength $\leq b$?, fill=blue!30, minimum width=92pt, edge label={node[midway,above,font=\footnotesize]{Yes}},fill=blue!30
			[${= b}$ {\tiny \textcolor{gray}{[Prop.~\ref{thm:bl_lowerr}]}}, minimum width=95pt, edge label={node[midway,above,font=\footnotesize]{Yes}}]
			[\wlone{} colors $\leq u$?, fill=blue!30, edge label={node[midway,below,font=\footnotesize]{No}}
				[$\approx\text{poly}({d,L})\log(u)$ {\tiny \textcolor{gray}{[Thm.~\ref{thm:bartlett}]}}, edge label={node[midway,above,font=\footnotesize]{Yes}}]
				[$\infty$ {\tiny \textcolor{gray}{[Thm.~\ref{thm:unboundedbitss}]}}, minimum width=135pt, edge label={node[midway,below,font=\footnotesize]{No}}, fill=red!30]
			]
			]
			[${=  m_{n,d,L}}$ {\tiny \textcolor{gray}{[Prop.~\ref{thm:colorbound_up},\ref{prop:matchingvc}]}},minimum width=83pt, edge label={node[midway,below,font=\footnotesize]{No}}]
			]
			]
		\end{forest}}
	\caption{Overview of our results for bounded-width GNNs. Green and red boxes denote VC dimension bounds. Here, $m_{n,d,L}$ denotes the number of graphs of order at most $n$ with boolean $d$-dimensional features distinguishable by \wlone{} after $L$ iterations.}\label{fig:overview}
\end{figure}

\subsection{Related work}
In the following, we discuss relevant related work.

\paragraph{GNNs} Recently, GNNs~\citep{Gil+2017,Sca+2009} emerged as the most prominent graph representation learning architecture. Notable instances of this architecture include, e.g.,~\citet{Duv+2015,Ham+2017}, and~\citet{Vel+2018}, which can be subsumed under the message-passing framework introduced in~\citet{Gil+2017}. In parallel, approaches based on spectral information were introduced in, e.g.,~\citet{Bru+2014,Defferrard2016,Gam+2019,Kip+2017,Lev+2019}, and~\citet{Mon+2017}---all of which descend from early work in~\citet{bas+1997,Gol+1996,Kir+1995,Mer+2005,mic+2005,mic+2009,Sca+2009}, and~\citet{Spe+1997}.

\paragraph{Limits of GNNs and more expressive architectures}
Recently, connections between GNNs and Weisfeiler--Leman type algorithms have been shown~\citep{Bar+2020,Gee+2020a,Mor+2019,Xu+2018b}. Specifically,~\citet{Mor+2019} and~\citet{Xu+2018b} showed that the \wlone{} limits the expressive power of any possible GNN architecture in terms of distinguishing non-isomorphic graphs. In turn, these results have been generalized to the \kwl, see, e.g.,~\citet{Azi+2020,Gee+2020,Mar+2019,Mor+2019,Morris2020b,Mor+2022b}, and connected to permutation-equivariant functions approximation over graphs, see, e.g.,~\citet{Che+2019,Mae+2019,Azi+2020,geerts2022}. Further,~\citet{Aam+2022} devised an improved analysis using randomization. Recent works have extended the expressive power of GNNs, e.g., by encoding vertex identifiers~\citep{Mur+2019b, Vig+2020}, using random features~\citep{Abb+2020,Das+2020,Sat+2020}, equivariant graph polynomials~\citep{Pun+2023}, homomorphism and subgraph counts~\citep{Bar+2021,botsas2020improving,Hoa+2020}, spectral information~\citep{Bal+2021}, simplicial~\citep{Bod+2021} and cellular complexes~\citep{Bod+2021b}, persistent homology~\citep{Hor+2021}, random walks~\citep{Toe+2021,Mar+2022}, graph decompositions~\citep{Tal+2021}, relational~\citep{Bar+2022}, distance~\citep{li2020distance} and directional information~\citep{beaini2020directional}, subgraph information~\citep{Bev+2021,Cot+2021,Fen+2022,Fra+2022,Hua+2022,Mor+2022,Pap+2021,Pap+2022,Qia+2022,Thi+2021,Wij+2022,You+2020,Zha+2021,Zha+2022,Zha+2023b}, and biconnectivity~\citep{Zha+2023}. See~\citet{Mor+2022} for an in-depth survey on this topic. \citet{geerts2022} devised a general approach for bounding the expressive power of a large variety of GNNs utilizing the \wlone{} or \kwl{}. Recently,~\citet{Kim+2022} showed that transformer architectures~\citep{Mue+2023} can simulate the $2$-$\mathsf{WL}$ \citet{Gro+2023} showed tight connections between GNNs' expressivity and circuit complexity. Moreover,~\citet{Ros+2023} investigated the expressive power of different aggregation functions beyond sum aggregation.

\paragraph{GNN's generalization capabilities}
\citet{Sca+2018} used classical techniques from learning theory~\citep{Kar+1997} to show that GNNs' VC dimension~\citep{Vap+95} with piece-wise polynomial activation functions on a \emph{fixed} graph,
under various assumptions, is in $\cO(P^2n\log n)$, where $P$ is the number of parameters and $n$ is the order of the input graph. We note here that~\citet{Sca+2018} analyzed a different type of GNN not aligned with modern GNN architectures~\citep{Gil+2017}. \citet{Gar+2020} showed that the empirical Rademacher complexity, e.g.,~\citep{Moh+2012}, of a specific, simple GNN architecture, using sum aggregation, is bounded in the maximum degree, the number of layers, Lipschitz constants of activation functions, and parameter matrices' norms. We note here that their analysis assumes weight sharing across layers. \citet{Lia+2021} refined these results via a PAC-Bayesian approach, further refined in~\citet{Ju+2023}. \citet{Mas+2022} used random graphs models to show that GNNs' generalization ability depends on the (average) number of vertices in the resulting graphs. \citet{Ver+2019} studied the generalization abilities of $1$-layer GNNs in a transductive setting based on algorithmic stability. Similarly,~\citet{Ess+2021} used stochastic block models to study the transductive Rademacher complexity~\citep{Yan+2009,Tol+2016} of standard GNNs. Moreover,~\cite{Kri+2018} leveraged results from graph property testing~\citep{Gol2010} to study the sample complexity of learning to distinguish various graph properties, e.g., planarity or triangle freeness, using graph kernels~\citep{Borg+2020,Kri+2019}. We stress that all of the above approaches only consider classical graph parameters to bound the generalization abilities of GNNs. Finally,~\cite{Yeh+2021} showed negative results for GNNs' ability to generalize to larger graphs.

\medskip
\noindent See~\cref{app:related_work} for an overview of the Weisfeiler--Leman algorithm's theoretical properties. However, the generalization properties of GNNs and their connection to expressivity is understood to a lesser extent.  

\section{Preliminaries}\label{prelim_ext}
Let $\Nb \coloneqq \{ 1,2,3, \dots \}$. For $n \geq 1$, let $[n] \coloneqq \{ 1, \dotsc, n \} \subset \Nb$. We use $\{\!\!\{ \dots\}\!\!\}$ to denote multisets, i.e., the generalization of sets allowing for multiple instances for each of its elements.

\paragraph{Graphs} A \new{graph} $G$ is a pair $(V(G),E(G))$ with \emph{finite} sets of
\new{vertices} or \new{nodes} $V(G)$ and \new{edges} $E(G) \subseteq \{ \{u,v\} \subseteq V(G) \mid u \neq v \}$. If not otherwise stated, we set $n \coloneqq |V(G)|$, and the graph is of \new{order} $n$. We also call the graph $G$ an $n$-order graph. For ease of notation, we denote the edge $\{u,v\}$ in $E(G)$ by $(u,v)$ or $(v,u)$. In the case of \new{directed graphs}, the set $E(G) \subseteq \{ (u,v) \in V(G) \times V(G) \mid u \neq v \}$ and a \new{directed acyclic graph} (DAG) is a directed graph with no directed cycles. A \new{(vertex-)labeled graph} $G$ is a triple $(V(G),E(G),\ell)$ with a (vertex-)label function $\ell \colon V(G) \to \Nb$. Then $\ell(v)$ is a \new{label} of $v$, for $v$ in $V(G)$. An \new{attributed graph} $G$  is a triple $(V(G),E(G),a)$ with a graph $(V(G),E(G))$ and (vertex-)attribute function $a \colon V(G) \to \Rb^{1 \times d}$, for some $d > 0$. That is, contrary to labeled graphs, we allow for vertex annotations from an uncountable set. Then $a(v)$ is an \new{attribute} or \new{feature} of $v$, for $v$ in $V(G)$. Equivalently, we define an $n$-order attributed graph $G \coloneqq (V(G),E(G),a)$ as a pair $\mG=(G,\mLG)$, where $G = (V(G),E(G))$ and $\mLG$ in $\Rb^{n\times d}$ is a \new{vertex feature matrix}. Here, we identify $V(G)$ with $[n]$. For a matrix $\mLG$ in $\Rb^{n\times d}$ and $v$ in $[n]$, we denote by $\mLG_{v \cdot}$ in $\Rb^{1\times d}$ the $v$th row of $\mLG$ such that $\mL_{v \cdot} \coloneqq a(v)$. We also write $\Rb^d$ for $\Rb^{1\times d}$.

The \new{neighborhood} of $v$ in $V(G)$ is denoted by $N(v) \coloneqq  \{ u \in V(G) \mid (v, u) \in E(G) \}$ and the \new{degree} of a vertex $v$ is  $|N(v)|$. In case of directed graphs, $N^+(u) \coloneqq  \{ v \in V(G) \mid (v, u) \in E(G) \}$ and $N^-(u) \coloneqq  \{ v \in V(G) \mid (u, v) \in E(G) \}$. The \new{in-degree} and \new{out-degree} of a vertex $v$ are $|N^+(v)|$ and $|N^-(v)|$, respectively. Two graphs $G$ and $H$ are \new{isomorphic} and we write $G \simeq H$ if there exists a bijection $\varphi \colon V(G) \to V(H)$ preserving the adjacency relation, i.e., $(u,v)$ is in $E(G)$ if and only if
$(\varphi(u),\varphi(v))$ is in $E(H)$. Then $\varphi$ is an \new{isomorphism} between
$G$ and $H$. In the case of labeled graphs, we additionally require that
$l(v) = l(\varphi(v))$ for $v$ in $V(G)$, and similarly for attributed graphs. 

\subsection{The Weisfeiler--Leman algorithm}
We here describe the \wlone{} and refer to~\cref{kwl} for the \kwl. The \wlone{} or color refinement is a well-studied heuristic for the graph isomorphism problem, originally proposed by~\citet{Wei+1968}.\footnote{Strictly speaking, the \wlone{} and color refinement are two different algorithms. That is, the \wlone{} considers neighbors and non-neighbors to update the coloring, resulting in a slightly higher expressive power when distinguishing vertices in a given graph; see~\cite {Gro+2021} for details. For brevity, we consider both algorithms to be equivalent.}
Intuitively, the algorithm determines if two graphs are non-isomorphic by iteratively coloring or labeling vertices. Given an initial coloring or labeling of the vertices of both graphs, e.g., their degree or application-specific information, in
each iteration, two vertices with the same label get different labels if the number of identically labeled neighbors is unequal. These labels induce a vertex partition, and the algorithm terminates when after some iteration, the algorithm does not refine the current partition, i.e., when a \new{stable coloring} or \new{stable partition} is obtained. Then, if the number of vertices annotated with a specific label is different in both graphs, we can conclude that the two graphs are not isomorphic. It is easy to see that the algorithm cannot distinguish all non-isomorphic graphs~\citep{Cai+1992}. Nonetheless, it is a powerful heuristic that can successfully test isomorphism for a broad class of graphs~\citep{Bab+1979}.

Formally, let $G = (V(G),E(G),\ell)$ be a labeled graph. In each iteration, $t > 0$, the \wlone{} computes a vertex coloring $C^1_t \colon V(G) \to \Nb$, depending on the coloring of the neighbors. That is, in iteration $t>0$, we set
\begin{equation*}
	C^1_t(v) \coloneqq \REL\Big(\!\big(C^1_{t-1}(v),\oms C^1_{t-1}(u) \mid u \in N(v)  \cms \big)\! \Big),
\end{equation*}
for all vertices $v$ in $V(G)$,
where $\REL$ injectively maps the above pair to a unique natural number, which has not been used in previous iterations. In iteration $0$, the coloring $C^1_{0}\coloneqq \ell$. To test if two graphs $G$ and $H$ are non-isomorphic, we run the above algorithm in ``parallel'' on both graphs. If the two graphs have a different number of vertices colored $c$ in $\Nb$ at some iteration, the \wlone{} \new{distinguishes} the graphs as non-isomorphic. Moreover, if the number of colors between two iterations, $t$ and $(t+1)$, does not change, i.e., the cardinalities of the images of $C^1_{t}$ and $C^1_{i+t}$ are equal, or, equivalently,
\begin{equation*}
	C^1_{t}(v) = C^1_{t}(w) \iff C^1_{t+1}(v) = C^1_{t+1}(w),
\end{equation*}
for all vertices $v$ and $w$ in $V(G)$, the algorithm terminates. For such $t$, we define the \new{stable coloring}
$C^1_{\infty}(v) = C^1_t(v)$, for $v$ in $V(G)$. The stable coloring is reached after at most $\max \{ |V(G)|,|V(H)| \}$ iterations~\citep{Gro2017}. We define the \emph{color complexity}
of a graph $G$ as the number of colors computed by the \wlone{} after $|V(G)|$ iterations on $G$.

Due to the shortcomings of the $\wlone$ or color refinement in distinguishing non-isomorphic
graphs, several researchers, e.g.,~\citet{Bab1979,Cai+1992},
devised a more powerful generalization of the former, today known
as the $k$-dimensional Weisfeiler-Leman algorithm, operating on $k$-tuples of vertices rather than single vertices; see~\cref{kwl} for details.

\subsection{Graph Neural Networks}
\label{sec:gnn}
Intuitively, GNNs learn a vectorial representation, i.e., a $d$-dimensional real-valued vector, representing each vertex in a graph by aggregating information from neighboring vertices. Formally, let $G = (V(G),E(G),\ell)$ be a labeled graph with initial vertex features $\hb_{v}^\tup{0}$ in $\Rb^{d}$ that are \emph{consistent} with $\ell$. That is, each vertex $v$ is annotated with a feature  $\hb_{v}^\tup{0}$ in $\Rb^{d}$ such that $\hb_{v}^\tup{0} = \hb_{u}^\tup{0}$ if and only $\ell(v) = \ell(u)$, e.g., a one-hot encoding of the labels $\ell(u)$ and $\ell(v)$. Alternatively,  $\hb_{v}^\tup{0}$ can be an attribute or a feature of the vertex $v$, e.g., physical measurements in the case of chemical molecules. A GNN architecture consists of a stack of neural network layers, i.e., a composition of permutation-equivariant parameterized functions. Similarly to the \wlone, each layer aggregates local neighborhood information, i.e., the neighbors' features around each vertex, and then passes this aggregated information on to the next layer.

Following, \citet{Gil+2017} and \citet{Sca+2009}, in each layer, $t > 0$,  we compute vertex features
\begin{equation}\label{def:gnn}
	\hb_{v}^\tup{t} \coloneqq
	\UPD^\tup{t}\Bigl(\hb_{v}^\tup{t-1},\AGG^\tup{t} \bigl(\oms \hb_{u}^\tup{t-1}
	\mid u\in N(v) \cms \bigr)\Bigr) \in \Rb^{d},
\end{equation}
where  $\UPD^\tup{t}$ and $\AGG^\tup{t}$ may be differentiable parameterized functions, e.g., neural networks.\footnote{Strictly speaking, \citet{Gil+2017} consider a slightly more general setting in which vertex features are computed by $\hb_{v}^\tup{t} \coloneqq
		\UPD^\tup{t}\Bigl(\hb_{v}^\tup{t-1},\AGG^\tup{t} \bigl(\oms (\hb_v^\tup{t-1},\hb_{u}^\tup{t-1},\ell(v,u))
		\mid u\in N(v) \cms \bigr)\Bigr)$, where $\ell(v,u)$ denotes the edge label of the edge $(v,u)$.}
In the case of graph-level tasks, e.g., graph classification, one uses
\begin{equation}\label{def:readout}
	\hb_G \coloneqq \RO\bigl( \oms \hb_{v}^{\tup{L}}\mid v\in V(G) \cms \bigr) \in \Rb,
\end{equation}
to compute a single vectorial representation based on learned vertex features after iteration $L$.\footnote{For simplicity, we assume GNNs to return scalars on graphs. This makes the definition of VC dimension more concise.}
Again, $\RO$  may be a differentiable parameterized function. To adapt the parameters of the above three functions, they are optimized end-to-end, usually through a variant of stochastic gradient descent, e.g.,~\citet{Kin+2015}, together with the parameters of a neural network used for classification or regression. See~\cref{kgnn} for a definition of (higher-order) $k$-GNNs.

\paragraph{Notation} In the subsequent sections, we use the following notation. We denote the class of all (labeled) graphs by $\cG$, the class of all graphs with $d$-dimensional, real-valued vertex features by $\cG_{d}$, the class of all graphs with $d$-dimensional boolean vertex features by $\cG_{d}^\bool$, the class of all graphs with an order of at most $n$ and $d$-dimensional vertex features by $\cG_{d,n}$, and the class of all graphs with $d$-dimensional vertex features
and of color complexity at most $u$ by $\cG_{d,\leq u}$.

Further, we consider the following classes of GNNs.  We denote the class of all GNNs consisting of $L$ layers with $(L+1)$th layer readout layer by  $\GNN(L)$, the subset of $\GNN(L)$ but whose aggregate, update and readout
functions have a width at most $d$ by $\GNN(d,L)$, and the subset of $\GNN(L)$ but whose aggregation function is a summation and update and readout functions are single layer perceptrons of width at most $d$ by $\GNN_{\mathsf{slp}}(d,L)$. More generally, we consider the class $\GNN_{\mathsf{mlp}}(d,L)$ of GNNs using summation  for aggregation and such that update and readout functions are \new{multilayer perceptrons} (MLPs), all of width of at most $d$. We refer to elements in $\GNN_{\mathsf{mlp}}(d,L)$ as \textit{simple GNNs}. See~\cref{sec:simpleGNNs} for details. We stress that simple GNNs are already expressive enough to be equivalent to the \wlone{} in distinguishing non-isomorphic graphs.

\paragraph{VC dimension of GNNs}
For a class $\cC$ of GNNs and $\cX$ of graphs, $\vcdim_\cX(\cC)$ is the maximal number $m$ of graphs $\mG_1,\ldots, \mG_m$ in $\cX$ that
can be shattered by $\cC$. Here, $\mG_1,\ldots,\mG_m$ are \new{shattered} if for any $\pmb\tau $ in $\{0,1\}^m$ there exists a GNN $\gnn$ in $\cC$
such that for all $i$ in $[m]$:
\begin{equation}\label{threshold}
	\gnn(\mG_i)=
	\begin{cases}
		\geq 2/3 & \text{if $\tau_i=1$, and} \\
		\leq 1/3 & \text{if $\tau_i=0$.}
	\end{cases}
\end{equation}
The above definition can straightforwardly be generalized to $k$-GNNs. Bounding the VC dimension directly implies an upper bound on the generalization error; see~\cref{vc_error} and~\cite{Vap+95,Moh+2012} for details.

\paragraph{Bitlength of GNNs}
Below we study the dependence of GNNs' VC dimension on the \new{bitlength} of its weights. Assume an $L$-layered GNN with a set of parameters $\Theta$, then the GNN's bitlength is the maximum number of bits needed to encode each weight in $\Theta$ and the parameters specifying the activation functions. We define the bitlength of a class of GNNs as the maximum bitlength across all GNNs in the class.

\section{WL meet VC \faHandshake[regular]}
We first consider the non-uniform regime, i.e., we assume an upper bound on the graphs' order. Given the connection between GNNs and the \wlone{}~\citep{Mor+2019,Xu+2018}, GNNs' ability to shatter a set of graphs can easily be related to distinguishability by the \wlone{}. Indeed, let $\cS$ be
a collection of graphs that GNNs can shatter. Hence, for each pair of graphs in $\cS$, we have a GNN that distinguishes them. By the results of~\cite{Mor+2019,Xu+2018b}, this implies that the graphs in $\cS$ are pairwise \wlone{} distinguishable. In other words, when considering the VC dimension of GNNs on a class $\cS$ of graphs with a bounded number $m$ of \wlone{} distinguishable graphs, then $m$ is also an upper bound on the VC dimension of GNNs on graphs in $\cS$. For example, let us first consider the VC dimension of GNNs on the class $\cG_{d,n}^\bool$ consisting of graphs of an order of at most $n$ with $d$-dimensional boolean features. Let $m_{n,d,L}$ be the maximal number of graphs in $\cG_{d,n}^\bool$ distinguishable by \wlone{} after $L$ iterations. Then, the same argument as above implies that $m_{n,d,L}$ is also the maximal number of graphs in $\cG_{d,n}^\bool$  that can be shattered by $L$-layer GNNs, as is stated next.
\begin{proposition}\label{thm:colorbound_up}
	For all $n$, $d$ and $L$, it holds that
	\begin{equation*}
		\vcdim_{\cG_{d,n}^{\bool}}\!\!\bigl(\GNN(L)\bigr) \leq m_{n,d,L}.
	\end{equation*}
\end{proposition}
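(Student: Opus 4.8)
The plan is to make the intuitive argument stated just before the proposition precise by invoking the simulation relationship between $L$-layer GNNs and $L$ iterations of the \wlone{} established in~\citet{Mor+2019,Xu+2018b}. That result guarantees that whenever some $\gnn \in \GNN(L)$ produces different outputs on two graphs $\mG$ and $\mH$, the \wlone{} colorings computed after $L$ iterations already differ on $\mG$ and $\mH$; equivalently, \wlone{}-indistinguishability after $L$ rounds is preserved by every $L$-layer GNN. I would isolate this as the single external ingredient and then reason purely combinatorially.

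First I would fix a set $\cS = \{\mG_1, \dots, \mG_m\}$ of graphs in $\cG_{d,n}^{\bool}$ that is shattered by $\GNN(L)$, where $m = \vcdim_{\cG_{d,n}^{\bool}}(\GNN(L))$; it suffices to show $m \leq m_{n,d,L}$. The next step is to observe that shattering forces \emph{pairwise} separation. For any two indices $i \neq j$, choose the labeling $\pmb\tau \in \{0,1\}^m$ with $\tau_i = 1$ and $\tau_j = 0$. By shattering there is a GNN $\gnn \in \GNN(L)$ realizing $\pmb\tau$ in the sense of~\eqref{threshold}, so $\gnn(\mG_i) \geq 2/3$ while $\gnn(\mG_j) \leq 1/3$. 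Since these two output ranges are disjoint, $\gnn$ produces different values on $\mG_i$ and $\mG_j$, i.e., it distinguishes them.

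Then I would apply the simulation result: because the $L$-layer GNN $\gnn$ separates $\mG_i$ and $\mG_j$, the \wlone{} distinguishes them after $L$ iterations. As $i \neq j$ were arbitrary, all graphs in $\cS$ lie in pairwise distinct \wlone{}-equivalence classes, where indistinguishability after $L$ iterations is an equivalence relation since it amounts to equality of the color multiset $\oms C^1_L(v) \mid v \in V(G) \cms$. Hence $\cS$ contains at most one representative per class, and since $m_{n,d,L}$ is by definition the largest number of graphs in $\cG_{d,n}^{\bool}$ that \wlone{} can pairwise distinguish after $L$ iterations, we conclude $m = |\cS| \leq m_{n,d,L}$.

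I expect the argument to present no genuine obstacle; it is essentially a counting argument wrapped around a known expressivity bound. The only points I would watch are the logical direction of the GNN/\wlone{} correspondence — we need the \emph{upper bound} that $L$-layer GNNs are no more powerful than $L$ rounds of \wlone{} in separating graphs, not the converse — and the role of the threshold gap between $2/3$ and $1/3$, which is precisely what upgrades ``shattered'' into honest pairwise distinguishability and thereby licenses the color-class counting.
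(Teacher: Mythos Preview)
Your proposal is correct and follows essentially the same approach as the paper's proof: both argue that a shattered set must consist of pairwise \wlone{}-distinguishable graphs (via the~\citet{Mor+2019,Xu+2018b} upper bound on GNN expressivity), and hence has size at most $m_{n,d,L}$. The paper phrases this as the contrapositive---any set of $m_{n,d,L}+1$ graphs contains a \wlone{}-indistinguishable pair and therefore cannot be shattered---but the content is identical and your version is simply more explicit about the threshold gap and the pairwise separation step.
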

This upper bound holds regardless of the choice of aggregation, update, and readout functions used in the GNNs. We next show a matching lower bound for the VC dimension of GNNs on graphs in $\cG_{d,n}^{\bool}$.
In fact, the lower bound already holds for simple GNNs of width $\cO(nm_{n,d,L})$.
\begin{proposition}\label{prop:matchingvc}
	For all $n$, $d$, and $L$,  all $m_{n,d,L}$ \wlone-distinguishable graphs of order at most $n$ with $d$-dimensional boolean features can be shattered by sufficiently
	wide $L$-layer GNNs. Hence,
	\begin{equation*}
		\vcdim_{\cG_{d,n}^\bool}\!\!\bigl(\GNN(L)\bigr)=m_{n,d,L}.
	\end{equation*}
\end{proposition}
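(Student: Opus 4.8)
The plan is to pair the upper bound from \cref{thm:colorbound_up} with a matching lower bound, so it suffices to exhibit a family of $m \coloneqq m_{n,d,L}$ graphs in $\cG_{d,n}^\bool$ that $L$-layer GNNs shatter. By the very definition of $m_{n,d,L}$, there is a family $\mG_1, \dots, \mG_m$ of graphs of order at most $n$ with $d$-dimensional boolean features that are pairwise distinguished by \wlone{} after $L$ iterations; this is the family I would shatter. Fixing an arbitrary target $\pmb\tau \in \{0,1\}^m$, I must construct a GNN $\gnn$ of width $\cO(nm)$ satisfying \eqref{threshold}, and I would build it in three stages, where only the last depends on $\pmb\tau$.

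First, I would use that simple GNNs simulate \wlone{} (see~\cref{sec:simpleGNNs}) to build $L$ message-passing layers that compute, at every vertex $v$, a one-hot encoding of its \wlone{} color $C^1_L(v)$. Because all graphs have order at most $n$ and boolean features, only finitely many colors arise across the whole family at each iteration, namely at most the total number of vertices $\sum_i |V(\mG_i)| \leq nm$, so each layer can be realized with width $\cO(nm)$: the injective relabeling $\REL$ acting on the pair (own color, summed neighbor-color histogram) is a map on a finite set of pairwise distinct integer vectors, which a bounded-width MLP with summation aggregation reproduces exactly, the counts staying bounded by $n$. Second, the summation readout over the one-hot vertex encodings yields the color-count histogram $\mathbf{c}_i \in \Rb^{N}$ of $\mG_i$, where $N \leq nm$ is the number of colors appearing after $L$ iterations; crucially, since \wlone{} distinguishes the graphs pairwise, the vectors $\mathbf{c}_1, \dots, \mathbf{c}_m$ are pairwise distinct.

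Third, I would realize the dichotomy $\pmb\tau$ by a readout MLP applied to the histogram. A generic linear functional $\mathbf{w}$ collapses the distinct $\mathbf{c}_i$ to distinct reals $s_i \coloneqq \langle \mathbf{w}, \mathbf{c}_i \rangle$, and a one-hidden-layer network of width $\cO(m)$ then maps these $m$ distinct scalars to outputs in $\{2/3, 1/3\}$ according to $\pmb\tau$, using essentially one threshold or bump unit per point. Since the first two stages do not depend on $\pmb\tau$, all $2^m$ required GNNs share the same message-passing layers and differ only in this final readout, and the total width is $\cO(nm)$, matching the claim and hence forcing equality with \cref{thm:colorbound_up}.

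The main obstacle I expect lies in the first stage: implementing the \wlone{} refinement exactly with message-passing layers of width $\cO(nm)$ \emph{uniformly} over the family. One must argue carefully that at each iteration the relevant inputs (the previous one-hot color together with the summed neighbor histogram) take only finitely many, pairwise-separable integer values, and that a bounded-width MLP can reproduce the injective action of $\REL$ on them while emitting fresh one-hot codes. The boolean features and the order bound $n$ are exactly what keep the counts bounded and make the simulation exact rather than merely approximate. By comparison, the remaining stages—distinctness of the histograms and the scalar interpolation by an MLP—are standard once distinctness is secured.
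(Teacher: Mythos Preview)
Your proposal is correct and follows essentially the same approach as the paper: simulate \wlone{} with $L$ message-passing layers of width $\cO(nm_{n,d,L})$ (the paper obtains this width by applying \citet[Theorem~2]{Mor+2019} to the disjoint union of the $m_{n,d,L}$ graphs), read out the resulting color histograms, and then realize any dichotomy in the readout MLP. The only cosmetic difference is in the last stage: the paper encodes each histogram as a single integer via a base-$K$ map, compares against all $m_{n,d,L}$ targets with $\mathsf{lsig}$-gates to produce a one-hot vector, and dots with an indicator for $\cT$, whereas you project to distinct scalars via a generic linear functional and then use bump units---both are standard and yield the same width bound.
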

The lower bound follows from the fact that GNNs are as powerful as the \wlone{} \citep{Mor+2019,Xu+2018}.
Indeed, \citet{Mor+2019} showed that $L$ iterations of the \wlone{} on graphs in $\cG_{d,n}^{\bool}$ can be simulated by a simple $L$-layered GNN of width $\cO(n)$. To shatter all $m_{n,d,L}$ graphs, we first simulate
\wlone{} on all $m_{n,d, L}$ graphs combined using a simple $L$-layer GNN of width in $\cO(nm_{n,d, L})$. We then define a readout layer whose weights can be used to shatter the input graphs based on the computed \wlone{} vertex colors in the graphs. We note that the above two results can be straightforwardly generalized to $k$-GNNs, i.e., their VC dimension is tightly connected to the expressive power of the \kwl, and other recent extensions of GNNs; see~\cref{klift_app}.

We now consider the uniform regime, i.e., we assume no upper bound on the graphs' order. Since the number $m_{n,d,L}$ of \wlone{} distinguishable graphs increases for growing $n$,~\cref{prop:matchingvc} implies that the VC dimension of $L$-layered GNNs on the class $\cG_d^\bool$ of all graphs with $d$-dimensional boolean features but of arbitrary order is unbounded.
\begin{corollary}\label{infty_simple}
	For all $d$ and $L\geq 1$, it holds that $\vcdim_{\cG_d^\bool}(\GNN(L))=\infty$.
\end{corollary}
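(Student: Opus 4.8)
The plan is to obtain the claim as an almost immediate consequence of \cref{prop:matchingvc} combined with the monotonicity of the VC dimension in its underlying graph class. The whole argument reduces to showing that the quantity $m_{n,d,L}$ is unbounded in $n$.

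First I would record the monotonicity step. For every $n$ we have $\cG_{d,n}^\bool \subseteq \cG_d^\bool$, and since the hypothesis class $\GNN(L)$ is the same in both cases, any set of graphs shattered by $\GNN(L)$ inside $\cG_{d,n}^\bool$ is also a set inside $\cG_d^\bool$ shattered by the same GNNs. Hence
\[
	\vcdim_{\cG_d^\bool}\bigl(\GNN(L)\bigr) \;\geq\; \vcdim_{\cG_{d,n}^\bool}\bigl(\GNN(L)\bigr) \;=\; m_{n,d,L},
\]
where the equality is \cref{prop:matchingvc}. As this holds for every $n$, it suffices to prove that $m_{n,d,L}\to\infty$.

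Second I would exhibit an explicit infinite family of pairwise \wlone-distinguishable graphs. The key observation is that the \wlone{} color histogram of a graph $G$ has entries summing to $|V(G)|$, so two graphs of distinct order can never share the same histogram and are therefore distinguished after any number of iterations. Concretely, fix a constant boolean feature in $\{0,1\}^d$ and, for $1 \leq k \leq n$, let $G_k$ be the edgeless $k$-order graph with every vertex carrying this feature (cycles $C_k$ would serve equally well). The graphs $G_1, \dots, G_n$ lie in $\cG_{d,n}^\bool$ and are pairwise \wlone-distinguishable, so $m_{n,d,L} \geq n$.

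Combining the two steps gives $\vcdim_{\cG_d^\bool}(\GNN(L)) \geq m_{n,d,L} \geq n$ for all $n$, so the VC dimension exceeds every finite bound and is thus infinite. There is no genuinely hard step here; the only point requiring a little care is verifying that the chosen family is distinguished at or before iteration $L$. For the constant-feature family the distinguishing event is already visible in the total vertex count (equivalently, at iteration $1$), so it persists for every $L \geq 1$, matching the hypothesis of the corollary.
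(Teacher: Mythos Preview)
Your proof is correct and follows essentially the same approach as the paper, which derives the corollary directly from \cref{prop:matchingvc} together with the observation that $m_{n,d,L}$ grows without bound as $n$ increases. You simply make explicit what the paper leaves implicit, namely the monotonicity inclusion $\cG_{d,n}^\bool\subseteq\cG_d^\bool$ and a concrete witnessing family (edgeless graphs of increasing order) showing $m_{n,d,L}\geq n$.
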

The proof of this result requires update and readout functions in GNNs of \emph{unbounded width}. Using a different  ``bit extraction'' proof technique, we can strengthen the previous result such that \emph{fixed-width} GNNs can be considered.
\begin{theorem}\label{thm:unboundedbitss}
	For all $d,L$ at least two, it holds  $\vcdim_{\cG_{d}^\bool}(\GNN(d,L))=\infty$.
\end{theorem}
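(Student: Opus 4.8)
The plan is to prove the theorem directly by a \emph{lower-bound} (shattering) argument: for every $m\in\Nb$ I will exhibit $m$ graphs in $\cG_d^\bool$ together with a family of GNNs in $\GNN(d,L)$ that realizes every labeling $\pmb\tau\in\{0,1\}^m$ on them. In contrast to \cref{infty_simple}, where the width of the update and readout functions was allowed to grow with $m$, here the architecture is frozen at width $d\ge 2$ and depth $L\ge 2$, so the only resources that may grow with $m$ are (i) the \emph{order} of the input graphs, which is unconstrained in the uniform regime, and (ii) the \emph{bitlength} of a single real weight, which is what the theorem's ``different bit-extraction technique'' exploits. Storing all of $\pmb\tau$ in one high-precision weight and decoding the relevant coordinate with a constant-size sub-network is the heart of the construction; this is precisely the regime excluded by the bounded-bitlength hypothesis of \cref{thm:bartlett}.

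For the graphs I take $\mG_i$ to be the edgeless graph on $n_i \coloneqq c^{\,i-1}$ vertices (for a fixed integer base $c\ge 3$), each vertex carrying the same fixed boolean feature; these graphs are pairwise $\wlone$-distinguishable solely through their vertex count. Given a target $\pmb\tau$, I encode it in the single weight $\alpha \coloneqq \sum_{j=1}^{m}\tau_j c^{-j}\in[0,1)$. Since the graphs are edgeless, the neighborhood aggregates $\AGG^\tup{t}(\oms \hb_u^\tup{t-1}\mid u\in N(v)\cms)$ are empty, so the $L$ layers act as identical per-vertex maps; I use them only to load $\alpha$ into one feature coordinate, giving $\hb_v^\tup{L}=\alpha$ for every vertex. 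A summation readout then produces $\RO(\oms \hb_v^\tup{L}\cms)=n_i\alpha=c^{\,i-1}\alpha$. A short calculation shows that the fractional part of $c^{\,i-1}\alpha$ lies in $[c^{-1}, c^{-1}+\tfrac{1}{c(c-1)})$ when $\tau_i=1$ and in $[0,\tfrac{1}{c(c-1)})$ when $\tau_i=0$, so for $c\ge 3$ a fixed threshold (say at $\tfrac14$) separates the two cases with the margin required by the $2/3$--$1/3$ definition of shattering.

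The main obstacle is implementing this last extraction step \emph{at constant depth and width}. Reading the designated coordinate $\tau_i$ off the large number $c^{\,i-1}\alpha$ amounts to taking its fractional part, and no piecewise-polynomial network of fixed depth can do this, since it would have to compute $\lfloor c^{\,i-1}\alpha\rfloor$ for unboundedly large arguments --- this is exactly why the $\Theta(\mathrm{poly}(d,L)\log u)$ bound of \cref{thm:bartlett} does not apply once the bitlength is unbounded. I circumvent it by letting the readout use a single \emph{periodic} (sawtooth-type) activation, permitted because the width-$d$ update and readout functions are otherwise unconstrained; one application of such an activation exposes the fractional part directly, after which the threshold above yields $\tau_i$. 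Carrying this out for all $i\in[m]$ simultaneously with one weight $\alpha$ shatters $\mG_1,\dots,\mG_m$; as $m$ was arbitrary, $\vcdim_{\cG_d^\bool}(\GNN(d,L))=\infty$. I expect verifying that the whole pipeline remains a bona fide width-$d$, depth-$L$ GNN (rather than an arbitrary function of the graph), together with choosing the base $c$ and the activation so that the separating margins are met exactly, to be the only delicate bookkeeping.
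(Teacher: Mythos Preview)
Your argument is correct for $\GNN(d,L)$ as stated, but it takes a genuinely different route from the paper's. Both proofs encode the target labeling in a single high-precision real weight and let the input graph select which bit to read out; the difference is how that bit is extracted at fixed width and depth. You use edgeless graphs and push the extraction entirely into a periodic sawtooth activation in the readout---short and transparent, but, as you yourself note, it collapses as soon as activations are restricted to be piecewise polynomial. The paper resolves exactly this obstacle \emph{without} leaving the piecewise-linear world, by offloading the fractional-part computation onto the \emph{graph structure}: its graphs $F_k$ are height-two forests carrying one root $r_c$ for every $c$ in a set $\cC_k$ of size $2^{k-1}$ (essentially one root per possible integer part); the first layer produces degree-based features so that at each root the second layer evaluates $A(4^k\rho-c)$ for a fixed four-piece piecewise-linear hat function $A$, which is nonzero only at the unique root where $c$ matches the true integer part, and summation over the roots recovers the $k$th bit. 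This buys the paper the strengthening announced immediately after the theorem---that the result already holds for \emph{simple} GNNs with piecewise-linear activations---which your construction does not yield. Conversely, your graphs are trivial and your argument would go through even for $d=1$ and $L=0$, whereas the paper's genuinely needs width and depth at least two.
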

Again, this result holds even for the class of simple GNNs. The theorem relies on the following result, which is of independent interest.
\begin{proposition}\label{thm:bl_lowerr}
	There exists a family $\cF_b$ of simple $2$-layer GNNs of width two and bitlength $\cO(b)$ using piece-wise linear activation functions such that its VC dimension is \emph{exactly} $b$.
\end{proposition}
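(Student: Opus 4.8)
\emph{Plan.} I would prove the two directions separately, designing the family $\cF_b$ so that a cardinality argument pins the VC dimension from above while an explicit bit-extraction construction pins it from below. Concretely, I would fix a single architecture --- two summation message-passing layers of width two followed by a readout MLP of width two, i.e.\ a simple GNN --- and let only one real weight $w_{\pmb\tau}$ vary over the $2^b$ values indexed by $\pmb\tau \in \{0,1\}^b$; thus $\cF_b \coloneqq \{\gnn_{\pmb\tau} : \pmb\tau \in \{0,1\}^b\}$ has exactly $2^b$ members. The upper bound $\vcdim(\cF_b) \le b$ is then immediate: shattering $m$ graphs requires realizing all $2^m$ labelings, and distinct labelings must come from distinct members, so $2^m \le |\cF_b| = 2^b$, giving $m \le b$. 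The real work is the matching lower bound, i.e.\ exhibiting $b$ graphs that $\cF_b$ shatters.

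\emph{The construction.} For the lower bound I would let $\mG_i$, $i \in [b]$, be the edgeless graph on $4^i$ vertices, each carrying the boolean feature $1$. With summation aggregation the (empty) neighbourhoods contribute nothing, so the two message-passing layers can be chosen to keep every vertex feature equal to $1$, and the summation readout produces the scalar $x_i = 4^i$. I would encode the target labelling in the base-$4$ fraction $w_{\pmb\tau} \coloneqq \sum_{j=1}^{b}\tau_j 4^{-j}$, which has bitlength $\cO(b)$, and have the first linear map of the readout MLP form $w_{\pmb\tau}\, x_i = \sum_{j=1}^b \tau_j 4^{\,i-j}$. Splitting this sum at $j=i$ is the crux: the terms with $j<i$ are multiples of $4$, the term $j=i$ equals $\tau_i$, and the tail $\sum_{j>i}\tau_j 4^{\,i-j}$ lies in $[0,\tfrac13)$. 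Hence $w_{\pmb\tau} x_i \bmod 4$ lies in $[0,\tfrac13)$ when $\tau_i=0$ and in $[1,\tfrac43)$ when $\tau_i=1$. Applying a fixed period-$4$ piecewise-linear activation $\sigma$ with $\sigma(y)=\operatorname{clamp}(3(y-\tfrac12),0,1)$ on one period then yields $\gnn_{\pmb\tau}(\mG_i)\le \tfrac13$ if $\tau_i=0$ and $\ge \tfrac23$ if $\tau_i=1$, which is exactly the separation required by \eqref{threshold}. Since $\pmb\tau$ was arbitrary, $\mG_1,\dots,\mG_b$ are shattered and $\vcdim(\cF_b)\ge b$, so together with the upper bound $\vcdim(\cF_b)=b$.

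\emph{Main obstacle.} The delicate point is the robust $2/3$ versus $1/3$ gap: a naive binary encoding $\sum_j \tau_j 2^{-j}$ leaves the extracted value arbitrarily close to the decision boundary because of contamination from the lower-order bits. Spacing the bits in base $4$ (equivalently, inserting a guard digit) keeps the tail below $\tfrac13$ and is what makes the fixed, weight-independent, periodic piecewise-linear activation extract the correct digit with constant margin; the remaining freedom in $\sigma$ (slope $3$) only amplifies that margin past $2/3$. I would also take care that the period-$4$ activation, although it has unboundedly many linear pieces, is a single fixed function specified by $\cO(1)$ parameters, so that the entire growth of bitlength sits in the single weight $w_{\pmb\tau}$. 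This is precisely what reconciles the linear-in-$b$ VC dimension with the bounded width and depth, and it is what makes the cardinality bound tight, yielding $\vcdim(\cF_b)=\log_2|\cF_b|=b$.
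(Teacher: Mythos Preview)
Your upper bound is fine and matches the paper's. The lower bound, however, has a genuine gap: you rely on a \emph{periodic} activation with infinitely many linear pieces to perform the reduction modulo~$4$. In the paper's framework (and in the standard sense used throughout), a piecewise-linear activation is specified by a \emph{finite} partition of $\Rb$ into $p$ intervals; indeed, right after Theorem~\ref{thm:bartlett} the paper records that the GNNs in $\cF_b$ use $p=4$ pieces. Allowing $p=\infty$ trivialises bit extraction and, worse, contradicts Theorem~\ref{thm:bartlett}: your edgeless graphs have color complexity $u=1$, so Theorem~\ref{thm:bartlett} would force the VC dimension of $\GNN_{\mathsf{slp}}(2,2)$ on them to be a fixed constant independent of $b$, not $b$. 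The fact that your $\sigma$ ``is specified by $\cO(1)$ parameters'' is irrelevant; it is the piece count $p$ that enters the upper bound, and with a $4$-periodic sawtooth $p$ is infinite. With a genuinely finite-piece activation your argument collapses: a single width-two layer cannot reduce $w_{\pmb\tau}\cdot 4^i$ modulo $4$ when $i$ ranges up to $b$, because each finite-piece piecewise-linear map folds the line only boundedly many times.

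The paper's construction circumvents this by pushing the ``periodicity'' into the \emph{graph} rather than the activation. It encodes $\pmb\tau$ as $\rho=0.1\tau_11\tau_2\ldots 1\tau_b$ (your base-$4$ guard-digit idea) and builds, for each $k$, a forest $F_k$ whose $2^{k-1}$ roots $r_c$ are indexed by all possible offsets $c\in\cC_k$ arising from the unknown high-order bits. A fixed $4$-piece bump $A$ (zero outside $[0,\tfrac32]$) is applied so that the second layer computes $A(4^k\rho-c)$ at each root; by design exactly one root has its argument in the support of $A$, and there the value is $\tau_k$. Summation readout then returns $\tau_k$. Thus the exponential blow-up goes into $|V(F_k)|$ and the color complexity (both $\cO(2^b)$), which is precisely what makes the construction compatible with, and sharp for, the $\log u$ upper bound of Theorem~\ref{thm:bartlett}. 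Your edgeless-graph shortcut misses this: the graph structure is not decoration here, it is what replaces the forbidden periodic activation.
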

That is, to show that the VC dimension is infinite for GNNs in 
$\GNN(d,L)$ with $d$ and $L \geq 2$, we leverage~\cref{thm:bl_lowerr}, implying that we can shatter an arbitrary number of graphs by such GNNs, provided that they have bit precision $\cO(b)$. The GNNs in~\cref{thm:unboundedbitss} have arbitrary precision reals, so they can also shatter these 
graphs. Since this works for any $b$,~\cref{thm:unboundedbitss} follows.
 
\cref{thm:bl_lowerr} in turn is proved as follows. For the upper bound, we observe that there are only exponentially many (in $b$) GNNs in $\cF_b$, from which the upper bound immediately follows. Indeed, classical VC theory implies a  bound on the VC dimension for finite classes of GNNs, logarithmic in the number of GNNs in the class. The lower bound proof is more challenging and requires constructing the collection $\cF_b$ of simple GNNs that can shatter $b$ graphs belonging to $\cG_{d}^\bool$. We remark that the $b$ graphs used are of order $\cO(2^b)$ and have $\cO(2^b)$ \wlone{} vertex colors.
 
Finally, we show bounded VC dimension when both the width and the number of layers of GNNs, \textit{and} input graphs are restricted. We obtain the bound by leveraging state-of-the-art VC dimension bounds for feedforward neural networks (FNNs) by~\citet{Bar+2019}. To control the number of parameters, we consider the class $\GNN_{\mathsf{slp}}(d,L)$ of simple GNNs in which update and readout functions are single-layer perceptrons of bounded width $d$. We specify this class of GNNs using $P = \nw$  parameters and the choice of activation functions in the perceptrons. Regarding the class of input graphs, we consider the class $\cG_{d,\leq u}$ consisting of graphs having $d$-dimensional features and color complexity at most $u$. Note that we only bound the number of colors appearing in a single graph and not the number of colors appearing in all graphs of the class. Intuitively, the parameter $u$ comes into play because we can reduce input graphs by combining \wlone{} equivalent vertices. The reduced graph has $u$ vertices, and edges have weights. One can run extended GNNs, taking edge weights into account on the reduced graph without loss of information. Moreover, one can tie extended GNNs to FNNs. Hence, the parameter $n$ can be replaced by $u$ when analyzing the VC dimension of the GNN-related FNNs.

We now relate $\cG_{d,n}$ and $\cG_{d,\leq u}$. Since
any graph of order at most $n$ has at most $n$ \wlone{} colors,
$\cG_{d,n}\subseteq \cG_{d,\leq n}$. The bound below thus also complements the upper bound on $\cG_{d,n}^\bool$
given earlier, but now for fixed-width GNNs. However,  $\cG_{d,\leq u}$ may contain graphs of arbitrary order. For example, all regular graphs (of the same degree) belong to $\cG_{d,\leq 1}$. We also remark that
there is no upper bound on the number of \wlone{} distinguishable graphs for $\cG_{d,\leq u}$, because we only bound the number of colors appearing in a single graph and not the number of colors appearing in all graphs of the class. For example, all regular graphs of arbitrary degrees are in $\cG_{0,\le 1}$, but regular graphs of different degrees can be distinguished.
As such, the bounds obtained earlier do not apply. Finally, in the bound below, input graphs can have real features.
\begin{theorem}\label{thm:bartlett}
	Assume $d$ and $L$ in $\Nb$, and GNNs in  $\GNN_{\mathsf{slp}}(d,L)$ using piece-wise polynomial activation functions with $p > 0$ pieces and degree $\delta \geq 0$. Let $P = \nw$ be the number of parameters in the GNNs. For all $u$ in $\Nb$,
	\begin{equation*}
		\vcdim_{\cG_{d,\leq u}}(\GNN_{\mathsf{slp}}(d,L))\leq
		\begin{cases}
			\cO(P\log(puP))                   & \text{if $\delta=0$,} \\
			\cO(LP\log(puP))                  & \text{if $\delta=1$,} \\
			\cO(LP\log(puP)+L^2P\log(\delta)) & \text{if $\delta>1$}. \\
		\end{cases}
	\end{equation*}
\end{theorem}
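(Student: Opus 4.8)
The plan is to reduce the problem of bounding the VC dimension of GNNs on color-bounded graphs to bounding the VC dimension of a family of feedforward neural networks, to which I then apply the piecewise-polynomial VC bounds of~\citet{Bar+2019}. The starting observation is that GNNs in $\GNN_{\mathsf{slp}}(d,L)$ are refined by \wlone: since every vertex update is a function only of the previous feature and the \emph{summed} neighbor features, any two vertices lying in the same \wlone{} color class receive identical features at every layer. Hence the entire computation is determined by the color classes rather than by the individual vertices.

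Given a graph $\mG$ in $\cG_{d,\leq u}$, I would therefore collapse its at most $u$ stable color classes into $u$ super-vertices, forming a weighted reduced graph on at most $u$ vertices whose edge weights record, for each ordered pair of classes, the number of neighbors a class-$i$ vertex has in class $j$. Because aggregation is summation, the neighbor sum in the original graph equals the edge-weighted sum over the reduced graph, so the GNN's per-layer features---and hence its scalar graph-level readout in~\eqref{def:readout}---are reproduced without loss of information by an extended GNN operating on the reduced graph with fixed edge multipliers. I would then unroll this extended GNN over its $L$ message-passing layers and its readout into a single feedforward network. The crucial bookkeeping is that weight-sharing across vertices and layers keeps the number of \emph{distinct} real parameters equal to $P = \nw$, while the number of computation units grows only to $\cO(\poly{u,d,L})$, since each of the $u$ super-vertices maintains a width-$d$ feature vector across $L$ layers; the network's activations are exactly the given piecewise-polynomial activations (with $p$ pieces and degree $\delta$), and its depth is $\Theta(L)$ because each single-layer perceptron contributes constant depth.

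Finally, I would invoke the feedforward VC-dimension bounds of~\citet{Bar+2019} with parameter count $\cO(P)$, depth $\cO(L)$, unit count $\cO(\poly{u,d,L})$, and piece count $p$. The three cases in the statement match their piecewise-constant ($\delta=0$), piecewise-linear ($\delta=1$), and higher-degree ($\delta>1$) regimes. The parameter $u$ enters the logarithm precisely through the number of units of the unrolled network---exactly the slot where the graph order $n$ would otherwise appear---thereby yielding the $\log(puP)$ factors, while the effective polynomial degree, which composes multiplicatively across the $L$ layers to roughly $\delta^L$, combines with the layerwise partitioning to produce the extra $L$- and $L^2\log\delta$-dependencies.

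The main obstacle I anticipate is the accounting in the reduction. I must verify that collapsing color classes and replacing neighbor sums by edge-weighted sums preserves the graph-level output \emph{exactly} (not just up to \wlone{}-equivalence), that the edge multipliers can be absorbed as constants without inflating the parameter count beyond $\cO(P)$, and that the distinct-parameter count of the unrolled FNN genuinely stays $\cO(P)$ rather than scaling with $u$. Getting these counts right---distinct parameters $\cO(P)$, units $\cO(\poly{u,d,L})$, depth $\Theta(L)$, and degree growth $\delta^L$---is what makes \citet{Bar+2019} deliver the stated bounds verbatim.
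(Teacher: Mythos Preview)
Your proposal is correct and follows essentially the same route as the paper: reduce each input graph to its at most $u$ \wlone{} color classes so that the unrolled computation becomes an FNN with $\cO(P)$ shared parameters, depth $\Theta(L)$, and $\cO(\poly{u,d,L})$ units per graph, then apply the piecewise-polynomial sign-pattern bounds of~\citet{Bar+2019}. The paper makes the multi-graph aspect explicit---building the disjoint union of the $m$ per-graph unrollings into a single FNN with $m$ outputs before invoking Bartlett et al.'s Lemma~17---and performs the \wlone{} collapse at the level of computation nodes in this combined FNN rather than via a weighted quotient graph, but these are presentational differences rather than substantive ones.
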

We note that the above result can be straightforwardly generalized to $k$-GNNs. These upper bounds, concerning the dependency on $u$, cannot be improved by more than a constant factor. Indeed,
for the $b$ graphs used in \cref{thm:bl_lowerr}, $u=\cO(2^b)$. Moreover, the simple GNNs in $\cF_b$ belong to
$\GNN_{\mathsf{slp}}(2,2)$ and use piecewise-linear activation functions with $p=4$ and $\delta=1$. Since they shatter $b=\cO(\log(u))$ graphs, this matches with the upper bound up to constant factors.

\paragraph{Discussion} Our results include the first lower bounds of GNNs' VC dimension and the first inherent connection between GNNs' VC dimension and the expressive power of the \wlone{}. In particular, we find that a larger bit precision leads to a higher VC dimension and sample complexity. Therefore, our findings provide an additional incentive for reducing the precision of GNNs besides the already-known benefits of reduced training and inference time. Furthermore, we show that the VC dimension bounds for GNNs can be tightened when considering graphs of low color complexity. This connection to the number of colors indicates that graphs with complex structures, captured by the \wlone{}, may have a higher VC dimension. Moreover, if the \wlone{} can distinguish a large set of graphs of a given dataset, our results imply that a sufficiently expressive GNN will require a large set of training samples to generalize well. Therefore, we can use the \wlone{} to assess GNNs' generalization ability on a given dataset quickly. The same relation holds for $k$-GNNs and the \kwl. Moreover, our results extend easily to recent, more expressive GNN enhancements, e.g., subgraph-based~\citep{botsas2020improving} or subgraph-enhanced GNNs~\citep{Bev+2021,Qia+2022}. Hence, our results lead to a better understanding of how expressivity influences the learning performance of modern GNN architectures; see~\cref{klift_app}.

\section{Limitations, possible road maps, and future work}
Although the results are the first ones explicitly drawing a tight connection between expressivity and generalization, there are still many open questions. First, excluding~\cref{thm:colorbound_up}, the results investigate specific GNN classes using sum aggregation.
Hence, the results should be extended to specific GNN layers commonly used in practice, such as that in~\citet{Xu+2018b}, and the effect of different aggregation functions, such as max or mean, should be studied in detail. Moreover, the results only give meaningful results for discretely labeled graphs. Hence, the results should be extended to attributed graphs. Secondly, although the experimental results in~\cref{sec:experiments} suggest that our VC dimension bounds hold in practice to some extent, it is well known that they do not explain the generalization behavior of deep neural networks in the over-parameterized regime, as shown in~\citet{Bar+2019a}, trained with variants of stochastic gradient descent. Therefore, it is a future challenge to understand how graph structure influences the generalization properties of over-parameterized GNNs, trained with variants of stochastic gradient descent and what role the Weisfeiler--Leman algorithm plays in this context.

\section{Experimental evaluation}\label{sec:experiments}
In the following, we investigate how well the VC dimension bounds from the previous section hold in practice. Specifically, we answer the following questions.
\begin{description}
	\item[Q1] How does the number of parameters influence GNNs' generalization performance?
	\item[Q2] How does the number of \wlone-distinguishable graphs influence GNNs' generalization performance?
	\item[Q3] How does the bitlength influence a GNN's ability to fit random data?
\end{description}

The source code of all methods and evaluation procedures is available at \url{https://www.github.com/chrsmrrs/wl_vs_vs}.

\paragraph{Datasets} To investigate questions \textbf{Q1} and \textbf{Q2}, we used the datasets \textsc{Enzymes}~\citep{Bor+2005a,Sch+2004}, \textsc{MCF-7}~\citep{Yan+2008}, \textsc{MCF-7H}~\citep{Yan+2008}, \textsc{Mutagenicity}~\citep{Kaz+2005,Rie+2008}, and \textsc{NCI1} and \textsc{NCI109}~\citep{Wal+2008,She+2011} provided by~\citet{Mor+2020}. See~\cref{statistics} for dataset statistics and properties. For question \textbf{Q3}, to investigate the influence of bitlength on GNN's VC dimension, we probed how well GNNs can fit random data. Hence, the experiments on these datasets aim at empirically verifying the VC dimension bounds concerning bitlength. To that, we created a synthetic dataset; see~\cref{syn_app}. Since it is challenging to simulate different bitlengths without specialized hardware, we resorted to simulating an increased bitlength via an increased feature dimension; see~\cref{bit_dim}.

All experiments are therefore conducted with standard 32-bit precision. 
We also experimented with 64-bit precision but observed no clear difference. 
Furthermore, 16-bit precision proved numerically unstable in this setting. 

\paragraph{Neural architectures} For the experiments regarding \textbf{Q1} and \textbf{Q2}, we used the simple GNN layer described in~\cref{simple_app} using a $\relu$ activation function, ignoring possible edge labels. To answer question \textbf{Q1}, we fixed the number of layers to five and chose the feature dimension $d$ in $\{ 4,16,256,1\,024 \}$. To answer \textbf{Q2}, we set the feature dimension $d$ to $64$ and choose the number of layers from $\{0,\dots,6\}$. We used sum pooling and a two-layer MLP  for all experiments for the final classification. To investigate \textbf{Q3}, we used the architecture described in~\cref{syn_gnn_app}. In essence, we used a 2-layer MLP for the message generation function in each GNN layer and added batch normalization~\citep{ioffe2015batch} before each non-linearity and fixed the number of layers to $3$, and varied the feature dimension $d$ in $\{ 4,16,64,256 \}$.

\paragraph{Experimental protocol and model configuration} For the experiments regarding \textbf{Q1} and \textbf{Q2}, we uniformly and at random choose 90\% of a dataset for training and the remaining 10\% for testing. We repeated each experiment five times  and report mean test accuracies and standard deviations. We optimized the standard cross entropy loss for 500 epochs using the \textsc{Adam} optimizer~\citep{Kin+2015}. Moreover, we used a learning rate of 0.001 across all experiments and no learning rate decay or dropout. For \textbf{Q3}, we set the learning rate to $10^{-4}$ and the number of epochs to 100\,000, and repeated each experiment 50 times. All architectures were implemented using \textsc{PyTorch Geometric}~\citep{Fey+2019} and executed on a workstation with 128GB RAM and an NVIDIA Tesla V100 with 32GB memory.

\begin{table}[t]
	\caption{Train and test classification accuracies using different numbers of parameters, using five layers, studying how the number of parameters influences generalization.}
	\label{tud_weights}
	\centering

	\resizebox{0.8\textwidth}{!}{ 	\renewcommand{\arraystretch}{1.05}
		\begin{tabular}{@{}c <{\enspace}@{}lcccccc@{}} \toprule
			\multirow{3}{*}{\vspace*{4pt}\textbf{Dimension}} & \multirow{3}{*}{\vspace*{4pt}\textbf{Split}} & \multicolumn{6}{c}{\textbf{Dataset}}                                                                                                                                                \\\cmidrule{3-8}
			                                                 &                                              & {\textsc{Enzymes}}                   & {\textsc{MCF-7}}           & {\textsc{MCF-7H}}          &
			{\textsc{Mutagenicity}}                          & {\textsc{NCI1}}                              &
			{\textsc{NCI109}}                                                                                                                                                                                                                                                                     \\	\toprule
			\multirow{2}{*}{4}                               & Train                                        & 31.0 \scriptsize	$\pm 3.1$            & 92.1 \scriptsize	$\pm 0.4$  & 92.1 \scriptsize	$\pm 0.4$  & 79.7 \scriptsize	$\pm 0.9$  & 76.7 \scriptsize	$\pm 7.3$ & 66.4 \scriptsize	$\pm 9.5$ \\
			                                                 & Test                                         & 25.3 \scriptsize	$\pm 5.2$            & 92.4  \scriptsize	$\pm 0.2$ & 92.3  \scriptsize	$\pm 0.3$ & 75.8 \scriptsize	$\pm 0.9$  & 72.4 \scriptsize	$\pm 7.1$ & 63.6 \scriptsize	$\pm 9.1$ \\
			\cmidrule{1-8}
			\multirow{2}{*}{16}                              & Train                                        & 76.8 \scriptsize	$\pm 6.4$            & 96.0 \scriptsize	$\pm 0.1$  & 96.5  \scriptsize	$\pm 0.3$ & 92.7 \scriptsize	$\pm 2.7$  & 88.4 \scriptsize	$\pm 6.2$ & 86.4 \scriptsize	$\pm 0.9$ \\
			                                                 & Test                                         & 41.7 \scriptsize	$\pm 9.4$            & 93.2 \scriptsize	$\pm 0.5$  & 93.1  \scriptsize	$\pm 0.4$ & 79.8 \scriptsize	$\pm 2.2$  & 76.1 \scriptsize	$\pm 2.1$ & 78.6 \scriptsize	$\pm 1.9$ \\
			\cmidrule{1-8}
			\multirow{2}{*}{256}                             & Train                                        & 98.2 \scriptsize	$\pm 3.6$            & 99.7 \scriptsize	$<0.1$     & 99.9 \scriptsize	$<0.1$     & 100.0 \scriptsize	$\pm 0.0$ & 99.8 \scriptsize	$<0.1$    & 97.5 \scriptsize	$\pm 2.1$ \\
			                                                 & Test                                         & 54.7 \scriptsize	$\pm 2.4$            & 94.0  \scriptsize	$\pm 0.2$ & 93.6 \scriptsize	$\pm 0.2$  & 80.7 \scriptsize	$\pm 1.0$  & 81.8 \scriptsize	$\pm 1.5$ & 82.1 \scriptsize	$\pm 1.0$ \\
			\cmidrule{1-8}
			\multirow{2}{*}{1\,024}                          & Train                                        & 99.8 \scriptsize	$\pm 0.2$            & 99.8 \scriptsize	$<0.1$     & 99.8 \scriptsize	$\pm 0.1$  & 99.9 \scriptsize	$\pm 0.2$  & 99.8 \scriptsize	$\pm 0.1$ & 98.6 \scriptsize	$\pm 1.0$ \\
			                                                 & Test                                         & 54.3 \scriptsize	$\pm $2.3            & 93.8  \scriptsize	$\pm 0.3$ & 93.6 \scriptsize	$\pm 0.2$  & 81.7 \scriptsize	$\pm 0.8$  & 80.5 \scriptsize	$\pm 1.0$ & 82.9 \scriptsize	$\pm 0.9$ \\
			\bottomrule
		\end{tabular}
	}
\end{table}

\begin{figure}
	\centering
	\subcaptionbox{\textsc{Enzymes}}{\includegraphics[scale=0.45]{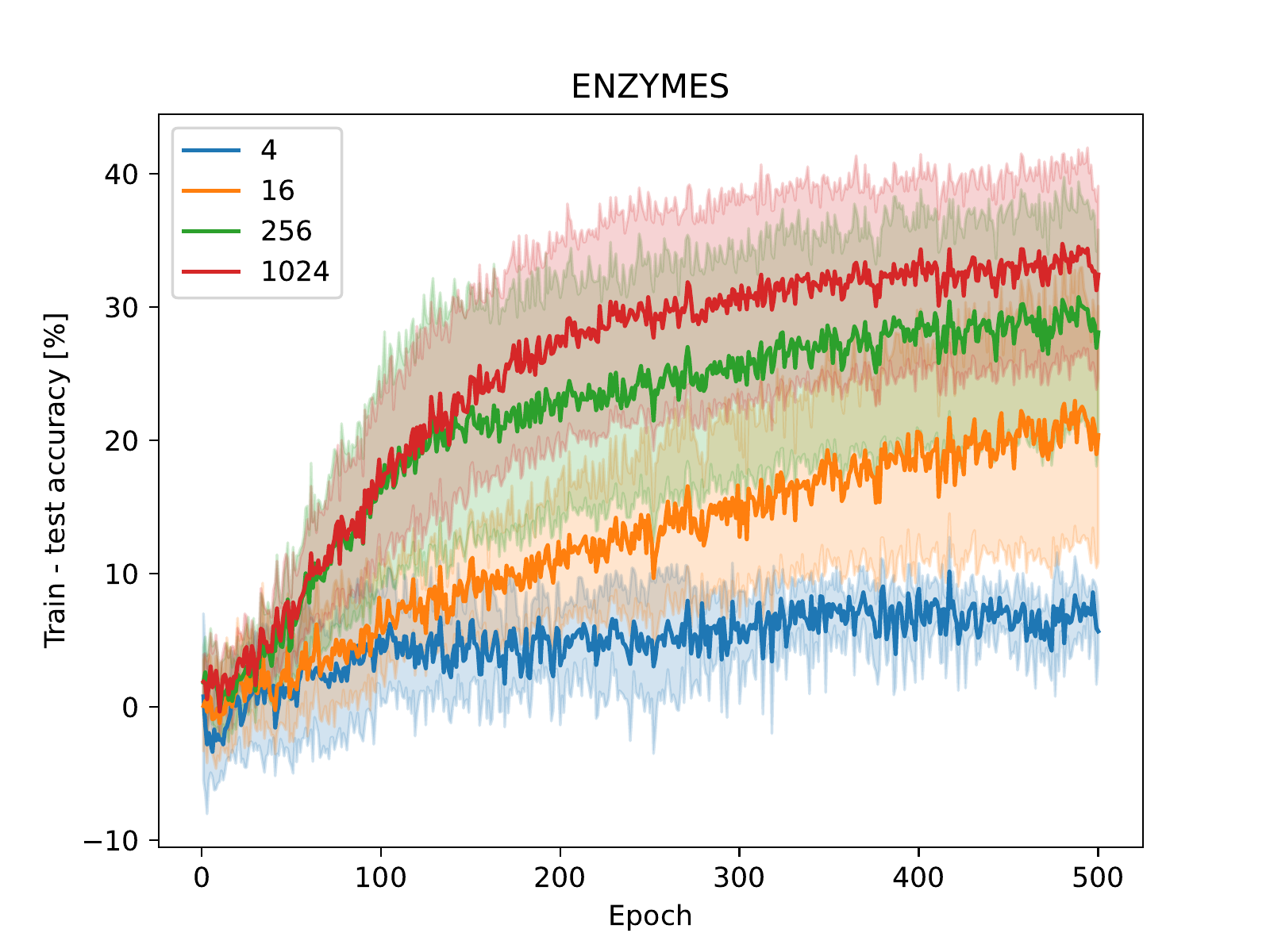}}%
	\hfill
	\subcaptionbox{\textsc{MCF-7}}{\includegraphics[scale=0.45]{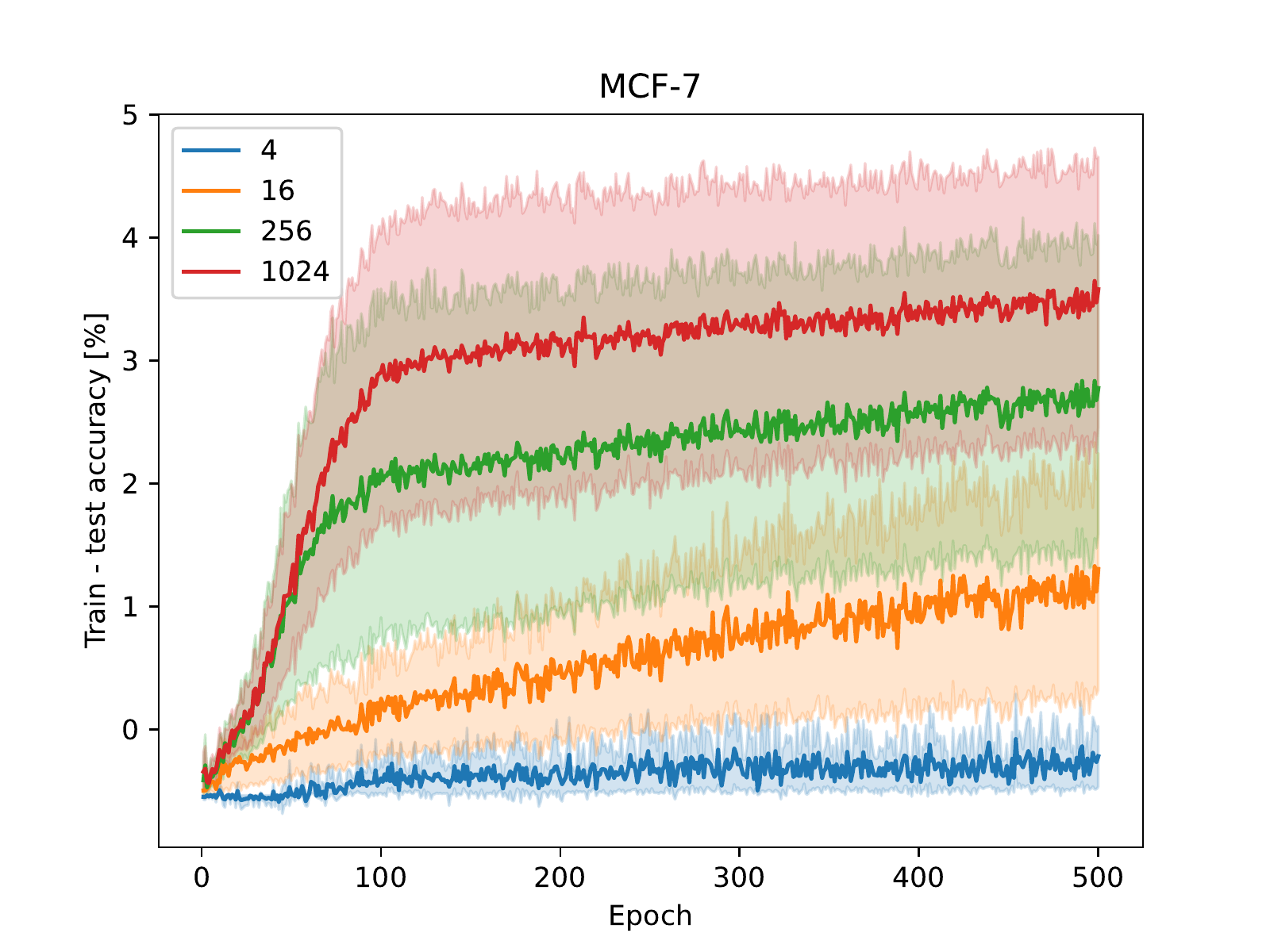}}\\
	\subcaptionbox{\textsc{MCF-7H}}{\includegraphics[scale=0.45]{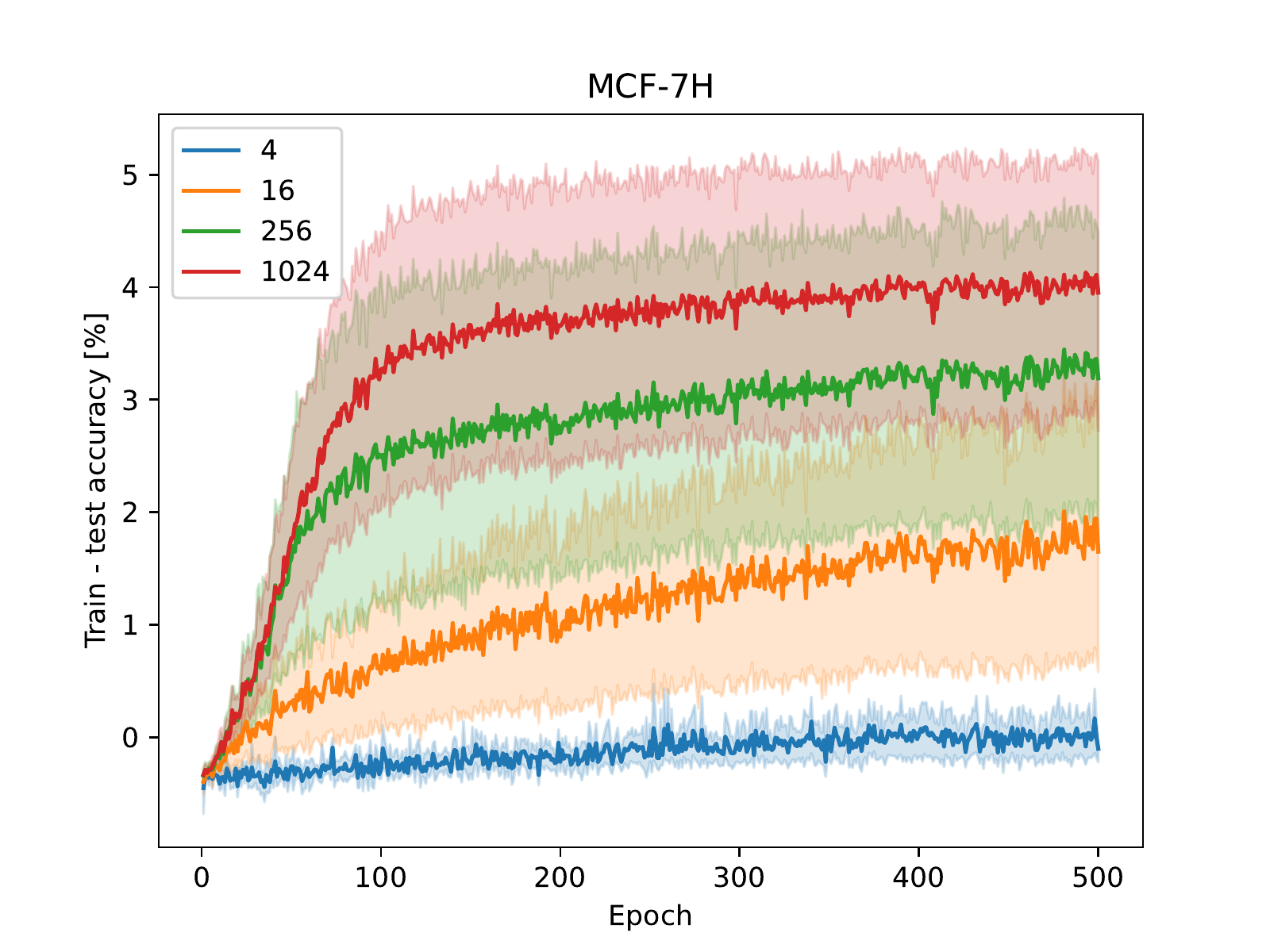}}%
	\hfill
	\subcaptionbox{\textsc{Mutagenicity}}{\includegraphics[scale=0.45]{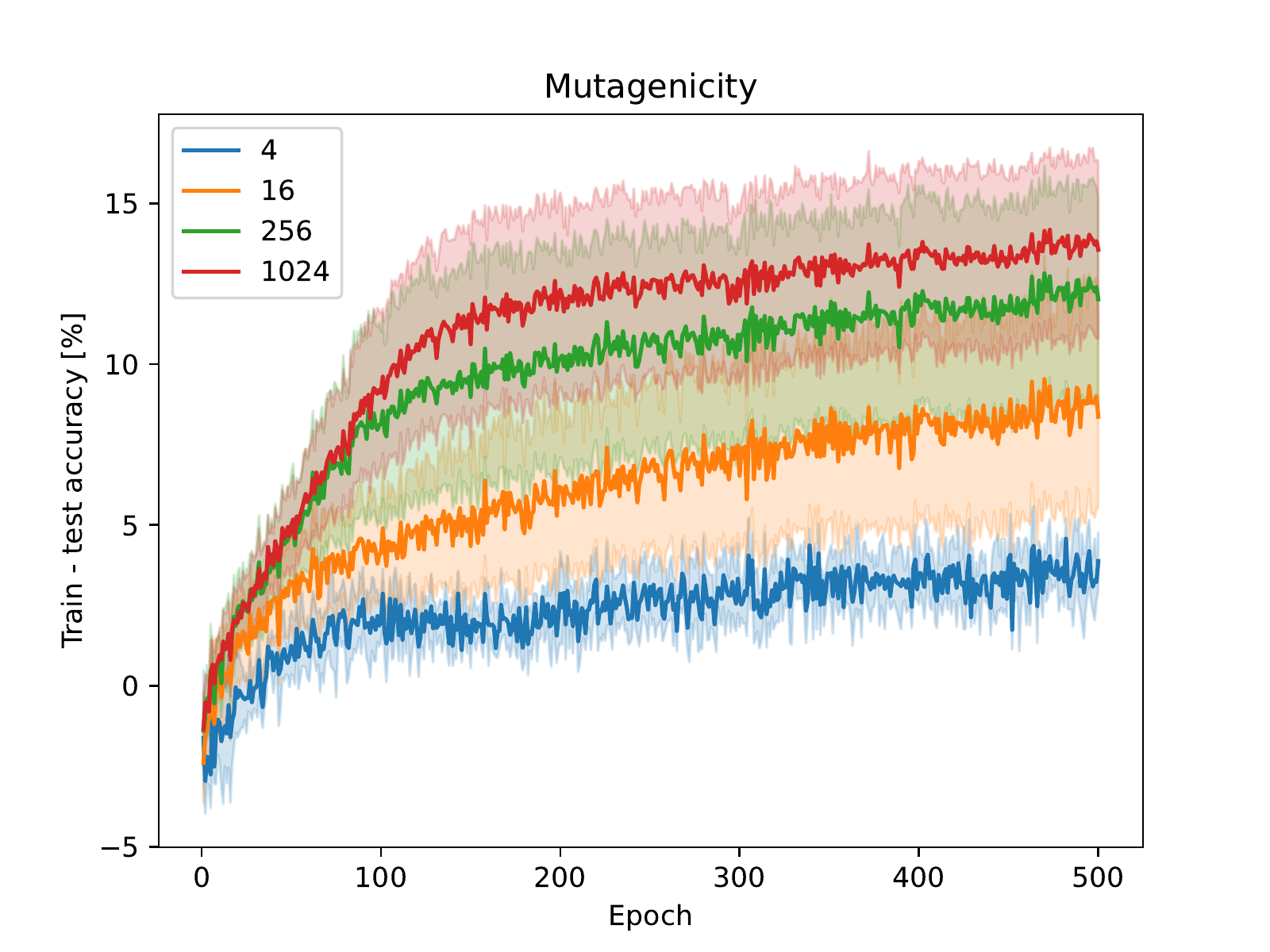}}\\
	\subcaptionbox{\textsc{NCI1}}{\includegraphics[scale=0.45]{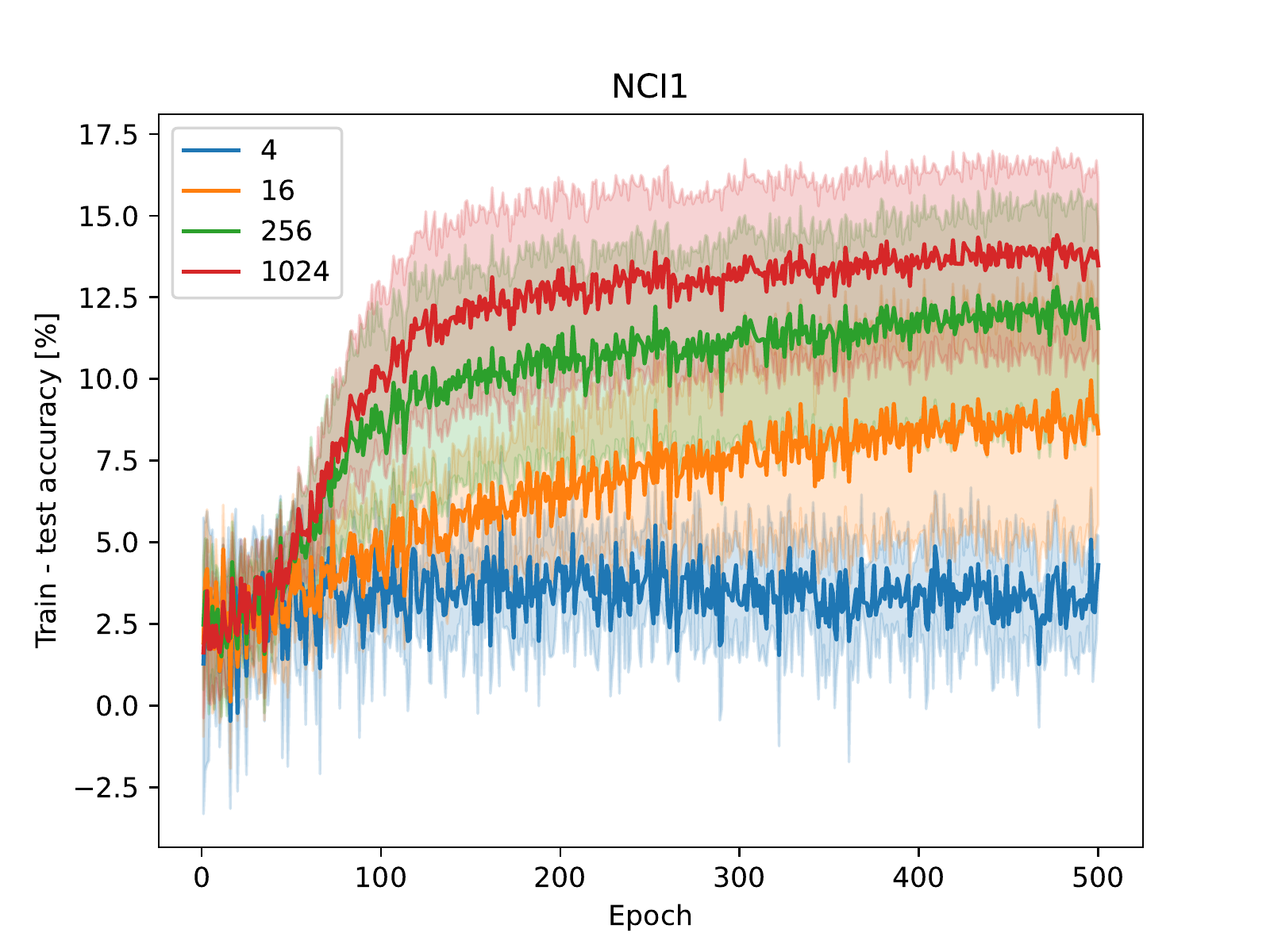}}%
	\hfill
	\subcaptionbox{\textsc{NCI109}}{\includegraphics[scale=0.45]{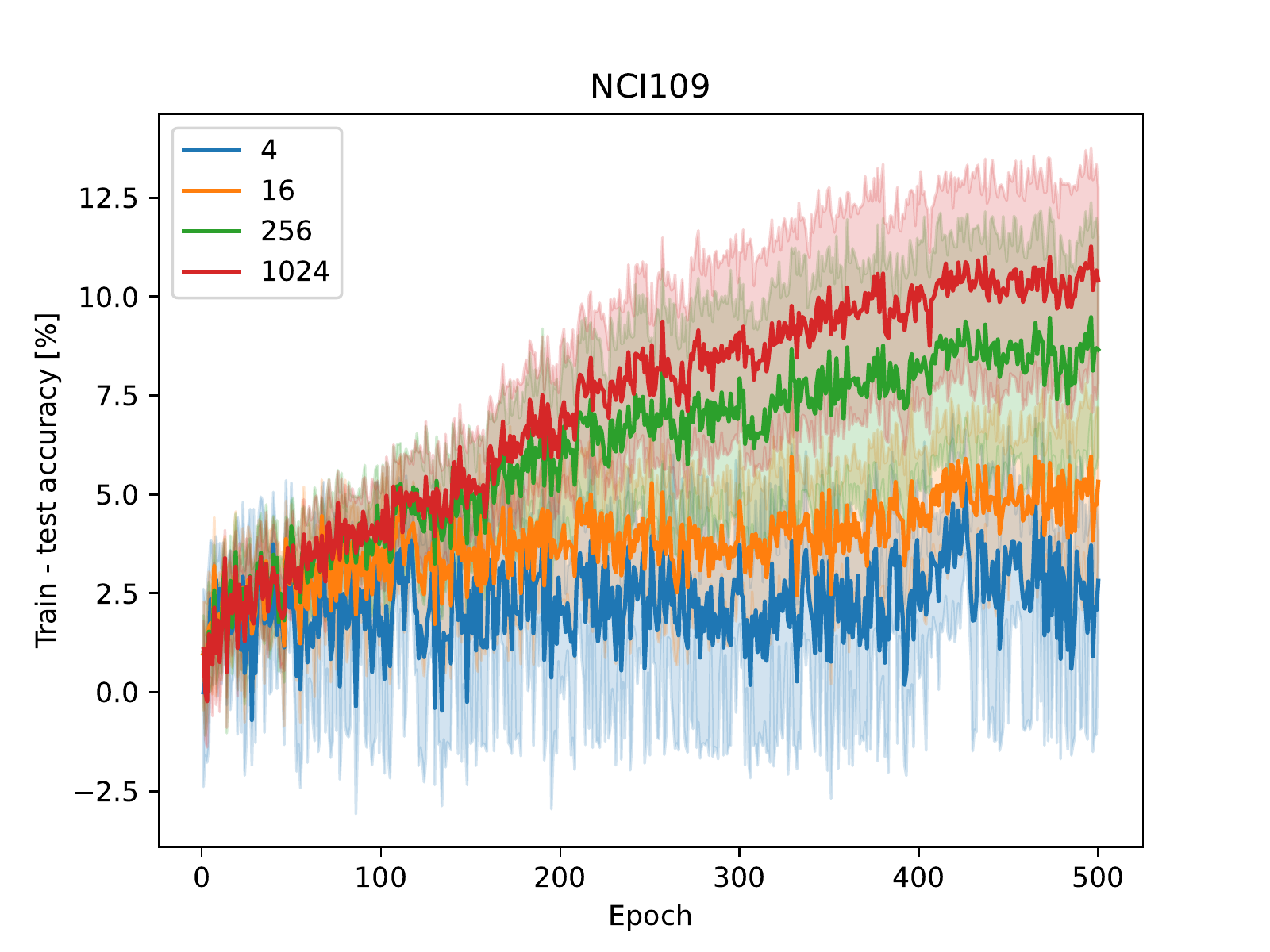}}%
	\caption{Difference between train and test accuracy for different feature dimensions in $\{ 4, 16, 256, 1\,024\}$.}\label{fig:exp1}
\end{figure}

\begin{table}[t]
	\caption{Train and test classification accuracies using different numbers of layers and a feature dimension of $64$, studying how the number of different color histograms (number of \wlone-distinguishable graphs $m_{n,d,L}$) influences generalization.}
	\label{colors}
	\centering

	\resizebox{0.8\textwidth}{!}{ 	\renewcommand{\arraystretch}{1.05}
		\begin{tabular}{@{}c <{\enspace}@{}lcccccc@{}} \toprule
			\multirow{3}{*}{\vspace*{4pt}\textbf{Layers}} & \multirow{3}{*}{\vspace*{4pt}\textbf{Split}} & \multicolumn{6}{c}{\textbf{Dataset}}                                                                                                                                                  \\\cmidrule{3-8}
			                                              &                                              & {\textsc{Enzymes}}                   & {\textsc{MCF-7}}           & {\textsc{MCF-7H}}          &
			{\textsc{Mutagenicity}}                       & {\textsc{NCI1}}                              &
			{\textsc{NCI109}}                                                                                                                                                                                                                                                                    \\	\toprule
			\multirow{4}{*}{0}                            & Train                                        & 40.7 \scriptsize	$\pm 0.5$            & 91.7 \scriptsize	$\pm 0.1$  & 91.8 \scriptsize	$\pm 0.1$  & 77.2 \scriptsize	$ \pm 0.3$  & 74.5 \scriptsize	$\pm 0.3$  & 73.1 \scriptsize	$\pm 0.5$ \\
			                                              & Test                                         & 33.7 \scriptsize	$\pm 1.6$            & 91.9 \scriptsize	$< 0.1$    & 91.2 \scriptsize	$\pm 0.1$  & 75.7 \scriptsize	$\pm 1.2$   & 67.9 \scriptsize	$\pm 1.3 $ & 71.5 \scriptsize	$\pm 0.7$ \\
			                                              & Difference                                   & 7.0 \scriptsize	$\pm 1.9$             & -0.2 \scriptsize	$\pm 0.1$  & 1.0   \scriptsize	$\pm 0.1$ & 1.5 \scriptsize	$\pm 1.3$    & 6.5 \scriptsize	$\pm 1.2$   & 1.6 \scriptsize	$\pm 0.6$  \\
			                                              & \# Histograms                                & 385                                  & 11\,533                    & 19\,625                    & 2\,819                      & 2\,889                     & 2\,929                    \\
			\cmidrule{1-8}
			\multirow{4}{*}{1}                            & Train                                        & 66.7  \scriptsize	$\pm 3.6$           & 91.8  \scriptsize	$\pm 0.1$ & 92.1 \scriptsize	$<0.1$     & 90.9 \scriptsize $ \pm 0.1$ & 92.0 \scriptsize	$\pm 1.5$  & 83.4 \scriptsize	$\pm 1.5$ \\
			                                              & Test                                         & 52.3 \scriptsize	$\pm 5.0$            & 91.9 \scriptsize	$<0.1$     & 91.4 \scriptsize	$\pm 0.1$  & 82.0 \scriptsize	$\pm 1.0$   & 78.6 \scriptsize	$\pm 1.3$  & 76.1 \scriptsize	$\pm 1.0$ \\
			                                              & Difference                                   & 14.4 \scriptsize	$\pm 5.2$            & <0.1 \scriptsize	$\pm 0.1$  & 0.1 \scriptsize	$\pm 0.1$   & 8.9 \scriptsize	$\pm 1.3$    & 13.4 \scriptsize	$\pm 0.9$  & 7.3  \scriptsize	$\pm 0.7$ \\
			                                              & \# Histograms                                & 595                                  & 25\,417                    & 26\,037                    & 3\,624                      & 3\,906                     & 3\,950                    \\
			\cmidrule{1-8}
			\multirow{4}{*}{2}                            & Train                                        & 93.5 \scriptsize	$\pm 2.1$            & 92.0 \scriptsize	$\pm 0.2$  & 91.9 \scriptsize	$\pm 0.3$  & 96.9 \scriptsize	$ \pm 1.9$  & 98.3 \scriptsize	$\pm 0.5$  & 91.1 \scriptsize	$\pm 0.5$ \\
			                                              & Test                                         & 62.7 \scriptsize	$\pm 7.2$            & 92.1 \scriptsize	$\pm 0.1$  & 91.0 \scriptsize	$\pm 0.6$  & 82.5 \scriptsize	$\pm 1.0$   & 80.5 \scriptsize	 $\pm 1.3$ & 78.1 \scriptsize	$\pm 1.5$ \\
			                                              & Difference                                   & 39.9 \scriptsize	$\pm 5.5$            & -0.1 \scriptsize	$\pm 0.2$  & 1.0 \scriptsize	$\pm 0.3$   & 14.4 \scriptsize	$\pm 1.0$   & 17.8 \scriptsize	$\pm 1.0$  & 13.0 \scriptsize	$\pm 1.5$ \\
			                                              & \# Histograms                                & 595                                  & 26\,872                    & 27\,353                    & 4\,239                      & 4\,027                     & 4\,055                    \\
			\cmidrule{1-8}
			\multirow{4}{*}{3}                            & Train                                        & 98.0  \scriptsize	$\pm 2.5$           & 92.1 \scriptsize	$\pm 0.3$  & 92.1 \scriptsize	$\pm 0.2$  & 99.4 \scriptsize $ \pm 0.9$ & 99.8 \scriptsize	$\pm 0.1$  & 93.6 \scriptsize	$\pm 1.2$ \\
			                                              & Test                                         & 58.7 \scriptsize	$\pm 5.3$            & 92.1 \scriptsize	$\pm 0.2$  & 91.5 \scriptsize	$\pm 0.2 $ & 82.8 \scriptsize	$\pm 1.0$   & 83.5 \scriptsize	$\pm 0.7$  & 77.8 \scriptsize	$\pm 1.8$ \\
			                                              & Difference                                   & 39.4 \scriptsize	$\pm 2.8$            & 0.1 \scriptsize	$\pm 0.2$   & 1.0 \scriptsize	$\pm 0.1$   & 16.6 \scriptsize	$\pm 1.0 $  & 16.3 \scriptsize	$\pm 0.7$  & 15.8 \scriptsize	$\pm 1.4$ \\
			                                              & \# Histograms                                & 595                                  & 27\,048                    & 27\,524                    & 4\,317                      & 4\,039                     & 4\,067                    \\
			\cmidrule{1-8}
			\multirow{4}{*}{4}                            & Train                                        & 99.8 \scriptsize	$\pm 0.3$            & 92.0 \scriptsize	$\pm 0.1$  & 92.2 \scriptsize	$\pm 0.2$  & 99.1 \scriptsize	$ \pm 0.2$  & 99.8 \scriptsize	$<0.1$     & 96.9 \scriptsize	$\pm 1.0$ \\
			                                              & Test                                         & 62.7 \scriptsize	$\pm 2.5$            & 92.1 \scriptsize	$\pm 0.1$  & 91.5 \scriptsize	$\pm 0.2$  & 82.7 \scriptsize	$\pm 0.8$   & 83.2 \scriptsize	$\pm 0.4$  & 79.8 \scriptsize	$\pm 1.2$ \\
			                                              & Difference                                   & 37.1 \scriptsize	$\pm 2.5$            & -0.1 \scriptsize	$\pm 0.1$  & 1.0 \scriptsize	$\pm 0.1$   & 16.4 \scriptsize	$\pm 0.7$   & 16.6 \scriptsize	$\pm 0.4$  & 17.2 \scriptsize	$\pm 0.8$ \\
			                                              & \# Histograms                                & 595                                  & 27\,059                    & \textsc{OOM}               & 4\,317                      & 4\,039                     & 4\,067                    \\
			\cmidrule{1-8}
			\multirow{4}{*}{5}                            & Train                                        & 98.9 \scriptsize	$\pm 1.9$            & 92.1 \scriptsize	$\pm 0.2$  & 92.4 \scriptsize	$\pm 0.2$  & 99.9 \scriptsize	$ \pm 0.2$  & 99.8 \scriptsize	$\pm 0.0$  & 97.7 \scriptsize	$\pm 0.9$ \\
			                                              & Test                                         & 57.0 \scriptsize	$\pm 3.9$            & 92.3 \scriptsize	$\pm 0.2$  & 91.6 \scriptsize	$\pm 0.2$  & 83.0 \scriptsize	$\pm 0.8$   & 84.1 \scriptsize	$\pm 1.1$  & 79.6 \scriptsize	$\pm 0.5$ \\
			                                              & Difference                                   & 41.9 \scriptsize	$\pm 2.9$            & -0.2 \scriptsize	$\pm 0.2$  & 1.0 \scriptsize	$\pm 0.2$   & 16.9 \scriptsize	$\pm 0.7$   & 15.7 \scriptsize	$\pm 1.1$  & 18.1 \scriptsize	$\pm 0.5$ \\
			                                              & \# Histograms                                & 595                                  & \textsc{OOM}               & \textsc{OOM}               & 4\,317                      & 4\,039                     & 4\,067                    \\
			\cmidrule{1-8}
			\multirow{4}{*}{6}                            & Train                                        & 99.4 \scriptsize	$\pm 0.8$            & 92.0 \scriptsize	$\pm 0.2$  & 92.2 \scriptsize	$\pm 0.2$  & 99.1 \scriptsize	$ \pm 1.9$  & 99.6 \scriptsize	$\pm 0.6$  & 95.2 \scriptsize	$\pm 1.9$ \\
			                                              & Test                                         & 54.0 \scriptsize	$\pm 2.3$            & 92.2 \scriptsize	$\pm 0.2$  & 91.4 \scriptsize	$\pm 0.4$  & 83.5 \scriptsize	$\pm 1.0$   & 83.4 \scriptsize	$\pm 1.3$  & 79.2 \scriptsize	$\pm 1.3$ \\
			                                              & Difference                                   & 44.4 \scriptsize	$\pm 1.9$            & -0.2 \scriptsize	$\pm 0.1$  & 1.0 \scriptsize	$\pm 0.2$   & 15.6 \scriptsize	$\pm 1.2$   & 16.2 \scriptsize	$\pm 0.9$  & 16.0 \scriptsize	$\pm 2.1$ \\
			                                              & \# Histograms                                & 595                                  & \textsc{OOM}               & \textsc{OOM}               & 4\,317                      & 4\,039                     & 4\,067                    \\
			\bottomrule
		\end{tabular}
	}
\end{table}

\subsection{Results and discussion}
In the following, we answer questions \textbf{Q1} to \textbf{Q3}.

\paragraph{Q1} See~\cref{fig:exp1} in the appendix). Increasing the feature dimension $d$ increases the average difference between train and test accuracies across all datasets. For example, on the \textsc{Enzymes} dataset, the difference increases from around 5\% for $d = 4$ to more than 45\% for $d = 1024$. However, we also observe that the difference does not increase when reaching near-perfect training accuracies, i.e., going from $d = 256$ to $d = 1\,024$ does not increase the difference. Hence, the results show that the number of parameters plays a crucial role in GNNs' generalization ability, in accordance with~\cref{thm:bartlett}.

\paragraph{Q2} See~\cref{colors}. The results indicate that the number of \wlone{}-distinguishable graphs ($m_{n,d,L}$) influence GNNs' generalization properties. For example, on the \textsc{Mutagenicity} dataset, after two iterations, the number of unique histograms computed by \wlone{} stabilizes, and similarly, the generalization error stabilizes as well. Similar effects can be observed for the \textsc{Enzymes}, \textsc{NCI1}, and \textsc{NCI109} datasets. Hence, our results largely confirm~\cref{thm:colorbound_up,prop:matchingvc}.

\paragraph{Q3}
See~\cref{fig:exp1}.  Increasing the feature dimension boosts the model's capacity to fit random class labels, indicating that increased bitlength implies an increased VC dimension. For example, for an order of 70, a GNN using a feature dimension of 4 cannot reach an accuracy of over 75\%. In contrast, feature dimensions 64 and 256 can almost fit such data. Moreover, for larger graphs, up to order 90, a GNN with a feature dimension of 256 can almost perfectly fit random class labels, with a feature dimension of 64 only slightly worse, confirming~\cref{thm:bl_lowerr}.

\section{Conclusion}
We investigated GNNs' generalization capabilities through the lens of VC dimension theory in different settings. Specifically, when not assuming a bound on the graphs' order, we showed that the VC dimension tightly depends on the bitlength of the GNNs' weights. We further showed that the number of colors computed by the \wlone{}, besides the number of parameters and layers, influences the VC dimension.
When a bound on the graphs' order is known, we upper and lower bounded GNNs' VC dimension via the maximal number of graphs distinguishable by the \wlone. \emph{Thus, our theory provides the first link between expressivity results and generalization.} Further, our theory also applies to a large set of recently proposed GNN enhancements.

\section*{Acknowledgements}
Christopher Morris is partially funded by a DFG Emmy Noether grant (468502433) and RWTH Junior Principal Investigator Fellowship under Germany's Excellence Strategy. Martin Grohe is partially funded by the European Union (ERC, SymSim,
101054974). Views and opinions expressed are, however, those of the author(s) only and do not necessarily reflect those of the European Union or the European Research Council. Neither the European Union nor the granting authority can be held responsible for them.

\setcitestyle{numbers}
\bibliography{bibliography}

\appendix

\section{Extended related work}\label{app:related_work}
In the following, we discuss more related work.

\paragraph{Expressive power of \kwl}
The Weisfeiler--Leman algorithm constitutes one of the earliest and most natural approaches to isomorphism testing~\citep{Wei+1976,Wei+1968}, and the theory community has heavily investigated it over the last few decades~\citep{Gro2017}. Moreover, the fundamental nature of the \kwl{} is evident from various connections to other fields such as logic, optimization, counting complexity, and quantum computing. The power and limitations of the \kwl{} can be neatly characterized in terms of logic and descriptive complexity~\citep{Bab1979,Imm+1990}, Sherali-Adams relaxations of the natural integer linear optimization problem for the graph isomorphism problem~\citep{Ast+2013,GroheO15,Mal2014}, homomorphism counts~\citep{Del+2018}, and quantum isomorphism games~\citep{Ats+2019}. In their seminal paper,~\citet{Cai+1992} showed that, for each $k$, a pair of non-isomorphic graphs of size $\cO(k)$ exists not distinguished by the \kwl. \citet{Kie2020a} gives a thorough survey of more background and related results concerning the expressive power of the \kwl. For $k=1$, the power of the algorithm has been completely characterized~\citep{Arv+2015,Kie+2015}. Moreover, upper bounds on the running time~\citep{Ber+2017a} and the number of iterations for $k=1$~\citep{Kie+2020} and the non-oblivious $k=2$~\citep{Kie+2016,Lic+2019} have been shown. For $k$ in $\{1,2\}$,~\citet{Arv+2019} studied the abilities of the (non-oblivious) \kwl{} to detect and count fixed subgraphs, extending the work of~\citet{Fue+2017}. The former was refined in~\citep{Che+2020a}. \citet{Kie+2019} showed that the non-oblivious  $3$-$\mathsf{WL}$ completely captures the structure of planar graphs. The algorithm for logarithmic $k$ plays a prominent role in the recent result of \cite{Bab+2016} improving the best-known running time for the graph isomorphism problem. Recently,~\citet{Gro+2020a} introduced the framework of Deep Weisfeiler--Leman algorithms, which allow the design of a more powerful graph isomorphism test than Weisfeiler--Leman type algorithms. Finally, the emerging connections between the Weisfeiler--Leman paradigm and graph learning are described in two recent surveys~\citep{Gro+2020,Mor+2022}.

\section{Oblivious \texorpdfstring{\kwl}{k-WL}}\label{kwl}

Intuitively, to surpass the limitations of the \wlone, the \kwl colors ordered subgraphs instead of a single vertex.\footnote{There exists two definitions of the \kwl, the so-called oblivious \kwl{} and the folklore or non-oblivious \kwl{}; see~\citet{Gro+2021}. There is a subtle difference in how they aggregate neighborhood information. Within the graph learning community, it is customary to abbreviate the oblivious \kwl{} as \kwl, a convention we follow in this paper.} More precisely, given a graph $G$, the \kwl colors the tuples from $V(G)^k$ for $k \geq 2$ instead of the vertices. By defining a neighborhood between these tuples, we can define a coloring similar to the \wlone. Formally, let $G$ be a graph, and let
$k \geq 2$. In each iteration, $t \geq 0$, the algorithm, similarly to the \wlone, computes a
\new{coloring} $C^k_t \colon V(G)^k \to \Nb$. In the first iteration, $t=0$, the tuples $\vec{v}$ and $\vec{w}$ in $V(G)^k$ get the same
color if they have the same atomic type, i.e., $C^k_{0}(\vec{v}) \coloneqq \text{atp}(\vec{v})$. Here, we define the atomic type $\text{atp} \colon V(G)^k \to \Nb$, for $k > 0$, such that $\text{atp}(\vec{v}) = \text{atp}(\vec{w})$ for $\vec{v}$ and $\vec{w}$ in $V(G)^k$ if and only if the mapping $\varphi\colon V(G)^k \to V(G)^k$ where $v_i \mapsto w_i$ induces a partial isomorphism, i.e., we have $v_i = v_j \iff w_i = w_j$ and $(v_i,v_j) \in E(G) \iff (\varphi(v_i),\varphi(v_j)) \in E(G)$. Then, for each layer, $t > 0$, $C^k_{t}$ is defined by
\begin{equation*}\label{ci}
	C^k_{t}(\vec{v}) \coloneqq \REL \big(C^k_{t-1}(\vec{v}), M_t(\vec{v}) \big),
\end{equation*}
with $M_t(\vec{v})$ the multiset
\begin{equation*}\label{mi}
	M_t(\vec{v}) \coloneqq  \big( \{\!\! \{  C^{k}_{t-1}(\phi_1(\vec{v},w)) \mid w \in V(G) \} \!\!\}, \dots, \{\!\! \{  C^{k}_{t-1}(\phi_k(\vec{v},w)) \mid w \in V(G) \} \!\!\} \big),
\end{equation*}
and where
\begin{equation*}
	\phi_j(\vec{v},w)\coloneqq (v_1, \dots, v_{j-1}, w, v_{j+1}, \dots, v_k).
\end{equation*}
That is, $\phi_j(\vec{v},w)$ replaces the $j$-th component of the tuple $\vec{v}$ with the vertex $w$. Hence, two tuples are \new{adjacent} or \new{$j$-neighbors} if they are different in the $j$th component (or equal, in the case of self-loops). Hence, two tuples $\vec{v}$ and $\vec{w}$ with the same color in iteration $(t-1)$ get different colors in iteration $t$ if there exists a $j$ in $[k]$ such that the number of $j$-neighbors of $\vec{v}$ and $\vec{w}$, respectively, colored with a certain color is different.

We run the \kwl algorithm until convergence, i.e., until for $t$ in $\Nb$
\begin{equation*}
	C^k_{t}(\vec{v}) = C^k_{t}(\vec{w}) \iff C^k_{t+1}(\vec{v}) = C^k_{t+1}(\vec{w}),
\end{equation*}
for all $\vec{v}$ and $\vec{w}$ in $V(G)^k$, holds. For such $t$, we define
$C^k_{\infty}(\vec{v}) = C^k_t(\vec{v})$ for $\vec{v}$ in $V(G)^k$. At convergence, we call the partition of $V(G)^k$ induced by $C^k_t$ the \new{stable partition}. We set $C_\infty^k(v)\coloneqq C_\infty^k(v,\ldots,v)$ and refer to this as the color of the vertex $v$.

Similarly to the \wlone, to test whether two graphs $G$ and $H$ are non-isomorphic, we run the \kwl in ``parallel'' on both graphs. Then, if the two graphs have a different number of vertices colored $c$, for $c$ in $\Nb$, the \kwl{} \textit{distinguishes} the graphs as non-isomorphic. By increasing $k$, the algorithm gets more powerful in distinguishing non-isomorphic graphs, i.e., for each $k \geq 2$, there are non-isomorphic graphs distinguished by $(k+1)$\text{-}\textsf{WL} but not by \kwl~\citep{Cai+1992}. For a finite set of graphs $S \subset \cG$, we run the algorithm in ``parallel''  over all graphs in the set $S$.

\section{\texorpdfstring{$k$}{k}-order GNNs}\label{kgnn}

By generalizing~\cref{def:gnn} in~\cref{sec:gnn}, following~\cite{Mor+2019,Morris2020b,Mor+2022}, we can derive $k$-GNNs computing features for all $k$-tuples $V(G)^k$, for $k > 0$, defined over the set of vertices of an attributed graph $G=(V(G),E(G),a)$ with features from $\Rb^d$. Concretely, in each layer, $t > 0$, for each $k$-tuple $\vec{v}=(v_1,\ldots,v_k)$ in $V(G)^k$, we compute a feature
\begin{align*}
	\begin{split}
		\hb_{\vec{v}}^\tup{t} \coloneqq \UPD^\tup{t} \Big( \hb_{\vec{v}}^\tup{t-1} ,\AGG^\tup{t} \big(&\oms \hb_{\vec{v}}^\tup{t-1}(\phi_1(\vec{v},w)) \mid w \in V(G) \cms, \dots,\\ &\oms \hb_{\vec{v}}^\tup{t-1}(\phi_k(\vec{v},w)) \mid w \in V(G) \cms \big) \!\Big).
	\end{split}
\end{align*}
Initially, for $t = 0$, we set
\begin{align*}
	\hb_{\vec{v}}^\tup{0} \coloneqq \UPD([\text{atp}(\vec{v}), a(v_1), \dots, a(v_k)]) \in \Rb^d,
\end{align*}
i.e., the atomic type and the attributes of a given $k$-tuple determine the initial feature of a $k$-tuple's vertices. In the above, $\UPD$, $\UPD^\tup{t}$, and $\AGG^\tup{t}$ may be differentiable parameterized functions, e.g., neural networks. In the case of graph-level tasks, e.g., graph classification, one additionally uses
\begin{equation*}\label{k-readout}
	\hb_G \coloneqq \RO\bigl( \oms \hb_{\vec{v}}^{\tup{L}}\mid \vec{v}=(v,\dots,v), v\in V(G) \cms \bigr) \in \Rb^{d},
\end{equation*}
to compute a single vectorial graph representation based on the learned $k$-tuple features after iteration $L$.

\subsection{Transfering VC bounds from GNNs to \texorpdfstring{$k$}{k}-order GNNs and other more expressive architectures}\label{klift_app}
In the following, we briefly sketch how~\cref{thm:colorbound_up,prop:matchingvc,infty_simple,thm:bartlett} can be lifted to $k$-GNNs. First, observe that we can simulate the computation of a $k$-GNN via a GNN on a sufficiently defined auxiliary graph. That is, the auxiliary graph contains a vertex for each $k$-tuple, and an edge connects two $k$-tuples $j$ if they are $j$-neighbors for $j$ in $[k]$; see~\citet{Mor+2022} for details. Using a $1$-WL equivalent GNN taking edge labels into account, we can extend \cref{thm:colorbound_up,prop:matchingvc,infty_simple} to $k$-GNNs. Similar reasoning applies to~\cref{thm:bartlett}, i.e., we can apply the proof technique from~\cref{proof_bartlett_app} to this auxiliary graph.

\subsubsection{Architectures based on subgraph information} Further, we note that~\cref{thm:colorbound_up,prop:matchingvc} also easily extend to recent GNN enhancements, e.g., subgraph-based~\citep{botsas2020improving} or subgraph-enhanced GNNs~\citep{Bev+2021,Qia+2022}. Since suitably defined variations of the \wlone, incorporating subgraph information at initialization, upper bound the architectures' expressive power, we can easily apply the reasoning behind the proofs of~\cref{thm:colorbound_up,prop:matchingvc} to these cases. Hence, the architectures' VC dimensions are also tightly related to the number of graphs distinguishable by respective \wlone{} variants.

\section{Relationship between VC dimension and generalization error}\label{vc_error}
If we can bound the VC dimension of a hypothesis class $\cC$ of GNNs, we directly get insights into its generalization ability, i.e., the difference 
of the empirical error $R_S(h)$ and the true error $R_{\cD}(h)$ for $h \in \cC$ and a data generating distribution $\cD$.
\begin{theorem}
Let $\cC$ be a class of GNNs, with finite VC dimension $VC(\cC) = d$. Then for $\cC$, for all $\varepsilon > 0$ and $\delta \in (0,1)$, using
    \begin{equation*}
        m = \cO \left( \frac{1}{\varepsilon^2} \left( d \ln \left( \frac{d}{\varepsilon} \right) + \ln \left( \frac{1}{\delta} +1 \right) \right)  \right)
    \end{equation*}
samples, for all data generating distributions $\cD$, we have 
    \begin{equation*}
        \Pr_{S \simeq \cD^m} \left( \forall h \in \cC: \left| R_S(h) - R_{\cD}(h) \right| \leq \varepsilon \right) \geq 1 - \delta.
    \end{equation*}
\end{theorem}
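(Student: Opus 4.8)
The plan is to reduce the statement to the classical uniform convergence theorem of Vapnik--Chervonenkis theory. By the threshold-based definition of $\vcdim$ used above, each GNN $h \in \cC$ induces a Boolean classifier (via the sign of $h - 1/2$), so I treat $\cC$ as a family of $\{0,1\}$-valued functions whose VC dimension is $d$. The empirical and true errors $R_S(h)$ and $R_{\cD}(h)$ are then the $0$-$1$ risks of these classifiers, and the claim is precisely the two-sided uniform convergence bound for a concept class of VC dimension $d$. One may either cite this directly~\citep{Vap+95,Moh+2012} or reprove it via the standard three-step argument sketched below.

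The reproof proceeds as follows. First, \emph{symmetrization}: introducing an independent ghost sample $S'$ of size $m$, one bounds
\begin{equation*}
	\Pr_{S}\Bigl(\sup_{h \in \cC}|R_S(h)-R_{\cD}(h)| > \varepsilon\Bigr) \leq 2\,\Pr_{S,S'}\Bigl(\sup_{h \in \cC}|R_S(h)-R_{S'}(h)| > \varepsilon/2\Bigr),
\end{equation*}
valid once $m\varepsilon^2 \geq 2$. The right-hand event depends on the behavior of $\cC$ only through its dichotomies on the $2m$ points of $S \cup S'$, so the supremum over the infinite class $\cC$ collapses to a maximum over at most $\Pi_{\cC}(2m)$ distinct labelings, where $\Pi_{\cC}$ denotes the growth function. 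Second, the \emph{Sauer--Shelah lemma} bounds this growth function by $\Pi_{\cC}(2m) \leq (2em/d)^d$ for $m \geq d$. Third, for each \emph{fixed} dichotomy a Hoeffding bound (over the random partition of $S \cup S'$ into its two halves) yields concentration of order $e^{-m\varepsilon^2/8}$; a union bound over the $\Pi_{\cC}(2m)$ dichotomies then gives
\begin{equation*}
	\Pr_{S}\Bigl(\sup_{h\in\cC}|R_S(h)-R_{\cD}(h)| > \varepsilon\Bigr) \leq 4\,(2em/d)^d\,e^{-m\varepsilon^2/8}.
\end{equation*}

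Finally I set the right-hand side to at most $\delta$, take logarithms, and solve for $m$. This requires inverting a transcendental inequality of the form $m \gtrsim \varepsilon^{-2}\bigl(d\ln(m/d) + \ln(1/\delta)\bigr)$, which the standard lemma on inequalities $m \geq a\ln m + b$ resolves, yielding the claimed $m = \cO\bigl(\varepsilon^{-2}(d\ln(d/\varepsilon) + \ln(1/\delta + 1))\bigr)$. The main obstacle is the symmetrization step, where one must argue that replacing the true risk by a second empirical risk costs only a constant factor and a halving of $\varepsilon$; the remaining steps (Sauer--Shelah and the Hoeffding union bound) are routine, and the only delicacy afterwards is the careful bookkeeping of constants when inverting the logarithmic inequality to extract the clean $d\ln(d/\varepsilon)$ dependence.
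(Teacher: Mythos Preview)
Your proposal is correct; it is the standard symmetrization/Sauer--Shelah/Hoeffding argument and recovers the stated sample-complexity bound. The paper, however, does not prove this theorem at all: it simply records it as the classical Vapnik--Chervonenkis uniform-convergence result and cites \citet{Vap+95} and \citet{Moh+2012} for a proof. So you are not taking a different route so much as supplying a proof sketch where the paper opts to defer entirely to the literature. What your write-up buys is self-containment---a reader can see why the VC-dimension bounds established elsewhere in the paper translate into generalization guarantees without chasing references---at the cost of reproducing textbook material. Either choice is defensible; just be aware that the paper treats this theorem purely as background, not as something it proves.
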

This result was first proven by Vladimir Vapnik and Alexey Chervonenkis in 1960's; see, e.g.,~\citet{Moh+2012} for a proof.

\section{Simple GNNs}\label{sec:simpleGNNs}
We here provide more detail on the simple GNNs mentioned in~\cref{sec:gnn}. That is,
for given $d$ and $L$ in $\Nb$, we define the class $\GNN_{\mathsf{mlp}}(d,L)$ of simple GNNs as $L$-layer
GNNs for which, according to~\cref{def:gnn}, for each $t$  in $[L]$, the aggregation function $\AGG^\tup{t}$ is simply
summation and the update function
$\UPD^\tup{t}$ is a multilayer perceptron $\mathsf{mlp}^\tup{t}:\Rb^{2d}\to\Rb^d$ of width at most $d$. Similarly, the readout function in \cref{def:readout}
consists of a multilayer perceptron $\mathsf{mlp}:\Rb^d\to\Rb$ applied on the sum of all vertex features computed in layer $L$.\footnote{For simplicity we assume that all feature dimensions of the layers are fixed to $d$ in $\Nb$ and also assume that the readout layer returns a scalar.} More specifically,
GNNs in $\GNN_{\mathsf{mlp}}(d,L)$ compute on a graph $(G,\mL)$ in $\cG_{d}$, for each $v\in V(G)$,
\begin{equation}\label{def:sgnn}
	\hb_{v}^\tup{t} \coloneqq
	\mathsf{mlp}^\tup{t}\Bigl(\hb_{v}^\tup{t-1},\sum_{u\in N(v)}\hb_{u}^\tup{t-1}\Bigr) \in \Rb^{d},
\end{equation}
for $t$ in $[L]$ and $\hb_v^\tup{0}\coloneqq\mL_{v.}$, and
\begin{equation}\label{def:sreadout}
	\hb_G \coloneqq \mathsf{mlp}\Bigl(\sum_{v\in V(G)}\hb_{v}^{\tup{L}}\Bigr) \in \Rb.
\end{equation}

We also consider an even simpler class $\GNN_{\mathsf{slp}}(d,L)$ of $\GNN_{\mathsf{mlp}}(d,L)$ in which the multilayer perceptrons
are in fact single layer perceptrons. That is, \cref{def:sgnn} is replaced by
\begin{equation}\label{def:ssgnn}
	\hb_{v}^\tup{t} \coloneqq
	\sigma_t\Bigl(\hb_{v}^\tup{t-1}\mW_1^{(t)}+\sum_{u\in N(v)}\hb_{u}^\tup{t-1}\mW_2^{(t)}+\mb^{(t)}\Bigr) \in \Rb^{d},
\end{equation}
where $\mW_1^{(t)}$ in $\Rb^{d\times d}$ and $\mW_2^{(t)}$ in $\Rb^{d\times d}$ are weight matrices, and
$\mb^{(t)}$ in $\Rb^{1\times d}$ is a bias vector, and $\sigma_t \colon \Rb\to\Rb$ is an activation function, for $t$ in $[L]$.
Similarly, \cref{def:sreadout} is replaced by
\begin{equation}\label{def:ssreadout}
	\hb_G \coloneqq \sigma_{L+1}\Bigl(\sum_{v\in V(G)}\hb_{v}^{\tup{L}}\mw+b\Bigr) \in \Rb.
\end{equation}
with $\mw$ in $\Rb^{d\times 1}$  a weight vector and $b$ in $\Rb$  a  bias value of the final readout layer. Also,
$\sigma_{L+1} \colon \Rb\to\Rb$ is an activation function. We can thus represent elements in $\GNN_{\mathsf{slp}}(d,L)$
more succinctly by the following tuple of parameters,
\begin{equation*}
	\Theta=\Big(\mW_1^{(1)},\mW_2^{(1)}, \mb^{(1)},\ldots,\mW_1^{(L)},\mW_2^{(L)},\mb^{(L)},\mw,b \Big),
\end{equation*}
together with the tuple of activation functions $\pmb\sigma=(\sigma_1,\ldots,\sigma_L,\sigma_{L+1})$. We can equivalently view $\Theta$ as an element in $\Rb^{\nw}$.
Each $\Theta$ in $\Rb^{\nw}$ and $\pmb\sigma=(\sigma_1,\ldots,\sigma_{L+1})$ induces a permutation-invariant \textit{graph function}
\begin{equation*}
	\gnn_{\Theta,\pmb\sigma} \colon \cG_{d}\to\Rb \colon
	(G,\mL)\mapsto \gnn_{\Theta,\pmb\sigma}(G,\mL)\coloneqq\hb_G,
\end{equation*}
with $\hb_G$ as defined in \cref{def:ssreadout}.

\section{Missing proofs}\label{sec:appendix}

In the following, we outline missing proofs from the main paper.

\subsection{Proofs of \cref{thm:colorbound_up} and \cref{prop:matchingvc}}
We start with the general upper bound on the VC dimension in terms of the number of \wlone-indistinguishable graphs.
\begin{proposition}[\cref{thm:colorbound_up} in the main text]
	For all $n$, $d$, and $L$,  the maximal number of graphs of  order at most $n$ with $d$-dimensional boolean features that can be shattered by $L$-layer GNNs is bounded by the maximal number ($m_{n,d,L}$) of \wlone-distinguishable graphs. That is,
	\begin{equation*}
		\vcdim_{\cG_{d,n}^\bool}\!\!\bigl(\GNN(L)\bigr)\leq m_{n,d,L}.
	\end{equation*}
\end{proposition}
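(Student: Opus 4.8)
The plan is to reduce the shattering requirement to pairwise $\wlone$ distinguishability and then count equivalence classes. Let $m \coloneqq \vcdim_{\cG_{d,n}^\bool}(\GNN(L))$ and let $\cS = \{\mG_1,\dots,\mG_m\}\subseteq\cG_{d,n}^\bool$ be a set of size $m$ that is shattered by $\GNN(L)$. I would first argue that every two graphs in $\cS$ are distinguished by $L$ iterations of the $\wlone$. Fix indices $i\neq j$ and consider the target vector $\pmb\tau\in\{0,1\}^m$ with $\tau_i=1$ and $\tau_j=0$. Because $\cS$ is shattered, there is some $\gnn\in\GNN(L)$ with $\gnn(\mG_i)\ge 2/3$ and $\gnn(\mG_j)\le 1/3$; in particular $\gnn(\mG_i)\neq\gnn(\mG_j)$, so this GNN separates the two graphs by its graph-level output.

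Next I would invoke the fundamental correspondence between message-passing GNNs and the $\wlone$ established by \citet{Mor+2019} and \citet{Xu+2018b}: after $L$ layers the vertex coloring computed by any GNN in $\GNN(L)$ is coarser than (that is, refined by) the $\wlone$ coloring $C^1_L$. Consequently, two vertices sharing the same $\wlone$ color after $L$ iterations receive identical GNN features, and two graphs with identical $\wlone$ color histograms after $L$ iterations yield the same multiset of vertex features and hence, through the permutation-invariant readout, the same graph-level output. Taking the contrapositive, $\gnn(\mG_i)\neq\gnn(\mG_j)$ forces $\mG_i$ and $\mG_j$ to have distinct $\wlone$ color histograms after $L$ iterations, i.e.\ to be $\wlone$-distinguishable. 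Since $i,j$ were arbitrary, all graphs in $\cS$ are pairwise $\wlone$-distinguishable.

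Finally, I would count. Indistinguishability by $L$ iterations of the $\wlone$ is an equivalence relation on $\cG_{d,n}^\bool$ (two graphs are equivalent precisely when they share the same color histogram after $L$ iterations), and $m_{n,d,L}$ is, by definition, the maximal size of a pairwise $\wlone$-distinguishable family, equivalently the number of these equivalence classes among graphs of order at most $n$ with $d$-dimensional boolean features. A pairwise-distinguishable set contains at most one representative per class, so $|\cS|=m\le m_{n,d,L}$, which gives $\vcdim_{\cG_{d,n}^\bool}(\GNN(L))\le m_{n,d,L}$ irrespective of the choice of aggregation, update, and readout functions. The only step requiring genuine care is the direction of the GNN/$\wlone$ correspondence being applied: one needs the \emph{upper-bound} half of the Morris--Xu theorem, namely that $\wlone$-indistinguishability after $L$ iterations implies equality of the $L$-layer GNN's graph-level output (via the invariant readout), rather than the expressive lower bound; everything else is bookkeeping about equivalence classes.
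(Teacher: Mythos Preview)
Your proof is correct and follows essentially the same argument as the paper: both rely on the upper-bound direction of the Morris--Xu correspondence to conclude that a shattered set must be pairwise \wlone-distinguishable and hence of size at most $m_{n,d,L}$. The paper phrases this via the contrapositive (any set of $m_{n,d,L}+1$ graphs contains two \wlone-indistinguishable ones and thus cannot be shattered), but the content is identical and your more explicit unpacking of the equivalence-class counting and the readout step is entirely sound.
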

\begin{proof}
	Clearly, every set $\cS$ of $m_{n,\inid,L}+1$ graphs from $\cG_{\inid,n}^\bool$ contains at least two graphs $\mG$ and $\mG'$ not distinguishable by the \wlone{}. Since GNNs cannot distinguish \wlone-indistinguishable graphs~\cite{Mor+2019,Xu+2018b}, they cannot tell $\mG$ and $\mG'$ apart and hence cannot not shatter $\cS$. Hence, the VC dimension can be at most $m_{n,\inid,L}$.
\end{proof}

We next show a corresponding lower bound. In fact, the lower bound already holds for the class of
simple GNNs of arbitrary width, that is for GNNs in $\GNN_{\mathsf{mlp}}(L)\coloneqq\bigcup_{d\in\Nb}\GNN_{\mathsf{mlp}}(d,L)$.
\begin{proposition}[\cref{prop:matchingvc} in the main paper]
	For all $n$, $d$, and $L$,  all $m_{n,d,L}$ \wlone-distinguishable graphs of order at most $n$ with $d$-dimensional boolean features can be shattered by sufficiently
	wide $L$-layer GNNs. Hence,
	\begin{equation*}
		\vcdim_{\cG_{d,n}^\bool}\!\!\bigl(\GNN(L)\bigr)=m_{n,d,L}.
	\end{equation*}
\end{proposition}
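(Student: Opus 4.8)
The plan is to exhibit a concrete set of $m \coloneqq m_{n,d,L}$ graphs together with an architecture of sufficient width that shatters them; combined with \cref{thm:colorbound_up} this yields the claimed equality. Since $\GNN_{\mathsf{mlp}}(L)\subseteq\GNN(L)$, it suffices to establish the lower bound already for simple GNNs. First I would fix a \emph{maximal} set $\cS = \{\mG_1,\dots,\mG_m\}$ of pairwise \wlone{}-distinguishable graphs in $\cG_{d,n}^\bool$, where by definition of $m_{n,d,L}$ every pair is separated after at most $L$ iterations of the \wlone{}. The goal is then to show that for every $\pmb\tau \in \{0,1\}^m$ there is a single (fixed-architecture) simple GNN whose output meets the thresholds in \eqref{threshold}.

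The key step is to turn pairwise \wlone{}-distinguishability into distinct, comparable graph-level feature vectors. Here I would invoke the simulation result of \citet{Mor+2019,Xu+2018b}: since summation aggregation composed with an MLP update can be made injective on the bounded multisets arising on graphs of order at most $n$, a simple $L$-layer GNN reproduces $L$ rounds of \wlone{} refinement. I would run this simulation \emph{consistently across all of $\cS$ at once}, fixing a common color universe of size $D = \cO(nm)$ (at most $n$ colors per graph, $m$ graphs) and arranging the layer-$L$ update to output the one-hot encoding of each vertex's \wlone{} color within this shared universe; this is precisely where width $\cO(nm)$ enters. Summation pooling in the readout then produces, for each $\mG_i$, its \wlone{} color histogram $\pb_i \in \Rb^{D}$, and pairwise distinguishability guarantees that $\pb_1,\dots,\pb_m$ are pairwise \emph{distinct}.

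It remains to realize an arbitrary dichotomy $\pmb\tau$ through the readout MLP acting on the pooled vector. I would first compose with a generic linear functional so that the distinct vectors $\pb_i$ map to $m$ \emph{distinct scalars} $s_1,\dots,s_m \in \Rb$, and then use a one-dimensional piecewise-linear (e.g.\ $\relu$) network interpolating the target values, so that its output lies in $[2/3,1]$ at each $s_i$ with $\tau_i = 1$ and in $[0,1/3]$ at each $s_i$ with $\tau_i = 0$; a one-dimensional interpolant through $m$ prescribed points always exists. Since the refinement part of the network is fixed and only the readout weights depend on $\pmb\tau$, every $\pmb\tau$ is realized by some member of the class, so $\cS$ is shattered and $\vcdim_{\cG_{d,n}^\bool}(\GNN(L)) \geq m$.

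The main obstacle is the consistent simulation across the whole family $\cS$: one architecture must assign every vertex in every graph a color drawn from a single shared universe, so that the pooled histograms live in a common space and remain distinct; this is what forces the width to scale with $nm$ rather than $n$. Once this is in place, the readout step is routine, as it reduces to the standard fact that $m$ distinct points of $\Rb$ can be mapped to arbitrary target values by a piecewise-linear function, and distinctness of the points is exactly what pairwise \wlone{}-distinguishability provides.
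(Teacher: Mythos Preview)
Your proposal is correct and follows the same overall strategy as the paper: fix a maximal set $\cS$ of $m_{n,d,L}$ pairwise \wlone-distinguishable graphs, simulate $L$ rounds of \wlone{} \emph{consistently over all of $\cS$} using the Morris et al.\ construction (this is what forces width $\cO(nm_{n,d,L})$), sum-pool to obtain pairwise distinct histogram vectors, and then let the readout MLP realise an arbitrary dichotomy.

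The only substantive difference is in how the readout is built. The paper first injectively encodes each histogram as a natural number (via a base-$K$ expansion), then uses $\mathsf{lsig}$ comparisons to produce an explicit one-hot vector in $\{0,1\}^{m_{n,d,L}}$ indexing the histogram class, after which shattering is achieved by dotting against the indicator vector of the target set $\cT$. You instead project the distinct histogram vectors to distinct scalars via a generic linear functional and then interpolate the target $0/1$ values with a one-dimensional piecewise-linear $\relu$ network. Your route is shorter and avoids the explicit one-hot construction; the paper's route makes the dependence on $\cT$ sit in a single linear weight vector, which is conceptually tidy but costs an extra MLP layer and a more elaborate encoding. Both arguments require the same width $\cO(nm_{n,d,L})$ and both stay within $\GNN_{\mathsf{mlp}}(L)$.
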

\begin{proof}
	For all $i$ in $[m_{n,\inid,L}]$, choose $\mG_i$ in  $\cG_{\inid,n}^\bool$ such that
	$\cS=\{\mG_1,\ldots,\mG_{m_{n,\inid,L}}\}$ consists of the maximum number of graphs in $\cG_{\inid,n}^\bool$
	pairwise distinguishable by the \wlone{} after $L$ iterations.

	We next show that the class of  simple GNNs which are wide enough, that is, $\GNN_{\mathsf{mlp}}(d',L)$ for large enough $d'$, is sufficiently rich to shatter $\cS$. That is, we show that
	for each $\cT\subseteq\cS$ there is a $\gnn_{\cT}$ in $\GNN_{\mathsf{mlp}}(d',L)$
	such that for all $i$ in $[m_{n,d,L}]$:
	\begin{equation*}
		\gnn_{\cT}(\mG_i)=\begin{cases}
			1 & \text{if $\mG_i\in\cT$, and} \\
			0 & \text{otherwise}.
		\end{cases}
	\end{equation*}
	This shows that $\cS$ is shattered by $\GNN_{\mathsf{mlp}}(d',L)$
	and hence its VC dimension is at least $|\cS|=m_{n,d,L}$, as desired.

	\textbf{Overview of the construction} Intuitively, we will show that $\GNN_{\mathsf{mlp}}(d',L)$,
	with $d'$ large enough, is powerful enough to return a one-hot encoding  of the color histograms of graphs in $\cS$. That is, there is a simple GNN $\gnn$ in $\GNN_{\mathsf{mlp}}(d',L)$ which in the MLP in its readout layer embeds a graph $\mG_i$ in $\cS$ as a vector $\mathbf{h}_{\mG}$ in $\{0,1\}^{m_{n,d,L}}$ satisfying $(h_{\mG})_i=1$
	if and only if $\mG_i$ in $\cT$ and $i\in[m_{n,d,L}]$. Then, we extend the readout multilayer perceptron of $\gnn$  by one more layer such that on input $\mG$ the revised GNN evaluates to the scalar
	\begin{eqnarray*}
		g_{\mG}\coloneqq\mathsf{sign}(\mathbf{h}_{\mG}\cdot \mathbf{w}^{\textsc{t}} - 1 )\in \{ 0,1\},
	\end{eqnarray*}
	with $\mathbf{w}$ in $\Rb^{d'\times 1}$.
	We observe that given $\cT\subseteq\cS$ it  suffices to let the parameter vector
	$\mathbf{w}$ be the indicator vector for $\cT$. Indeed, this ensures that $g_\mG=1$ if and only if $\mG$ is in a color class included in $\cT$. We can explore all such subsets $\cT$ of $\cS$ by varying $\mathbf{w}$; hence, this GNN will shatter $\cS$.

	\textbf{Encoding \wlone{} colors via GNNs} We proceed with the construction of the required GNN. For simplicity of exposition, in the description below we will construct GNN layers of non-uniform width. One can easily obtain uniform width by padding each layer.
	First, by~\citet[Theorem 2]{Mor+2019}, there exists a GNN architecture with feature dimension (at most) $n$ and consisting of $L$ layers such that for each $\mG_i$ in $\cS$ it computes \wlone-equivalent vertex features $\fb_v$ in $\Rb^{1 \times d}$ for $v\in V(G_i)$.
	That is, for vertices $v$ and $w$ in $V(G_i)$ it holds that
	\begin{equation*}
		\fb_{v}=  \fb_{w} \iff C^1_{L}(v) = C^1_{L}(w).
	\end{equation*}
	We note here that we can construct a single GNN architecture for all graphs by applying~\citep[Theorem 2]{Mor+2019} over the disjoint union the graphs in $\cS$. This increases the width from $n$ to $nm_{n,d,L}$.
 
	\textbf{Encoding \wlone{} histograms via GNNs} Moreover, again by~\citep[Theorem 2]{Morris2020b} there exists $\mW$ in $\Rb^{nm_{n,d,L} \times nm_{n,d,L}}$ and $\vec{b}$ in $\Rb^{nm_{n,d,L}}$ such that
	\begin{equation*}
		\sigma\Big(\sum_{v \in V(G)} \fb_{v}\mW  + \vec{b} \Big) =  \sigma \Big(	\sum_{v \in V(H)}  \fb_{v}\mW  + \vec{b}  \Big) \iff h_{\mG} = h_{\mH},
	\end{equation*}
	for graphs $\mG$ and $\mH$ in $\cS$. We use $\mathsf{ReLU}$ as activation function
	$\sigma$ here, just as in~\cite{Mor+2019}. Other activation functions could be used as well~\citep{Gro+2021}. Hence, for each graph in $\cS$, we have a vector in $\Rb^{1 \times nm_{n,d,L}}$ uniquely encoding it. Since the number of vertices $n$ is fixed, there exists a number $M$ in $\Nbb$ such that $M\sigma(\sum_{v \in V(G)} \mW  \fb_{v})$ is in $\Nb^{1 \times nm_{n,d,L}}$ for all $\mG$ in $\cS$. Moreover, observe that there exists a matrix $\mW'$ in  $\Nb^{nm_{n,d,L} \times 2m_{n,d,L}}$ such that
	\begin{equation*}
		M \sigma\Big(\sum_{v \in V(G)}   \fb_{v} \mW + \vec{b} \Big) \mW' =  	M\sigma \Big(\sum_{v \in V(H)} \fb_{v} \mW + \vec{b}\Big) \mW' \iff h_{\mG} = h_{\mH},
	\end{equation*}
	for graphs $\mG$ and $\mH$ in $\cS$. For example, we can set
	\begin{equation*}
		\mW' = \begin{bmatrix}
			K^{nm_{n,L}-1} & \cdots & K^{nm_{n,L}-1} \\
			\vdots         & \cdots & \vdots         \\
			K^{0}          & \cdots & K^{0}
		\end{bmatrix}
		\in \Nbb^{nm_{n,d,L} \times 2m_{n,d,L}}
	\end{equation*}
	for sufficiently large $K > 1$. Hence, the above GNN architecture computes a vector $\mathbf{k}_{\mG}$ in $\Nbb^{2m_{n,d,L}}$ containing $2m_{n,d,L}$ occurrences  of a natural number uniquely encoding each color histogram for each graph $\mG$ in $\cS$.

	We next turn  $\mathbf{k}_{\mG}$ into our desired $\mathbf{h}_{\mG}$ as follows.
	We first define an intermediate vector $\hb_{\mG}'$ whose entries will be used to
	check which color histogram is returned.
	More specifically, we define
	\begin{equation*}
		\mathbf{h}_{\mG}'=\mathsf{lsig}(\mathbf{k}_{\mG}\cdot (\mathbf{w}'')^{\textsc{t}}+\mathbf{b}),
	\end{equation*}
	with $\mathbf{w}''=(1,-1,1,-1,\ldots,1,-1)\in\Rb^{2m_{n,L}}$ and $\mathbf{b}=(-c_1-1,c_1+1,-c_2-1,c_2+1,\ldots,-c_{m_{n,d,L}}-1,c_{m_{n,d,L}}+1)\in\Rb^{2m_{n,d,L}}$ with $c_i$ the number encoding the $i$th color histogram.  We note that for odd $i$,
	\begin{equation*}
		(h_{\mG}')_i\coloneqq \mathsf{lsig}(\mathsf{col}(G)-c_i-1)=\begin{cases}
			1 & \mathsf{col}(G)\geq c_i \\
			0 & \text{otherwise}.
		\end{cases}
	\end{equation*}
	and for even $i$,
	\begin{equation*}
		(h_{\mG}')_i\coloneqq \mathsf{lsig}(-\mathsf{col}(G)+c_i+1)=\begin{cases}
			1 & \mathsf{col}(G)\leq c_i \\
			0 & \text{otherwise}.
		\end{cases}
	\end{equation*}
	In other words, $((h_{\mG}')_i,(h_{\mG}')_{i+1})$ are both $1$ if and only if $\mathsf{col}(G)=c_i$. We thus obtain $\hb_{\mG}$ by
	combining $((h_{\mG}')_i,(h_{\mG}')_{i+1})$ using an ``AND'' encoding (e.g., $\mathsf{lsig}(x+y-1)$) applied to pairs of consecutive entries in $\mathbf{h}_\mG'$.
	That is,
	\begin{equation*}
		\hb_{\mG}\coloneqq\mathsf{lsig}\left(
		\hb_{\mG}'\cdot\begin{pmatrix}
			1      & 0      & \cdots & 0      \\
			1      & 0      & \cdots & 0      \\
			0      & 1      & \cdots & 0      \\
			0      & 1      & \cdots & 0      \\
			\vdots & \vdots & \ddots & \vdots \\
			0      & 0      & \cdots & 1      \\
			0      & 0      & \cdots & 1
		\end{pmatrix}- (1, 1,\ldots,1)
		\right)\in\Rb^{m_{d,n,L}}
	\end{equation*}
	We thus see that a $3$-layer MLP suffices for the readout layer of the simple GNN, finishing the proof. We remark that the maximal width is $2nm_{n,d,L}$, so we can take $d'=2nm_{n,d,L}$.
\end{proof}

\subsection{Proof of~\cref{thm:bl_lowerr}}

We now prove~\cref{thm:bl_lowerr}.

\begin{proposition}[\cref{thm:bl_lowerr} in the main text]
	There exists a family $\cF_b$ of simple  $2$-layer GNNs of width two and of bitlength $\cO(b)$ using a piece-wise linear activation such that its VC dimension is \emph{exactly} $b$.
\end{proposition}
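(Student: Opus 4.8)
The plan is to split the statement into an upper and a lower bound on $\vcdim(\cF_b)$, where $\cF_b$ is a carefully chosen family of simple $2$-layer GNNs of width two and bitlength $\cO(b)$ using piece-wise linear activations. For the \emph{upper bound}, I would invoke the standard cardinality argument from VC theory: since every weight in such a GNN is encoded with $\cO(b)$ bits, and the number of weights is a fixed constant (width two, two layers, so a constant number of entries in the matrices $\mW_1^{(t)},\mW_2^{(t)}$, the biases, and the readout vector $\mw,b$), the total number of distinct GNNs in $\cF_b$ is at most $2^{\cO(b)}$. A finite hypothesis class of size $N$ has VC dimension at most $\log_2 N$, so $\vcdim(\cF_b)\le \cO(b)$. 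With the right bookkeeping on the exact bit budget one gets the constant right so that the upper bound is exactly $b$.

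The harder direction is the \emph{lower bound}: I need to exhibit $b$ graphs in $\cG_{d}^\bool$ together with, for each subset $\cT$ of these $b$ graphs, a GNN in $\cF_b$ realizing the shattering pattern in the sense of \cref{threshold}. The natural strategy is a classical \textbf{bit-extraction} construction. I would design the $i$th graph so that a first GNN layer, with fixed weights, maps it to a scalar encoding a known real value $x_i$ (for instance a multiple of $2^{-i}$, or the $i$th power used to isolate the $i$th bit). The encoding can be forced via the graph's structure: as the remark after the statement indicates, the $b$ graphs have order $\cO(2^b)$ and $\cO(2^b)$ \wlone{} colors, so I would let graph $i$ carry a distinguishing feature (e.g., a component whose size or \wlone{} color count realizes $x_i$) that the summation aggregation reads off exactly. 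The family $\cF_b$ is then indexed by a weight parameter $w\in\{0,1,\dots,2^b-1\}$ whose binary expansion is the indicator of the target subset $\cT$; the readout, using the piece-wise linear activation to implement a thresholding/bit-selection gadget, outputs a value $\ge 2/3$ exactly when the $i$th bit of $w$ is $1$, i.e.\ when $\mG_i\in\cT$.

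The step I expect to be the main obstacle is making the \textbf{bit-extraction gadget fit inside width two and only two layers while using a piece-wise linear activation with just a few pieces}. Extracting the $i$th bit of a number from a single aggregated scalar typically wants either many pieces, high depth, or wide intermediate representations; here all three are tightly constrained. The key trick will be to precompute the hard part into the \emph{graphs} (so that each graph already hands the network the single real number $x_i$ it needs, shifting the combinatorial work into the $\cO(2^b)$-sized graph structure that \wlone{} sees), leaving the network only to threshold a linear combination $x_i\cdot w$ against a constant. I would verify that the resulting gadget uses a piece-wise linear activation with a constant number of pieces (the excerpt's later remark that these GNNs lie in $\GNN_{\mathsf{slp}}(2,2)$ with $p=4$ and $\delta=1$ is the target to hit), that all weights still have bitlength $\cO(b)$, and that the $2/3$ versus $1/3$ margin of \cref{threshold} is met with room to spare after appropriate scaling.
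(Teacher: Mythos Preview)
Your high-level plan matches the paper's: cardinality argument for the upper bound, bit extraction for the lower bound, with a single real parameter of bitlength $\cO(b)$ encoding the shattering pattern and the graphs encoding which bit to read. But the concrete gadget you propose---``threshold a linear combination $x_i\cdot w$ against a constant,'' where graph $i$ hands the network one scalar $x_i$---has a real gap. If $w$ has binary expansion $\tau_1\cdots\tau_b$ and $x_i$ is, say, $2^{-i}$, then $x_i w$ is dominated by the high-order bits of $w$; no single threshold with a constant-piece piecewise-linear activation isolates $\tau_i$ from this product. Extracting a bit from one affine quantity requires either a sawtooth activation (unboundedly many pieces) or depth growing with $b$, which is exactly the obstacle you flagged but did not resolve.

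The paper's mechanism is different in an essential way. The weight $\rho=\rho(\vec{x})=0.1x_11x_2\cdots 1x_b$ (binary) encodes the pattern. The $k$th graph $F_k$ is \emph{not} built to produce a single scalar; it is a forest with $2^{k-1}$ components, one for each offset $c$ in a set $\cC_k$ indexed by all possible prefixes $(y_1,\ldots,y_{k-1})\in\{0,1\}^{k-1}$. Each component is a depth-$2$ tree whose root has degree $4^k+1$ and whose distinguished child has degree $c+1$. With width two, layer one records $(\rho\cdot\deg,\deg)$ at every vertex and layer two computes $A\bigl(4^k\rho-c\bigr)$ at each root and $0$ elsewhere, where $A$ is the four-piece compactly supported tent ($A\equiv 0$ outside $[0,\tfrac{3}{2}]$, $A\equiv 1$ on $[\tfrac{1}{2},1]$). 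One checks $A\bigl(4^k\rho(\vec{x})-c_k(\vec{x})\bigr)=x_k$ while $A\bigl(4^k\rho(\vec{x})-c\bigr)=0$ for every other $c\in\cC_k$, so the sum-readout returns exactly $x_k$. The two ideas missing from your proposal are (i)~use a \emph{localized tent} activation rather than a threshold, so that wrong offsets contribute zero, and (ii)~let the graph carry \emph{exponentially many parallel components}, one per candidate higher-bit prefix, so that the readout sum performs the search over prefixes for free. Your intuition to push the combinatorics into the $\cO(2^b)$-sized graph is right, but the graph does not output a single $x_i$; it outputs a family of shifted copies $4^k\rho-c$, all but one of which land outside the support of $A$.
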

\begin{proof}
	We first show the lower bound. We fix some $n\ge 1$. We shall construct a family of GNNs whose
	weights have bitlength $O(n)$ and a family of $n$ graphs shattered by
	these GNNs. Thereto, for all $\vec x=(x_1,\ldots,x_n)$ in $\{0,1\}^{n}$, we let
	\begin{equation*}
		\rho(\vec x) \coloneqq \sum_{i= 1}^n(2^{-2i+1}+x_i2 ^{-2i}).
	\end{equation*}
	Written in binary, we have
	\begin{equation*}
		\rho(\vec x)=0.1x_11x_21x_3\ldots 1x_n.
	\end{equation*}
	Observe that
	\begin{equation}
		\label{eq:1}
		\frac{1}{2}\le \rho(\vec x)\le 1.
	\end{equation}
	For $1\le k\le n$, we let
	\begin{equation*}
		\rho_k(\vec x)\coloneqq \rho\big((x_{k+1},\ldots,x_{k+n})\big)=\sum_{i=
			1}^n(2^{-2i+1}+x_{k+i}2 ^{-2i}),
	\end{equation*}
	where $x_{k+i}\coloneqq0$ for $k+i>n$.
	Then it follows from \eqref{eq:1} that
	\begin{equation}
		\label{eq:2}
		\frac{1}{2}\le \rho_k(\vec x)\le 1.
	\end{equation}
	We claim that
	\begin{equation}
		\label{eq:5}
		\rho_k(\vec x)=2^{2k}\rho(\vec
		x)-\underbrace{\left(\sum_{i=1}^{k}2^{2(k-i)+1}-\sum_{i=1}^{k}2^{2(k-n-i)+1}\right)}_{\eqqcolon
		{a_k}}-\underbrace{\sum_{i=1}^{k-1}2^{2(k-i)}x_i}_{\coloneqq
		{b_k}(\vec x)}-x_k
	\end{equation}
	Indeed, we have
	\begin{align*}
		2^{2k}\rho(\vec x) & =\sum_{i=1}^n\big(2^{2(k-i)+1}+x_i2^{2k-2i}\big)                                                                                              \\
		                   & =\sum_{i=1}^{n+k}\big(2^{2(k-i)+1}+x_i2^{2(k-i)}\big)-\sum_{i=n+1}^{n+k}2^{2(k-i)+1}                                                          \\
		                   & =\sum_{i=1}^{k}2^{2(k-i)+1}-\sum_{i=n+1}^{n+k}2^{2(k-i)+1}+\sum_{i=1}^{k}x_i2^{2(k-i)}+\sum_{i=k+1}^{n+k}\big(2^{2(k-i)+1}+x_i2^{2(k-i)}\big) \\
		                   & =\sum_{i=1}^{k}2^{2(k-i)+1}-\sum_{i=1}^{k}2^{2(k-n-i)+1}+\sum_{i=1}^{k}x_i2^{2(k-i)}+\sum_{i=1}^{n}\big(2^{-2i+1}+x_{k+i}2^{-2i}\big)         \\
		                   & =\sum_{i=1}^{k}2^{2(k-i)+1}-\sum_{i=1}^{k}2^{2(k-n-i)+1}+\sum_{i=1}^{k-1}x_i2^{2(k-i)}+x_k+\rho_k(\vec
		x)                                                                                                                                                                 \\
		                   & =a_k+b_k(\vec x)+x_k+\rho_k(\vec
		x),
	\end{align*}
	which proves \eqref{eq:5}.
	Now let
	\begin{equation*}
		c_k(\vec x)\coloneqq b_k(\vec x)+a_k+1.
	\end{equation*}
	Then by \eqref{eq:2} and \eqref{eq:5}, we have
	\begin{equation}
		\label{eq:6}
		x_k-\frac{1}{2}\le4^k\rho(\vec x)-c_k(\vec x)\le x_k.
	\end{equation}
	For $\vec x=(x_1,\ldots,x_n)$ and $\vec y=(y_1,\ldots,y_n)$ in $\{0,1\}^{n}$, we write $\vec x\neq_k\vec y$ if $x_i\neq
		y_i$ for some $i<k$. Observe that $\vec x\neq_k \vec y$ implies
	$\big|{b_k}(\vec x)-{b_k}(\vec y)\big|\ge 4$ and thus
	\begin{equation}
		\label{eq:7}
		\big|{c_k}(\vec x)-{c_k}(\vec y)\big|\ge 4.
	\end{equation}
	Let $A: \Rb \to \Rb$ be the continuous piecewise-linear function defined by
	\begin{equation*}
		A(x)\coloneqq
		\begin{cases}
			0    & \text{if }x<0,                \\
			2x   & \text{if }0\le x<\frac{1}{2}, \\
			1    & \text{if }\frac{1}{2}\le x<1, \\
			3-2x & \text{if }1\le x<\frac{3}{2}  \\
			0    & \text{if }\frac{3}{2}\le x.
		\end{cases}
	\end{equation*}
	Since $x_k\in\{0,1\}$, by \eqref{eq:6} we have
	\begin{equation}
		\label{eq:9}
		x_k=A\big(4^k\rho(\vec x)-{c_k}(\vec x)\big).
	\end{equation}
	If follows from \eqref{eq:7} that for $\vec y$ with $\vec y\neq_k\vec
		x$ we have
	\begin{equation}
		\label{eq:10}
		A\big(4^k\rho(\vec x)-{c_k}(\vec y)\big)=0.
	\end{equation}
	Let
	\begin{equation*}
		\cC_k\coloneqq\Big\{{c_k}(\vec y)\Bigmid\vec
		y\in\{0,1\}^{n}\Big\}.
	\end{equation*}
	Then
	\begin{equation}
		\label{eq:8}
		x_k=\sum_{c\in \cC_k}A\big(4^k\rho(\vec x)-c\big).
	\end{equation}
	Note that the only dependence on $\vec x$ of the right-hand side of \eqref{eq:8}
	is in $\rho(\vec x)$, because $\cC_k$
	does not depend on $\vec x$.

	Observe that $|\cC_k|=2^{k-1}$, because $c_k(\vec y)$ only depends on
	$y_1,\ldots,y_{k-1}\in\{0,1\}$ and is distinct for distinct values of
	the $y_i$.
	We have
	\begin{equation*}
		{a_k}=\sum_{i=1}^{k}2^{2(k-i)+1}-\underbrace{\sum_{i=1}^{k}2^{2(k-n-i)+1}}_{\eqqcolon
		s\le
		1}=2\sum_{i=0}^{k-1}4^i-s=\frac{2}{3}\big(4^{k}-1\big)-s.
	\end{equation*}
	Thus
	\begin{equation*}
		\frac{2}{3}\big(4^{k}-1\big)-1\le a_k\le
		\frac{2}{3}\big(4^{k}-1\big).
	\end{equation*}
	Furthermore,
	\begin{equation*}
		0\le{b_k}(\vec x)\le
		\sum_{i=1}^{k-1}2^{2(k-i)}=\sum_{i=1}^{k-1}4^{k-i}=4\sum_{i=0}^{k-2}4^i=\frac{4}{3}\big(4^{k-1}-1\big).
	\end{equation*}
	Thus
	\begin{equation}
		\label{eq:11}
		\frac{2}{3}\big(4^{k}-1\big)\le c\le \frac{2}{3}\big(4^{k}-1\big)+
		\frac{4}{3}\big(4^{k-1}-1\big)+1=4^k-1.
	\end{equation}

	Now for each $\rho$  in $\Rb$ we construct a $2$-layer GNN
	$\mathfrak{G}_{\rho}$ as follows:
	\begin{itemize}
		\item Initially, all nodes $v$ carry the $1$-dimensional feature $\hb_{v}^\tup{0} \coloneqq 1$.
		\item The first layer computes the 2-dimensional feature
		      $\Mat{h_{v,1}^\tup{1}\\ h_{v,2}^\tup{1}}$ defined by
		      \begin{align*}
			      h_{v,1}^\tup{1} & \coloneqq\sum_{w\in N(v)}\rho\cdot \hb_{w}^\tup{0}-\rho, \\
			      h_{v,2}^\tup{1} & \coloneqq\sum_{w\in N(v)}\hb_{w}^\tup{0}-1.
		      \end{align*}
		\item The second layer computes the 1-dimensional feature
		      $\hb_{v}^\tup{2}$ defined by
		      \begin{align*}
			      \hb_{v}^\tup{2} & =A\left(h_{v,1}^\tup{1} -\sum_{w\in
				      N(v)} h_{w,2}^\tup{1} \right).
		      \end{align*}
		\item The readout functions just takes the sum of all the $\hb_{v}^\tup{2}$.
	\end{itemize}
	We define a graph $F_k$ as follows. The graph $F_k$ is a forest of height $2$.
	\begin{itemize}
		\item $F_k$ has a root node $r_c$ for every $c\in \cC_k$.
		\item Each $r_c$ has a child $s_c$ and $4^k$ additional children $t_{c,1},\ldots,t_{c,4^k}$.
		\item The $t_{c,i}$ are leaves.
		\item Each $s_c$ has children $u_{c,1},\ldots,u_{c,c}$.
		\item The $u_{c,i}$ are leaves.
	\end{itemize}

	Now we run the GNN $\mathfrak{G}_{\rho}$ on $F_k$ with $\rho=\rho(\vec x)$ for
	some $\vec x=(x_1,\ldots,x_n)$ in $\{0,1\}^{n}$.
	\begin{itemize}
		\item We have $\hb_{v}^\tup{0} =1$ for all $v$ in $V(F_k)$.
		\item We have
		      \begin{align*}
			      \hb_{t_{c,i},1}^\tup{1} & = \hb_{u_{c,i},1}^\tup{1},  \\
			      \hb_{s_c,1 }^\tup{1}    & =c\rho,                     \\
			      \hb_{r_c,1 }^\tup{1}    & =4^k\rho,                   \\
			      \intertext{and}
			      \hb_{t_{c,i},2}^\tup{1} & =\hb_{u_{c,i},1}^\tup{1}=0, \\
			      \hb_{s_c,2}^\tup{1}     & =c,                         \\
			      \hb_{r_c,2}^\tup{1}     & =4^k.
		      \end{align*}
		\item We have
		      \begin{align*}
			      \hb_{t_{c,i}}^\tup{2}            & =A\big(-4^k\big)=0      \\
			      \hb_{u_{c,i}}^\tup{2}            & =A\big(-c  \big)=0      \\
			      \hb_{s_{c}}^\tup{2}              & =A\big(c\rho-4^k\big)=0 \\
			      \hb_{r_{c}}^\tup{2}              & =A\big(4^k\rho-c\big)
			      =
			      \begin{cases}
				      x_k & \text{if }c=c_k(\vec x) \\
				      0   & \text{otherwise}
			      \end{cases} & \text{by \eqref{eq:9} and \eqref{eq:10}.}
		      \end{align*}
		      To see that the first three equalities hold, recall that $A(x)\neq 0$ only if
		      $0<x<\frac{3}{2}$. Thus $A(-4^k)=0$. Moreover, by \eqref{eq:11} we
		      have $2\le c$ and thus $A(c)=0$. Finally, $A\big(c\rho-4^k\big)=0$
		      because $\rho< 1$ and $c\le 4^k-1$ by \eqref{eq:11} and therefore
		      $c\rho-4^k<0$.
		\item As there is exactly one node $r_c$ with $c=c_k(\vec x)$, the
		      readout is $\sum_{v\in V(F_k)} \hb_{v}^\tup{2}   =x_k$.
	\end{itemize}
	Hence
	\begin{equation*}
		\mathfrak{G}_{\rho(\vec x)}(F_k)=x_i
	\end{equation*}
	Thus the GNNs $\mathfrak{G}_{\rho(\vec x)}$ for $\vec x\in\{0,1\}^n$ shatter the set $\{F_1,\ldots,F_n\}$. Since the bitlength is upper bounded by $\cO(b)$ and the number of parameters in the above construction is constant, the hypothesis set is finite, and the upper bound follows from standard learning-theoretic results; see, e.g.,~\cite{Moh+2012}.
\end{proof}
\subsection{Proof of~\cref{thm:bartlett}}\label{proof_bartlett_app}
In the following, we outline the proof of~\cref{thm:bartlett}.
First, we define feedforward neural networks and show how simple GNNs can be interpreted as such.

\paragraph{Feedforward neural networks}
A \new{feedforward neural network} (FNN) is specified by a tuple $N=(\cN,\beta,\gamma)$
where $\cN$ describes the underlying \textit{architecture} and where $\beta$ and $\gamma$ define the \textit{parameters}
or \textit{weights}.
More specifically,
$\cN=\bigl(V^\cN,E^\cN,i_1,\ldots,\allowbreak i_p,o_1,\ldots,o_q,\alpha^\cN\bigr)$ where
$(V^\cN,E^\cN)$ is a \emph{finite} DAG with $p$ \new{input nodes} $i_1,\ldots,i_p$ of
in-degree $0$, and $q$ \new{output nodes} $o_1,\ldots,o_q$ of out-degree $0$. No other nodes have in- or out-degree zero.
Moreover, $\alpha^\cN$ is a function assigning to each node $v\in V^\cN\setminus\{i_1,\ldots,i_p\}$ an activation function $\alpha(v):\Rb\to\Rb$. Furthermore, the function $\beta \colon V^\cN\setminus\{i_1,\ldots,i_p\}\to\Rb$ is a function assigning biases to nodes, and finally, the function $\gamma \colon E^\cN\to\Rb$ assigns weights to edges. For an FNN $N$, we define its \new{size} $s$ as the number of biases and weights, that is $s=|V^\cN|-p+|E^\cN|$.

Given an FNN $N=(\cN,\beta,\gamma)$,
we get a function $\fnn_{N} \colon \Rb^p\to\Rb^q$ defined as follows. For all $v$ in $V^\cN$,
we define a function $h_v^{N} \colon \Rb^p\to\Rb$ such for $\ma=(a_1,\ldots,a_p)$ in $\Rb^p$,
\begin{equation*}
	h_v^{N}(\ma)\coloneqq\begin{cases}
		a_j     & \text{if $v=i_j$ for $j\in[p]$,} \\
		\alpha^\cN(v)\left(
		\sum_{u \in N^+(v)} \gamma(u,v)h_u^{N}(\ma) + \beta(v)
		\right) & \text{otherwise}.
	\end{cases}
\end{equation*}
Finally, $\fnn_{N} \colon \Rb^p\to\Rb^q$ is defined as $\ma\mapsto\fnn_{N}(\ma)\coloneqq\bigl(h_{o_1}^{N}(\ma),\ldots,h_{o_q}^{N}(\ma)\bigr)$.

\paragraph{Simple GNNs as FNNs}
We next connect simple GNNs in $\GNN_{\mathsf{slp}}(d,L)$ to FNNs. As described in Section~\ref{sec:simpleGNNs} such GNNs are
specified by $L+1$ activation functions $\pmb\sigma\coloneqq(\sigma_1,\ldots,\sigma_{L+1})$ and a weight vector $\Theta$ in $\Rb^{\nw}$ describing weight matrices and bias vectors in all the layers.
We show that for any attributed graph of order at most $n$ $\mG=(G,\mLG)$ in $\cG_{n,d}$ with $G=(V(G),E(G))$ and $\mLG$ in $\Rb^{n\times d}$ there exists an architecture $\cN_G(\pmb\sigma)$ such that for any weight assignment $\Theta$ in $\Rb^{\nw}$ of the GNN,
there exists $\beta_\Theta \colon V^{\cN_G}\to\Rb$ and $\gamma_\Theta \colon E^{\cN_G}\to\Rb$, satisfying
\begin{equation}\label{eq:gnn2fnn}
	\gnn_{\Theta,\pmb\sigma}(G,\mLG)=\fnn_{N=(\cN_G(\pmb\sigma),\beta_\Theta,\gamma_\Theta)}(\mLG') ,
\end{equation}
where $\mLG'$ in $\Rb^{nd}$ is the (column-wise) concatenation of the rows of the matrix $\mLG$. Moreover, $\cN_G(\pmb\sigma)$ is of polynomial size in the number of vertices and edges in $G$, the feature dimension $d$, and the number of layers $L$. Furthermore, $\cN_G(\pmb\sigma)$ has only a single output node $o$.

The idea behind the construction of $\cN_G(\pmb\sigma)$ is to consider the tree unraveling or unrolling, see, e.g.,~\cite{Morris2020b}, of the computation of $\gnn_{\Theta,\pmb\sigma}(G,\mLG)$ but instead of a tree we represent the computation more concisely as a DAG. The DAG $\cN_G(\pmb\sigma)$ is defined as follows.

\begin{itemize}\item The node set $V^{\cN_G}$ consists of the following nodes.
	      \begin{itemize}
		      \item  We have input nodes
		            $i_{v,j}$ for $v$ in $V(G)$ and $j$ in $[d]$ which will take the vertex labels $\emLG_{vj}$ in $\Rb$
		            as value.
		      \item  For each $t$ in $[L]$, we include the following nodes:  $n_{v,j}^{(t)}$ for $v$ in $V(G)$, $j$ in $[d]$.
		      \item Finally, we have a single output node $o$.
	      \end{itemize}
	      We thus have $d(L+1)|V(G)|+1$ nodes in total.
	\item The edge set $E^{\cN_{G}}$ consists of the following edges.
	      \begin{itemize}
		      \item We have edges encoding the adjacency structure of the graph $G$ in every layer. More specifically, we have an edge
		            $e_{u,j,v,k,t} \coloneqq (n_{u,j}^{(t-1)},n_{v,k}^{(t)})$ whenever $u$ in $N(v) \cup \{v\}$ and where $u$ and $v$ are in $V(G)$, $j$ and $k$ in $[d]$, and $t$ in $\{2,\ldots,L\}$.
		      \item We also have edges from the input nodes $i_{u,j}$ to $n_{v,k}^{(1)}$ for all $u$ in $N_G(v)\cup\{v\}$ and where $u$ and $v$ are in $V(G)$ and $j$ and $k$ in $[d]$.
		      \item Finally, we have edges connecting the last layer nodes to the output, i.e., edges
		            $e_{v,j} \coloneqq (n_{v,j}^{(L)},o)$ for all $v$ in $V(G)$ and $j$ in $[d]$.
	      \end{itemize}
	      We thus have $d|V(G)|+d^2((L-1)(E(G)+V(G)) + (E(G)+V(G)))$ edges in total.
	\item
	      Finally, we define the activation functions.
	      \begin{itemize}
		      \item $\alpha^\cN(n_{v,j}^{(t)}):=\sigma_t$ for  all $v$ in $V(G)$, $j$ in $[d]$ and  $t$ in $[L]$, and
		            $\alpha^\cN(o)\coloneqq \sigma_{L+1}$.
	      \end{itemize}
\end{itemize}
This fixes the architecture $\cN_G(\pmb\sigma)$. We next verify~\cref{eq:gnn2fnn}. Let $\Theta$ in $\Rb^{\nw}$ and $\mG=(G,\mLG)$ in $\cG^{n,d}$.
Let $\cN_G(\pmb\sigma)$ be the architecture defined above for the graph $G$. We define $\beta_\Theta$ and $\gamma_\Theta$, as follows.
\begin{itemize}
	\item  $\beta_\Theta \coloneqq V^{\cN_{G}}\to\Rb$ is such that $\beta_\Theta(n_{v,j}^{(t)})=b_j^{(t)}$ for all $v$ in $V(G)$, $j$ in $[d]$ and  $t$ in $[L]$. We also set $\beta_\Theta(o)=b$.
	\item  $\gamma_\Theta \colon E^{\cN_{G}}\to\Rb$ is such that $\gamma_\Theta(e_{u,j,v,k,t}) \coloneqq W_{jk}^{(2,t)}$ if $u \neq v$ and $\gamma_\Theta(e_{u,j,v,k,t}) \coloneqq W_{jk}^{(1,t)}$ otherwise, and $\gamma^\Theta(e_{v,j})=w_j$, for $u$ and $v$ in $V(G)$, $j$ and $k$ in $[d]$, and $t$ in $[L]$.
\end{itemize}
Note that we share weights across edges that correspond to the same edge in the underlying graph.

Now, if we denote by $\fb_{v}^{(t)}$ the feature vector in $\Rb^d$ computed in the $t$th layer by the GNN $\gnn_{\Theta,\pmb\sigma}(G,\mLG)$, then it is readily verified, by induction on the layers, that for $N = (\cN_G, \alpha_{\pmb\sigma},\beta_\Theta, \gamma_\Theta)$:
\begin{equation*}
	h^{N}_{n_{v,j}^{(t)}}=\fb_{v,j}^{(t)} \text{ and thus } h_o^{N} \coloneqq \sigma_{L+1}\left(\sum_{v\in V(G)}\sum_{j\in[d]} w_j \fb_{vj}^{(L)} + b\right),
\end{equation*}
from which~\cref{eq:gnn2fnn} follows.

We next expand the construction by obtaining an FNN that simulates GNNs on multiple input graphs. More specifically,
consider a set $\mathfrak G$ consisting of  $m$ graphs $\mG_1=(G_1,\mLG_1),\ldots,\mG_m=(G_m,\mLG_m)$ in $\cG_{n,d}$ and
a GNN in $\GNN_{\mathsf{slp}}(d,L)$ using activation functions $\pmb\sigma=(\sigma_1,\ldots,\sigma_{L+1})$
in its layers. We first construct an FNN architecture $\cN_{G_i}(\pmb\sigma)$ for each graph separately, as explained above, such that
for every $\Theta$ in $\Rb^{P}$, there exists $\beta_\Theta$ and $\gamma_\Theta$ such that
\begin{equation*}
	\gnn_{\Theta,\pmb\sigma}(G_i,\mL_i)=\fnn_{N_{G_i}\coloneqq(\cN_{G_i}(\pmb\sigma),\beta_\Theta,\gamma_\Theta)}(\mLG_i'),
\end{equation*}
with $\mLG_i'$ is the concatenation of rows in $\mL_i$, as before.

Then, let $\cN_{\mathfrak G}(\pmb\sigma)$ be the FNN architecture obtained as the disjoint union of $\cN_{G_1}(\pmb\sigma),\ldots,\cN_{G_m}(\pmb\sigma)$.
If we denote by $o_i$ the output node of $\cN_{G_i}(\pmb\sigma)$ in $\cN_{\mathfrak G}(\pmb\sigma)$, then we have again that for every $\Theta$ in $\Rb^{P}$, there exists $\beta_\Theta$ and $
	\gamma_\Theta$ such that
\begin{equation*}
	\gnn_{\Theta,\pmb\sigma}(\mG_i)=h^{N_{\mathfrak G}\coloneqq(\cN_{\mathfrak G}(\pmb\sigma),\beta_\Theta,\gamma_\Theta)}_{o_i}(\mLG')
\end{equation*}
for all $i$ in $[m]$, where $\mLG'\coloneqq (\mLG_1',\ldots,\mLG_m')$.

We recall that, for $t$ in $[L]$, the nodes in $\cN_{\mathfrak G}(\pmb\sigma)$ are of the form $\nu_{v,j}^{(t),g}$ for $v$ in $V_{G_g}$, $j$  in $[d]$ and $g$ in $[m]$.
In layer $L+1$, we have the output nodes $o_1,\ldots,o_m$. If the order of the graphs in $\mathfrak G$ is at most $n$, then every layer, except the last one, has $ndm$ nodes. The last layer only has $m$ nodes.

\paragraph{Piece-wise polynomial activation functions}
A \new{piece-wise polynomial activation function} $\sigma_{p,\delta}:\Rb\to\Rb$  is specified by  a partition of $\Rb$ into $p$ intervals $I_j$ and corresponding polynomials $p_j(x)$ of degree
at most $\delta$, for $j$ in $[p]$. That is, $\sigma_{p,\delta}(x)=p_j(x)$ if $x$ in $I_j$.
Examples of $\sigma_{p,\delta}(x)$ are:
$\mathsf{sign}(x) \colon \Rb\to\Rb \colon x\mapsto \mathbf{1}_{x\geq 0}$ for which $p=2$ and $\delta=0$,
$\mathsf{relu}(x)  \colon \Rb\to\Rb \colon x\mapsto \max(0,x)$ for which $p=2$ and $\delta=1$, and $\mathsf{lsig}(x)  \colon \Rb\to\Rb \colon x \mapsto \max(0,\min(1,x))$ for which $p=3$ and
$\delta=1$. \emph{Piece-wise linear activation functions} are of the form $\sigma_{p,1}$, i.e., they are defined in terms of linear polynomials. The parameters of an activation function $\sigma_{p,\delta}$ consist of the coefficients of the polynomials involved and the boundary  points (numbers) of the intervals in the partition of $\Rb$.

\paragraph{Proof of~\cref{thm:bartlett}}
We next derive upper bounds on the VC dimension of GNNs by the approach used in \citet{Bar+2019},
where they used it for bounding the VC dimension of FNNs using piecewise polynomial activation functions. Their approach allows for recovering known bounds on the VC dimension of FNNs in a unified manner. As we will see, the bounds by~\citet{Bar+2019} for FNNs naturally translate to bounds for GNNs.

Assume $d$ and $L$ in $\Nb$. In this section, we will consider the subclass of GNNs in  $\GNN_{\mathsf{slp}}(d,L)$ that use
\textit{piece-wise polynomial activation functions} with $p > 0$ pieces and degree $\delta \geq 0$.
As explained in  Section~\ref{sec:simpleGNNs}, $\nw$ is the total number of
(learnable) parameters for our GNNs in $\GNN_{\mathsf{slp}}(d,L)$. As shorthand notation, we define
$P\coloneqq \nw$. We first bound $\vcdim_{\cG_{d,n}}\bigl(\GNN_{\mathsf{slp}}(d,L)\bigr)$ and then use this bound to obtain a bound for
$\vcdim_{\cG_{d,\leq u}}\bigl(\GNN_{\mathsf{slp}}(d,L)\bigr)$.

We take $\mathfrak G$ consisting of  $m$ graphs $\mG_1=(G_1,\mLG_1),\ldots,\mG_m=(G_m,\mLG_m)$ in $\cG_{n,d}$ and
consider the FNN architecture $\cN_{\mathfrak G}(\pmb\sigma)$ define above with output nodes $o_1,\ldots,o_m$.
Let $\mathsf{tresh}:\Rb\to\Rb$ such that $\mathsf{tresh}(x)=1$ if $x\geq 2/3$ and $\mathsf{tresh}(x)=0$ if $x\leq 1/3$.
We will bound
\begin{equation*}
	K'\coloneqq \big|\bigl\{ \bigl(\mathsf{tresh}(h^{N_{\mathfrak{G}}}_{o_1}(\mLG')),\ldots,
	\mathsf{tresh}(h^{N_{\mathfrak{G}}}_{o_m}(\mLG'))\bigr) \colon N_{\mathfrak{G}}\coloneqq(\cN_\mathfrak{G}(\pmb\sigma),\beta_\Theta,\gamma_\Theta), \Theta\in\Rb^P\bigr\}\big|,
\end{equation*}
as this number describes how many $0/1$ patterns can occur when $\Theta$ ranges over $\Rb^P$. These $0/1$ patterns correspond, by the construction of $N_{\mathfrak{G}}$ and the semantics of its output nodes, to how many of the input graphs in $\mathfrak G$ can be shattered.
To bound $K'$ using the approach in  \citet{Bar+2019} we need to slightly change the activation function $\sigma_{L+1}$ used in the FNN architecture.
The reason is that Bartlett et al. use the $\mathsf{sign}$ function to turn a real-valued function into a $0/1$-valued function. In contrast, we use
the $\mathsf{tresh}$ function described above.

Let $\pmb\sigma'\coloneqq (\sigma_1,\ldots,\sigma_L,\sigma_{L+1}-1/3)$. We will bound $K'$ by bounding
\begin{equation*}
	K\coloneqq\big|\bigl\{ \bigl(\mathsf{sign}(h^{N_{\mathfrak{G}}}_{o_1}(\mLG')),\ldots,
	\mathsf{sign}(h^{N_{\mathfrak{G}}}_{o_m}(\mLG'))\bigr) \colon N_{\mathfrak{G}}\coloneqq(\cN_\mathfrak{G}(\pmb\sigma'),\beta_\Theta,\gamma_\Theta), \Theta\in\Rb^P\bigr\}\big|.
\end{equation*}
Note that $K'\leq K$ because if $\mathsf{tresh}(\sigma_{L+1})(\mathbf{x})=1$ then $\sigma_{L+1}(\mathbf{x})\geq 2/3$ and hence $\sigma_{L+1}(\mathbf{x})-1/3>0$
and hence $\mathsf{sign}(\sigma_{L+1}(\mathbf{x})-1/3)=1$. Similarly, $\mathsf{tresh}(\sigma_{L+1})(\mathbf{x})=0$ then $\sigma_{L+1}(\mathbf{x})\leq 1/3$ and hence $\sigma_{L+1}(\mathbf{x})-1/3\leq 0$
and hence $\mathsf{sign}(\sigma_{L+1}(\mathbf{x})-1/3)=0$.

Then, if $\vcdim_{\cG_{d,n}}\bigl(\GNN_{\mathsf{slp}}(d,L)\bigr)=m$ then $K \geq 2^m$. We thus bound $K$ in terms of a function $\kappa$ in $m$ and then use $2^m\leq \kappa(m)$ to find an upper bound for $m$, i.e.,  an upper bound for $\vcdim_{\cG_{d,n}}\bigl(\GNN_{\mathsf{slp}}(d,L)\bigr)$.
To bound $K$ we can now use the approach of \citet{Bar+2019}.
In a nutshell, the entire parameter space $\Rb^P$ is partitioned into pieces $S_1,\ldots,S_\ell$
such that whenever $\Theta$ and $\Theta'$ belong to the same piece (i)~they incur the same sign pattern in $\{0,1\}^m$; and (ii)~each $h^{N_{\mathfrak{G}}}_{o_1}(\mLG')$ is a polynomial of degree at most $1+L\delta^{L}$. For $\delta=0$, these are polynomials in $d+1$ variables,
for $\delta>0$, the number of variables is $P$. Crucial in Bartlett's approach is the following lemma.

\begin{lemma}[Lemma 17 in \citet{Bar+2019}]\label{lem:lemma17bartlett}
	Let $p_1(\mathbf{x}),\ldots,p_r(\mathbf{x})$
	be polynomials of degree at most $\delta$ and in variables $\mathbf{x}$ satisfying $|\mathbf{x}|\leq r$, where $|\cdot|$ denotes the number of components of a vector.	Then
	\begin{equation*}
		\Big|\bigl\{\bigl(\mathsf{sign}(p_1(\Theta)),\ldots,\mathsf{sign}(p_r(\Theta))\bigr) \mid \Theta\in\Rb^{|\mathbf{x}|}\bigr\}\Big|\leq 2\left(\frac{2e r \delta}{|\mathbf{x}|}\right)^{|\mathbf{x}|}.
	\end{equation*}
\end{lemma}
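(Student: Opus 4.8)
The statement is a classical bound on the number of sign patterns (consistent sign conditions) realizable by a finite family of real polynomials, and the natural route is through the Milnor--Thom--Warren theory of semialgebraic sets rather than any argument specific to neural networks. The plan is to bound the number of achievable vectors $(\sign(p_1(\Theta)),\ldots,\sign(p_r(\Theta)))$ by the number of \emph{connected components} of the arrangement cut out by the zero sets $\{p_j = 0\}$ in $\Rb^n$, where $n \coloneqq |\mathbf{x}|$, and then to estimate that number by a Warren-type bound, finally collapsing everything with the elementary binomial estimate $\binom{r}{n}\le (er/n)^n$.

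First I would dispose of the convention issue. Here $\sign(x) = \mathbf{1}_{x \ge 0}$ is the closed Heaviside step, so each coordinate takes values in $\{0,1\}$ rather than $\{-1,0,+1\}$. Since $\sign(p_j(\Theta))$ is a function of the refined sign $\mathsf{sgn}(p_j(\Theta)) \in \{-1,0,+1\}$ (namely $\mathbf{1}_{p_j \ge 0} = 1$ iff $\mathsf{sgn}(p_j) \in \{0,+1\}$), every realizable $\{0,1\}$-pattern is the image of a realizable $\{-1,0,+1\}$-pattern. Hence it suffices to bound the number of nonempty \emph{sign conditions} of the family $p_1,\ldots,p_r$, i.e.\ the number of nonempty sets $\bigcap_{j} \{\,\mathsf{sgn}(p_j) = \varepsilon_j\,\}$ with $\varepsilon \in \{-1,0,+1\}^r$.

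Next I would invoke the Warren bound. The number of connected components of the complement $\Rb^n \setminus \bigcup_j \{p_j = 0\}$ of an arrangement of $r$ hypersurfaces of degree at most $\delta$ is of the form $\bigl(c\,\delta r/n\bigr)^n$ for an absolute constant $c$; including the boundary strata (the sign conditions in which some $p_j = 0$) changes the count by only a bounded factor, via the standard device of perturbing each $p_j$ to $p_j \pm \epsilon$ for an infinitesimal $\epsilon$, which doubles the number of polynomials but lets the closed conditions inject into open cells of the perturbed arrangement. Combining this with the degree factor $(2\delta)^n$ and the estimate $\binom{r}{n} \le (er/n)^n$, and using the hypothesis $r \ge n$ to control $\sum_{i\le n}\binom{r}{i}$ by a constant multiple of $\binom{r}{n}$, yields exactly the stated bound $2\,(2er\delta/n)^n$, since $(er/n)^n(2\delta)^n = (2er\delta/n)^n$.

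The main obstacle is not the asymptotic shape of the bound but pinning the constants to precisely $2$ and $2e$: one must track (i) the boundary/closed-convention contribution so that no realizable pattern is dropped while the leading factor stays $2$, and (ii) the exact form of Warren's estimate, for which I would cite Warren's original theorem (or the treatment in Basu--Pollack--Roy) rather than re-derive the Milnor--Thom inequalities. Two degenerate cases also deserve a word: $\delta = 0$, where all $p_j$ are constant and only a single pattern occurs, and the standing requirement $n \le r$, which is exactly what makes the binomial simplification, and hence the clean closed form, valid.
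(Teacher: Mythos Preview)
Your approach via the Warren/Milnor--Thom bound on realizable sign conditions is the standard and correct route to this inequality, and the degenerate cases and convention issues you flag are exactly the ones that need attention to get the constants right. However, the paper does not actually prove this lemma at all: it is quoted verbatim as Lemma~17 of \citet{Bar+2019} and used as a black box in the proof of Theorem~\ref{thm:bartlett}. So there is no paper proof to compare against; your sketch is in fact a reconstruction of the argument behind the cited result, and it matches how Bartlett et al.\ (and Warren before them) establish it.
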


Given property (ii)~of the pieces $S_1,\ldots,S_\ell$, we can apply the above lemma to the polynomials $h^{N_{\mathfrak{G}}}_{o_1}(\mLG'),\ldots,h^{N_{\mathfrak{G}}}_{o_m}(\mLG')$ and, provided that the number of variables is at most $m$, obtain a bound for $K$ by $\ell 2 \left(\frac{2em}{d+1}\right)^{d+1}$, when $\delta=0$, and
$K\leq \ell  2 \left(\frac{2em(1+L\delta^{L})}{P}\right)^P$ for $\delta>0$.

It then remains to bound the
number of parts $\ell$. Bartlett et al. show how to do this inductively (on the number of layers), again using Lemma~\ref{lem:lemma17bartlett}. More precisely, every node in the FNN architecture is associated with a number of polynomials. In layer $t$ we have $nmd$ nodes (number of computation nodes), and we associate with each node $p$ polynomials (number of breakpoints of activation function) of degree at most  $1+(t-1)\delta^{t-1}$ and have $(2d+1)d$ variables for $\delta=0$  and $(2d+1)dt$ variables for $\delta>0$. We then
get, for $\delta=0$,
\begin{equation}\label{eq:kbounddegzero}
	K\leq  2^L\left(\left(\frac{2edmnp}{(2d+1)d}\right)^{(2d+1)d}\right)^L 2 \left(\frac{2em}{d+1}\right)^{d+1},
\end{equation}
and for $\delta>0$,
\begin{equation}\label{eq:kbounddegnotzero}
	K\leq \prod_{t=1}^L 2\left(\frac{2edmnp(1+(t-1)\delta^{t-1})}{(2d+1)dt}\right)^{(2d+1)dt} 2 \left(\frac{2em(1+L\delta^{L})}{P}\right)^P.
\end{equation}
These are precisely the bounds given in \citet{Bar+2019} applied to our FNN.
It is important, however, to note that this upper bound is only valid when~\cref{lem:lemma17bartlett} can be applied, and hence, the number of variables must be smaller than the number of polynomials. For $t$ in $[L]$, we must have that the number of variables is less than the number of polynomials. We have $nmdp$ polynomials, and up to layer $t$ we have $(2d+1)dt$ parameters (variables). Hence, we must have
$(2d+1)dt\leq nmdp$ or $(2d+1)t\leq nmp$, and also $D\leq m$ or $P \leq nmdp$ for $\delta>0$. For $\delta=0$, we need $(2d+1)d\leq nmdp$ (or $(2d+1)\leq nmp$) and $d+1\leq m$. The following conditions are sufficient
\begin{equation*}
	P\leq m  \text{ for $\delta>0$, and } 2d+1\leq m \text{ for $\delta=0$.}  \tag{$\dagger$}
\end{equation*}

\paragraph{FNN size reduction based on $\wlone{}$}
We next bring in \wlone{} into consideration by collapsing computation nodes in $N_{\mathfrak G}$ in each layer based on their equivalence with regards to $\wlone{}$. In other words, if we assume that the graphs to be shattered have at most $u$ vertex colors, then we have at most $u$---rather than $n$---computation nodes per graph. This implies that the parameter $n$ in the above expression can be replaced by $u$.

As a consequence, following \citet{Bar+2019}, using the weighted AM–GM inequality on the right-hand side of the inequalities~(\ref{eq:kbounddegzero}) and~(\ref{eq:kbounddegnotzero}), we obtain a bound for the VC dimension by finding maximal
$m$ satisfying,
for $\delta=0$,
\begin{equation*}
	2^m\leq 2^{L+1} \left(\frac{2ep (udL+1)m}{P}\right)^{P}
\end{equation*}
and for $\delta>0$,
\begin{equation*}
	2^m\leq 2^{L+1} \left(\frac{2ep \bigl(ud\sum_{t=1}^L (1+(t-1)\delta^{t-1})+(1+L\delta^L)\bigr)m}{\frac{L(L+1)}{2}(2d+1)d+P}\right)^{\frac{L(L+1)}{2}(2d+1)d+P}.
\end{equation*}
Such $m$ are found in \cite{Bar+2019}, resulting in the following bounds.
\begin{proposition}[\cite{Bar+2019} modified for our FNN $N_{\mathfrak G}$]
	\begin{equation*}
		\vcdim_{\cG_{d,\leq u}}\bigl(\GNN_{\mathsf{slp}}(d,L)\bigr)\leq
		\begin{cases}
			\mathcal O(Ld^2\log(p(udL+1)))                      & \text{if $\delta=0$}  \\
			\mathcal O(L^2d^2\log(p(udL+1)))                    & \text{if $\delta=1$}  \\
			\mathcal O(L^2d^2\log(p(udL+1))+L^3d^2\log(\delta)) & \text{if $\delta>1$}. \\
		\end{cases}
	\end{equation*}
\end{proposition}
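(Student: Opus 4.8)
The plan is to pick up precisely where the preceding construction ends: we have shown that if $\vcdim_{\cG_{d,\leq u}}(\GNN_{\mathsf{slp}}(d,L)) = m$, then $2^m \leq K$, and that $K$ is bounded by the layerwise products in~\eqref{eq:kbounddegzero} and~\eqref{eq:kbounddegnotzero}, \emph{with $n$ replaced by $u$} after collapsing $\wlone$-equivalent computation nodes in each layer of $N_{\mathfrak G}$. All that remains is to solve the resulting inequalities for $m$ and substitute the parameter counts.

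First, following~\citet{Bar+2019}, I would collapse the products over the $L$ layers into a single power by the weighted arithmetic–geometric mean inequality. Writing $P = \nw = \Theta(d^2L)$ for the number of parameters and $R \coloneqq \tfrac{L(L+1)}{2}(2d+1)d + P = \Theta(d^2L^2)$ for the effective number of variables when $\delta > 0$, this produces
\[
2^m \leq 2^{L+1}\left(\frac{2ep(udL+1)\,m}{P}\right)^{P} \quad (\delta = 0),
\]
and
\[
2^m \leq 2^{L+1}\left(\frac{2ep\bigl(ud\textstyle\sum_{t=1}^L(1+(t-1)\delta^{t-1}) + (1+L\delta^L)\bigr)\,m}{R}\right)^{R} \quad (\delta > 0).
\]

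Second, I would take base-$2$ logarithms and invoke the standard fact (the counterpart of Lemma~16 in~\citet{Bar+2019}) that any $m$ satisfying $m \leq A + B\log_2(Cm)$ obeys $m = \cO(A + B\log_2(BC))$. For $\delta = 0$, taking $A = L+1$, $B = P$, and $C = 2ep(udL+1)/P$ gives $m = \cO(L + P\log(p(udL+1))) = \cO(Ld^2\log(p(udL+1)))$, using $P = \Theta(d^2L)$. For $\delta > 0$ I would take $B = R$ and evaluate the layer sum case by case: when $\delta = 1$ the bracket is $\Theta(udL^2)$, yielding $m = \cO(L^2d^2\log(p(udL+1)))$; when $\delta > 1$ the sum is dominated by the terms of order $L\delta^L$, so $\log_2$ of the bracket contributes an additive $L\log\delta$, giving $m = \cO(L^2d^2\log(p(udL+1)) + L^3d^2\log\delta)$.

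Finally, I must confirm that Lemma~\ref{lem:lemma17bartlett} was legitimately applied throughout, i.e.\ that condition~$(\dagger)$ holds: if $m$ falls below $2d+1$ (for $\delta=0$) or below $P$ (for $\delta>0$), the asserted bound already holds trivially, and otherwise $(\dagger)$ is in force and the chain above is valid. The main obstacle I anticipate lies not in these final estimates but in the bookkeeping for $\delta > 0$ — correctly accounting, per layer, for the $p$ activation breakpoints, the polynomial degree $1+(t-1)\delta^{t-1}$, and the $(2d+1)dt$ accumulated variables — and in evaluating $\sum_{t=1}^L(1+(t-1)\delta^{t-1})$ sharply enough in the two regimes $\delta=1$ and $\delta>1$ so that the three cases separate as stated; the AM–GM collapse and the solution of $m \leq A + B\log_2(Cm)$ are routine.
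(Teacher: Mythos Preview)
Your proposal is correct and follows precisely the approach taken in the paper: the paper derives the same two AM--GM--collapsed inequalities for $\delta=0$ and $\delta>0$ (with the same $P$ and $R=\tfrac{L(L+1)}{2}(2d+1)d+P$) and then simply says ``Such $m$ are found in \citet{Bar+2019}, resulting in the following bounds,'' whereas you explicitly carry out that final step via the $m\leq A+B\log_2(Cm)$ lemma and the case analysis on $\delta$. Your verification of condition~$(\dagger)$ is also exactly what the paper notes at the end.
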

We can simplify this to $\mathcal O(P\log(puP))$, $\mathcal O(LP\log(puP))$, and $\mathcal O(LP\log(puP)+\allowbreak L^2P\allowbreak \log(\delta))$, respectively. We note that since these are larger than $P$, the condition ($\dagger$) is satisfied.

\section{Additional experimental data and results}
Here, we report on additional experimental details, results, and dataset generation.

\subsection{Simple GNN layer used for \textbf{Q1} and \textbf{Q2}}\label{simple_app}
The simple GNN layer used in \textbf{Q1} and \textbf{Q2} updates the feature of vertex $v$ at layer $t$ via
\begin{equation}\label{gnnsimple}
	\fb_{v}^{(t)}\coloneqq \sigma\Big(\fb_{v}^{(t-1)}\mW_1^{(t)} +
	\sum_{u\in N(v)}\fb_{u}^{(t-1)}\mW_2^{(t)} \Big)\in\Rb^d,
\end{equation}
where $\mW^{(t)}_1$ and $\mW_2^{(t)}  \in \Rb^{d \times d}$ are parameter matrices. In the experiments, we used $\relu$ activation functions.

\subsection{GNN architecture used for \textbf{Q3}}\label{syn_gnn_app}
For the experiments on \textbf{Q3} we extend the simple GNN layer from \cref{gnnsimple} used in \textbf{Q1} and \textbf{Q2}. We update the feature of vertex $v$ at layer $t$ via
\begin{equation}\label{gnnsimpleplus}
	\fb_{v}^{(t)}\coloneqq \sigma\Big(\text{BN}\Big(\fb_{v}^{(t-1)}\mW^{(t)} + \sum_{u\in N(v)}\mathsf{mlp}^{(t)}(\fb_{u}^{(t-1)}) \Big)\Big) \in\Rb^d,
\end{equation}
where BN is a batch normalization module \cite{ioffe2015batch} and $\mathsf{mlp}^{(t)}$ is a two-layer perceptron with architecture,
\begin{equation*}
	\text{Linear} \rightarrow \text{BN} \rightarrow \relu \rightarrow \text{Linear}.
\end{equation*}
We, therefore, use a normalized 2-layer MLP to generate messages in each layer.
We found this change necessary to ensure smooth convergence on the challenging synthetic task posed by \textbf{Q3}, where the GNN has to memorize an arbitrary binary graph labeling. Moreover, in the experiments, we used $\relu$ activation functions.

\subsection{Synthetic dataset generation}\label{syn_app}
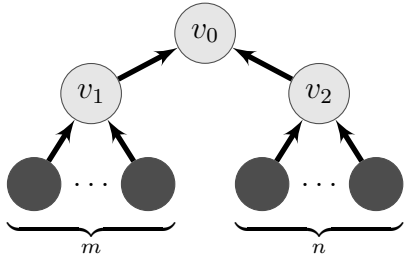
\begin{figure}[t]
	\centering
	\begin{tikzpicture}[
nodes={
minimum width=20pt,
font=\large}
]
    \node at (0,0) [circle, fill=black!10, draw=black!70] (v0) {$v_0$};
    \node at (-1.5,-0.8) [circle, fill=black!10, draw=black!70] (v1) {$v_1$};
    \node at (1.5,-0.8) [circle, fill=black!10, draw=black!70] (v2) {$v_2$};
    \draw[line width=2pt, -latex'] (v1) -- (v0);
    \draw[line width=2pt, -latex'] (v2) -- (v0);

    \node at (-2.25,-2) [circle, fill=black!70, draw=black!70] (v3) {};
    \node at (-0.75,-2) [circle, fill=black!70, draw=black!70] (v4) {};
    \node at (-1.5,-2) [] (d1) {$\mathbf{\dots}$};
    \node at (-1.5,-2.7) [] (b2) {$\underbrace{\hspace{2.2cm}}_{m}$};
    \draw[line width=2pt, -latex'] (v3) -- (v1);
    \draw[line width=2pt, -latex'] (v4) -- (v1);

    \node at (2.25,-2) [circle, fill=black!70, draw=black!70] (v5) {};
    \node at (0.75,-2) [circle, fill=black!70, draw=black!70] (v6) {};
    \node at (1.5,-2) [] (d2) {$\mathbf{\dots}$};
    \node at (1.5,-2.7) [] (b2) {$\underbrace{\hspace{2.2cm}}_{n}$};
    \draw[line width=2pt, -latex'] (v5) -- (v2);
    \draw[line width=2pt, -latex'] (v6) -- (v2);
\end{tikzpicture}
	\caption{A visualization of a $T_{m,n}$.}
	\label{Fig:Tree}
\end{figure}
To address \textbf{Q3}, we aim to empirically estimate how well GNNs of different sizes can fit arbitrary binary labelings of graphs.
We construct a synthetic dataset that focuses on a simple class of trees.
Formally, for two natural numbers $m$ and $n$ in $\mathbb{N}_{\geq0}$, we define the graph $T_{m,n}=(V_{m,n},E_{m,n})$ as a directed tree with vertex set $V=\{v_0,\dots,v_{m+n+3}\}$.
The root $v_0$ has two children $v_1$ and $v_2$. The remaining $m+n$ vertices form the leaves such that vertex $v_1$ has $m$ children and $v_2$ has $n$ children.
\Cref{Fig:Tree} provides a visual example.

For a chosen $k$ in $\mathbb{N}, k\geq4$, we define:
\begin{equation*}
	\mathcal{T}_k = \Big\{T_{m,n} \:\mid\: 0 \leq m \leq \Big\lfloor\frac{(k-3)}{2}\Big\rfloor, n = k - 3 -m \Big\}.
\end{equation*}
Therefore, $\mathcal{T}_k$ contains all distinct graphs $T_{m,n}$ with $|V_{m,n}(T_{m,n})|=k$.
In particular, we observe $|\mathcal{T}_k| = \lfloor\frac{(k-3)}{2}\rfloor$.

For $k \in \{10,20,\dots,90\}$, we aim to test how well a GNN with a given feature dimension $d$ in $\{4, 16, 64, 256\}$ can learn binary labelings $y \colon \mathcal{T}_k \rightarrow \{0, 1\}$ of $\mathcal{T}_k$.
The labeling $y$ is obtained by sampling the label $y(T)$ uniformly at random for all $T$ in $\mathcal{T}_k$.
We then train a GNN model with stochastic gradient descent to minimize the binary cross entropy on the dataset $(\mathcal{T}_k, y)$.
For each combination of $k$ and $d$, we repeat the experiment $50$ times.
We resample a new labeling $y$ and a new random initialization of the GNN model in each repetition.

\subsection{Simulating bitlength via higher feature dimension}\label{bit_dim}
Here, we outline how to simulate a higher bitlength via a higher feature dimension. Assume the simple GNN layer of~\cref{gnnsimple}. Clearly, we can express the matrices $\mW_1^{(t)}$ and $\mW_2^{(t)}$ as the sum of $k$ matrix with smaller bitlength, e.g.,
\begin{equation*}
	\mW_2^{(t)} = \mW_2^{(1,t)} + \cdots +\mW_2^{(k,t)}.
\end{equation*}
Hence, we can re-write the aggregation in~\cref{gnnsimple} as
\begin{equation*}
	\sum_{u\in N(v)}\fb_{u}^{(t-1)} \big[\mW_2^{(1,t)}, \cdots, \mW_2^{(k,t)} \big] \cdot \vec{M} \in\Rb^d,
\end{equation*}
where $[\cdots]$ denotes column-wise matrix concatenation and $\vec{M}$ in $\{ 0, 1\}^{kd \times d}$ is a matrix such that
\begin{equation*}
	\sigma\Big(\fb_{v}^{(t-1)}\mW_1^{(t)} + \sum_{u\in N(v)}\fb_{u}^{(t-1)} \big[\mW_2^{(1,t)}, \cdots, \mW_2^{(k,t)} \big] \cdot \vec{M}\Big)  = \sigma\Big(\fb_{v}^{(t-1)}\mW_1^{(t)} +
	\sum_{u\in N(v)}\fb_{u}^{(t-1)}\mW_2^{(t)} \Big),
\end{equation*}
i.e., the matrix $\vec{M}$ sums together columns of the aggregated features such that they have feature dimension $d$.

\begin{table}[t!]
	\begin{center}
		\caption{Dataset statistics and properties.}
		\resizebox{1.0\textwidth}{!}{ 	\renewcommand{\arraystretch}{1.05}
			\begin{tabular}{@{}lcccccc@{}}\toprule
				\multirow{3}{*}{\vspace*{4pt}\textbf{Dataset}} & \multicolumn{6}{c}{\textbf{Properties}}                                                                                                                         \\
				\cmidrule{2-7}
				                                               & Number of  graphs                       & Number of classes/targets & $\varnothing$ Number of nodes & $\varnothing$ Number of edges & Node labels & Edge labels \\ \midrule
				$\textsc{Enzymes}$                             & 600                                     & 6                         & 32.6                          & 62.1                          & \cmark      & \xmark      \\
				$\textsc{MCF-7}$                               & 27\,770                                 & 2                         & 26.4                          & 28.5                          & \cmark      & \cmark      \\
				$\textsc{MCF-7H}$                              & 27\,770                                 & 2                         & 47.3                          & 49.4                          & \cmark      & \cmark      \\
				$\textsc{Mutagenicity}$                        & 4\,337                                  & 2                         & 30.3                          & 30.8                          & \cmark      & \cmark      \\
				$\textsc{NCI1}$                                & 4\,110                                  & 2                         & 29.9                          & 32.3                          & \cmark      & \xmark      \\
				$\textsc{NCI109}$                              & 4\,127                                  & 2                         & 29.7                          & 32.1                          & \cmark      & \xmark      \\
				\bottomrule
			\end{tabular}}
		\label{statistics}
	\end{center}
\end{table}

\end{document}